\theoremstyle{plain}
\theoremstyle{plain}
\newtheorem{theorem}{Theorem}[section]
\newtheorem{lemma}[theorem]{Lemma}
\theoremstyle{definition}
\newtheorem{definition}[theorem]{Definition}
\newtheorem{assumption}[theorem]{Assumption}
\theoremstyle{remark}
\newcommand{\norm}[1]{\left \lVert #1 \right\rVert }
\newcommand{\bignorm}[1]{\left\lVert#1\right\rVert}
\newcommand{\abs}[1]{\left | #1 \right | }
\newcommand{\E}{\mathbb{E}}
\newcommand{\kp}{\mathsf P}
\newcommand{\cp}{\mathcal{P}}
\newcommand{\e}{\mathbf e}
\newcommand{\mcs}{\mathcal{S}}
\newcommand{\mca}{\mathcal{A}}
\newcommand{\mE}{\mathbb{E}}
\DeclareMathOperator*{\cO}{\mathcal{O}}
\title{Finite-Sample Analysis of Policy Evaluation for \\
Robust Average Reward Reinforcement Learning}
\author{%
  Yang Xu  \\
  Purdue University\\
  West Lafayette, IN 47907, USA \\
  \texttt{xu1720@purdue.edu} \\
  \And
  Washim Uddin Mondal \\
  Indian Institute of Technology Kanpur \\
  Kanpur, UP, India 208016 \\
  \texttt{wmondal@iitk.ac.in} \\
  \AND
  Vaneet Aggarwal \\
  Purdue University\\
  West Lafayette, IN 47907, USA \\
  \texttt{vaneet@purdue.edu} \\
}
\begin{document}

\maketitle

\begin{abstract}
We present the first finite-sample analysis of policy evaluation in robust average-reward Markov Decision Processes (MDPs). Prior work in this setting have established only asymptotic convergence guarantees, leaving open the question of sample complexity. In this work, we address this gap by showing that the robust Bellman operator is a contraction under a carefully constructed semi-norm, and developing a stochastic approximation framework with controlled bias. Our approach builds upon Multi-Level Monte Carlo (MLMC) techniques to estimate the robust Bellman operator efficiently. To overcome the infinite expected sample complexity inherent in standard MLMC, we introduce a truncation mechanism based on a geometric distribution, ensuring a finite expected sample complexity while maintaining a small bias that decays exponentially with the truncation level. Our method achieves the order-optimal sample complexity of $\tilde{\mathcal{O}}(\epsilon^{-2})$ for robust policy evaluation and robust average reward estimation, marking a significant advancement in robust reinforcement learning theory.
\end{abstract}

\section{Introduction}

Reinforcement learning (RL) has achieved notable success in domains such as robotics \cite{tang2025deep}, finance \citep{hambly2023recent}, healthcare \citep{yu2021reinforcement}, transportation \cite{al2019deeppool}, and large language models \citep{srivastava2025technical} by enabling agents to learn optimal decision-making strategies through interaction with an environment. However, in many real-world applications, direct interaction is impractical due to safety concerns, high costs, or limited data  collection budgets \cite{sunderhauf2018limits, hofer2021sim2real}. This challenge is particularly evident in scenarios where agents are trained in simulated environments before being deployed in the real world, such as in robotic control and autonomous driving. The mismatch between simulated and real environments, known as the simulation-to-reality gap, often leads to performance degradation when the learned policy encounters unmodeled uncertainties. Robust reinforcement learning (robust RL) addresses this challenge by formulating the learning problem as an optimization over an uncertainty set of transition probabilities, ensuring reliable performance under worst-case conditions. In this work, we focus on the problem of evaluating the robust value function and robust average reward for a given policy using only data sampled from a simulator (nominal model), aiming to enhance generalization and mitigate the impact of transition uncertainty in real-world deployment. 

Reinforcement learning problems under infinite time horizons are typically studied under two primary reward formulations: the discounted-reward setting, where future rewards are exponentially discounted, and the average-reward setting, which focuses on optimizing long-term performance. While the discounted-reward formulation is widely used, it may lead to myopic policies that underperform in applications requiring sustained long-term efficiency, such as queueing systems, inventory management, and network control. In contrast, the average-reward setting is more suitable for environments where decisions impact long-term operational efficiency. Despite its advantages, robust reinforcement learning under the average-reward criterion remains largely unexplored. Existing works on robust average-reward RL primarily provide asymptotic guarantees \cite{wang2023robust,wang2023model, wang2024robust}, lacking algorithms with finite-time performance bounds. This gap highlights the need for principled approaches that ensure robustness against model uncertainties while maintaining strong long-term performance guarantees.

Solving the robust average-reward reinforcement learning problem is significantly more challenging than its non-robust counterpart, with the primary difficulty arising in policy evaluation. Specifically, the goal is to compute the worst-case value function and worst-case average reward over an entire uncertainty set of transition models while having access only to samples from a nominal transition model. In this paper, we investigate three types of uncertainty sets: Contamination uncertainty sets, total variation (TV) distance uncertainty sets, and Wasserstein distance uncertainty sets. Unlike the standard average-reward setting, where value functions and average rewards can be estimated directly from observed trajectories \citep{wei2020model,bai2024regret,ganesh2025regret,ganesh2025order,ganesh2025sharper}, the robust setting introduces an additional layer of complexity due to the need to optimize against adversarial transitions. Consequently, conventional approaches based on direct estimation such as \cite{wei2020model,bai2024regret,ganesh2025regret,ganesh2025order,ganesh2025sharper} immediately fail, as they do not account for the worst-case nature of the problem. Overcoming this challenge requires new algorithmic techniques that can infer the worst-case dynamics using only limited samples from the nominal model.

\subsection{Challenges and Contributions}

A common approach to policy evaluation in robust RL is to solve the corresponding robust Bellman operator. However, robust average-reward RL presents additional difficulties compared to the robust discounted-reward setting. In the discounted case, the presence of a discount factor induces a contraction property in the robust Bellman operator \cite{wang2022policy, zhou2024natural}, facilitating stable iterative updates. In contrast, the average-reward Bellman operator lacks a contraction property with respect to any norm even in the non-robust setting \cite{zhang2021finite}, making standard fixed-point analysis inapplicable. Due to this fundamental limitation, existing works on robust average-reward RL such as  \cite{wang2023model} rely on asymptotic techniques, primarily leveraging ordinary differential equation (ODE) analysis to examine the behavior of temporal difference (TD) learning. These methods exploit the asymptotic stability of the corresponding ODE \cite{borkar2023stochastic} to establish almost sure convergence but fail to provide finite-sample performance guarantees. Addressing this limitation requires novel analytical tools and algorithmic techniques capable of providing explicit finite-sample bounds for robust policy evaluation and optimization.

In this work, we first establish and exploit a key structural property of the robust average-reward Bellman operator with uncertainty set $\cp$ under the ergodicity of the nominal model: it is a contraction under some semi-norm, denoted as $\|\cdot\|_{\cp}$, where the detailed construction is specified in Theorem \ref{thm:robust_seminorm-contraction} and \eqref{eq:psemeinormbrief}. Constructing $\|\cdot\|_{\cp}$ is not straightforward, because ergodicity alone only guarantees that the chain mixes over multiple steps and fails to produce a single‐step contraction for familiar measures such as the span semi-norm. To overcome this, we group together all the worst‐case transition dynamics under uncertainty into one compact family of linear mappings, and observe that their “worst‐case gain” over any number of steps stays strictly below $1$.  From this we build an extremal norm, which by construction shrinks every non‐constant component by the same fixed factor in a single step.  Finally, we add a small “quotient” correction that exactly annihilates constant shifts, producing a semi-norm that vanishes only on constant functions but still inherits the one‐step shrinkage. The above construction yields a uniform, strict contraction for the robust Bellman operator.

This fundamental result above enables the use of stochastic approximation techniques similar to \cite{zhang2021finite} to analyze and bound the error in policy evaluation, overcoming the lack of a standard contraction property that has hindered prior finite-sample analyses. Building on this insight, we develop a novel stochastic approximation framework tailored to the robust average-reward setting. Our approach simultaneously estimates both the robust value function and the robust average reward, leading to an efficient iterative procedure for solving the robust Bellman equation. A critical challenge in this framework under TV and Wasserstein distance uncertainty sets is accurately estimating the worst-case transition effects, which requires computing the support function of the uncertainty set. While previous works \cite{blanchet2015unbiased,blanchet2019unbiased, wang2023model} have leveraged Multi-Level Monte Carlo (MLMC) for this task, their MLMC-based estimators suffer from infinite expected sample complexity due to the unbounded nature of the required geometric sampling, leading to only asymptotic convergence. To address this, we introduce a truncation mechanism based on a truncated geometric distribution, ensuring that the sample complexity remains finite while maintaining an exponentially decaying bias. With these techniques, we derive the first finite-sample complexity guarantee for policy evaluation in robust average-reward RL, achieving an optimal $\tilde{\mathcal{O}}(\epsilon^{-2})$ sample complexity bound. The main contributions of this paper are summarized as
follows:

\begin{itemize}[leftmargin=*]
    \item We prove that under the ergodicity assumption of the nominal model, the robust average-reward Bellman operator is a contraction with respect to a suitably constructed semi-norm (Theorem \ref{thm:robust_seminorm-contraction}). This key result enables the application of stochastic approximation techniques for policy evaluation.
    
    \item We prove the convergence of stochastic approximation under the semi-norm contraction and under i.i.d. with noise with non-zero bias (Theorem \ref{thm:informalbiasedSA}) as an intermediate result.
    
    \item We develop an efficient method for computing estimates for the robust Bellman operator under TV distance and Wasserstein distance uncertainty sets. By modifying MLMC with a truncated geometric sampling scheme, we ensure finite expected sample complexity while keeping variance controlled and bias decaying exponentially with truncation level (Theorem \ref{thm:sample-complexity}-\ref{thm:linear-variance}).
    
    \item We propose a novel temporal difference learning method that iteratively updates the robust value function and the robust average reward, facilitating efficient policy evaluation in robust average-reward RL. We establish the first non-asymptotic sample complexity result for policy evaluation in robust average-reward RL, proving an order-optimal $\tilde{\mathcal{O}}(\epsilon^{-2})$ complexity for policy evaluation (Theorem \ref{thm:Vresult}), along with a $\tilde{\mathcal{O}}(\epsilon^{-2})$ complexity for robust average-reward estimation (Theorem \ref{thm:gresult}).
\end{itemize}

\section{Related Work}
The theoretical guarantees of robust average-reward reinforcement learning have been studied by the following works. \cite{wang2023robust} takes a model-based perspective, approximating robust average-reward MDPs with discounted MDPs and proving uniform convergence of the robust discounted value function as the discount factor approaches one, employing dynamic programming and Blackwell optimality arguments to characterize optimal policies. \cite{wang2023model} proposes a model-free approach by developing robust relative value iteration (RVI) TD and Q-learning algorithms, proving their almost sure convergence using stochastic approximation, martingale theory, and Multi-Level Monte Carlo estimators to handle non-linearity in the robust Bellman operator. While these studies provide fundamental insights into robust average-reward RL, they do not establish explicit convergence rate guarantees due to the lack of contraction properties in the robust Bellman operator. In addition, \cite{sunpolicy2024, wang2025provable} study the policy optimization of average-reward robust MDPs assuming direct queries of the sub-gradient information.

Policy evaluation in robust discounted-reward reinforcement learning with finite sample guarantees has been extensively studied, with the key recent works \cite{wang2022policy, zhou2024natural, li2022first, kumar2023policy, kuang2022learning} focusing on solving the robust Bellman equation by finding its fixed-point solution. This approach is made feasible by the contraction property of the robust Bellman operator under the sup-norm, which arises due to the presence of a discount factor $\gamma < 1$. However, this fundamental approach does not directly extend to the robust average-reward setting, where the absence of a discount factor removes the contraction property under any norm. As a result, existing robust discounted methods cannot be applied in the robust average-reward RL setting.

Recently, a growing body of concurrent work has established finite-sample guarantees for robust average-reward reinforcement learning. Model-based approaches include \citep{roch2025reduction, chen2025sample}, and \citep{roch2025finite} develops a model-free value-iteration method under contamination and $\ell_p$-ball uncertainty sets. While these results significantly advance the area, the specific problem of policy evaluation in robust average-reward MDPs has not yet been addressed in terms of sample complexity. Our work targets this gap.

\section{Formulation}

\subsection{Robust average-reward MDPs.} \label{sec:ramdp}
For a robust MDP with state space $\mcs$ and action space $\mca$ while  $|\mcs|=S$ and $|\mca|=A$, the transition kernel is assumed to be in some uncertainty set $\mathcal{P}$. At each time step, the environment transits to the next state according to an arbitrary transition kernel $\kp\in\cp$. In this paper, we focus on the $(s,a)$-rectangular compact uncertainty set \cite{nilim2004robustness,iyengar2005robust}, i.e., $\mathcal{P}=\bigotimes_{s,a} \mathcal{P}^a_s$, where $\mathcal{P}^a_s \subseteq \Delta(\mcs)$, and $\Delta$ denotes the  probability simplex. Popular uncertainty sets include those defined by the contamination model \cite{hub65,wang2022policy},  total variation \cite{lim2013reinforcement}, and Wasserstein distance \cite{gao2023distributionally}.

We investigate the worst-case average-reward over the uncertainty set of MDPs. Specifically, define the  robust average-reward of a policy $\pi$ as 
\begin{align}\label{eq:Vdef}
    g^\pi_\cp(s)\triangleq \min_{\kappa\in\bigotimes_{n\geq 0} \mathcal{P}} \lim_{T\to\infty}\mathbb{E}_{\pi,\kappa}\left[\frac{1}{T}\sum^{T-1}_{t=0}r_t|S_0=s\right],
\end{align}
where $\kappa=(\mathsf P_0,\mathsf P_1...)\in\bigotimes_{n\geq 0} \mathcal{P}$. It was shown in \cite{wang2023robust} that the worst case under the time-varying model is equivalent to the one under the stationary model:
\begin{align}\label{eq:5}
    g^\pi_\cp(s)= \min_{\kp\in\mathcal{P}} \lim_{T\to\infty}\mathbb{E}_{\pi,\kp}\left[\frac{1}{T}\sum^{T-1}_{t=0}r_t|S_0=s\right].
\end{align}
Therefore, we limit our focus to the stationary model. We refer to the minimizers of \eqref{eq:5} as the worst-case transition kernels for the policy $\pi$, and denote the set of all possible worst-case transition kernels by $\Omega^\pi_g$, i.e., $\Omega^\pi_g \triangleq \{\kp\in\cp: g^\pi_\kp=g^\pi_\cp \}$, where $g^\pi_\kp$ denotes the average reward of policy $\pi$ under the single transition $\kp\in\cp$:
\begin{align} \label{eq:nonrobustg}
    g_\kp^\pi(s)\triangleq \lim_{T\to\infty} \mE_{\pi,\kp}\bigg[\frac{1}{T}\sum^{T-1}_{n=0} r_t|S_0=s \bigg].
\end{align}

We focus on the model-free setting, where only samples from the nominal MDP denoted as $\tilde{\kp}$ (the centroid of the uncertainty set) are available. We investigate the problem of robust policy evaluation and robust average reward estimation, which means for a given policy $\pi$, we aim to estimate the robust value function and the robust average reward. Throughout this paper, we make the following standard assumption regarding the structure of the induced Markov chain.

\begin{assumption}\label{ass:sameg}
    The Markov chain induced by $\pi$ is irreducible and aperiodic for the nominal model $\tilde{\kp}$. 
\end{assumption}

In contrast to many current works on robust average-reward RL \cite{wang2023robust, wang2023model, wang2024robust,sunpolicy2024,roch2025finite}, Assumption \ref{ass:sameg} requires only that the center of the uncertainty set be irreducible and aperiodic. We note that when the radius of uncertainty sets is small enough, Assumption \ref{ass:sameg} can ensure that $\kp^\pi$ is irreducible and aperiodic for all $\kp \in \cp$. This ensures that, under any transition model within the uncertainty set, the policy $\pi$ induces a single recurrent communicating class. A well-known result in average-reward MDPs states that under Assumption \ref{ass:sameg}, the average reward is independent of the starting state, i.e., for any $\kp\in\cp$ and all $s,s' \in \mcs$, we have  $ g^\pi_\kp(s) = g^\pi_\kp(s')$. Thus, we can drop the dependence on the initial state and simply write $g^\pi_\kp$ as the robust average reward. We now formally define the robust value function $ V^\pi_{\kp_V}$ by connecting it with the following robust Bellman equation: 

\begin{theorem}[Robust Bellman Equation, Theorem 3.1 in \cite{wang2023model}]\label{thm:robust Bellman} 
If $(g,V)$ is a solution to the robust Bellman equation
\begin{equation}\label{eq:bellman}
    V(s) = \sum_{a} \pi(a|s) \big(r(s,a) - g + \sigma_{\cp^a_s}(V) \big), \quad \forall s \in \mathcal{S},
\end{equation}
where $\sigma_{\cp^a_s}(V) = \min_{p\in\cp^a_s} p^\top V$ is denoted as the support function, then the scalar $g$ corresponds to the robust average reward, i.e., $g = g^\pi_\cp$, and the worst-case transition kernel $\kp_V$ belongs to the set of minimizing transition kernels, i.e., $\kp_V \in \Omega^\pi_g$, where 
$\Omega^\pi_g \triangleq \{ \kp \in \cp : g^\pi_\kp = g^\pi_\cp \} $. Furthermore, the function $V$ is unique up to an additive constant, where if $V$ is a solution to the Bellman equation, then we have $ V = V^\pi_{\kp_V} + c \mathbf{e}$,    where $c \in \mathbb{R}$ and $\mathbf{e}$ is the all-ones vector in $\mathbb{R}^{S}$, and $ V^\pi_{\kp_V}$ is defined as the relative value function of the policy $\pi$ under the single transition $\kp_V$ as follows:
\begin{align}\label{eq:relativevaluefunction}
    V^\pi_{\kp_V}(s)\triangleq \mE_{\pi,\kp_V}\bigg[\sum^\infty_{t=0} (r_t-g^\pi_{\kp_V})|S_0=s \bigg].
\end{align}

\end{theorem}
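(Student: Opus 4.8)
The plan is to convert the robust Bellman equation into a \emph{classical Poisson equation} for one judiciously chosen Markov chain --- the chain driven by the kernel that attains the support functions --- and then run a short sandwich argument over the whole uncertainty set to identify $g$ with the worst-case average reward.

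Fix a solution $(g,V)$. Using compactness of each $\cp^a_s$ and linearity of $p\mapsto p^\top V$, pick for every $(s,a)$ a minimizer $(\kp_V)^a_s$ of $p\mapsto p^\top V$ over $p\in\cp^a_s$; this defines the kernel $\kp_V$ in the statement. Writing $r^\pi(s):=\sum_a\pi(a|s)r(s,a)$ and letting $\mathsf P^\pi_{\kp_V}$ be the $S\times S$ transition matrix with rows $\mathsf P^\pi_{\kp_V}(s,\cdot)=\sum_a\pi(a|s)(\kp_V)^a_s$, equation \eqref{eq:bellman} reads exactly $V = r^\pi - g\,\mathbf{e} + \mathsf P^\pi_{\kp_V}V$, i.e. $(I-\mathsf P^\pi_{\kp_V})V = r^\pi - g\,\mathbf{e}$, which is the Poisson equation of the chain $\mathsf P^\pi_{\kp_V}$. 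By Assumption \ref{ass:sameg} together with the remark that a small-enough uncertainty radius keeps every induced chain irreducible and aperiodic, $\mathsf P^\pi_{\kp_V}$ has a unique stationary distribution $\mu$; left-multiplying the Poisson equation by $\mu^\top$ and using $\mu^\top\mathsf P^\pi_{\kp_V}=\mu^\top$ annihilates the left side, giving $g=\mu^\top r^\pi=g^\pi_{\kp_V}$. Classical average-reward theory (the Poisson equation is solvable precisely because this $g$ balances $r^\pi$, and its solution is unique up to an additive constant with the distinguished representative the relative value function) then yields $V = V^\pi_{\kp_V}+c\,\mathbf{e}$ for some $c\in\mathbb{R}$; here the series in \eqref{eq:relativevaluefunction} converges absolutely because the aperiodic irreducible chain mixes geometrically, so $\mE_{\pi,\kp_V}[\,r_t-g^\pi_{\kp_V}\,]$ decays exponentially in $t$.

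It remains to upgrade ``$g=g^\pi_{\kp_V}$'' to ``$g=g^\pi_\cp$''. Fix an arbitrary $\kp\in\cp$. By definition of the support function, $\sigma_{\cp^a_s}(V)\le(\kp^a_s)^\top V$ for every $(s,a)$, so averaging over $a$ gives $\mathsf P^\pi_{\kp_V}V\le\mathsf P^\pi_{\kp}V$ componentwise. Substituting into $V = r^\pi - g\,\mathbf{e} + \mathsf P^\pi_{\kp_V}V$ yields $(I-\mathsf P^\pi_{\kp})V\le r^\pi - g\,\mathbf{e}$; left-multiplying by the (entrywise nonnegative) stationary distribution $\mu_\kp$ of $\mathsf P^\pi_{\kp}$ kills the left side and leaves $0\le\mu_\kp^\top r^\pi-g=g^\pi_\kp-g$, i.e. $g\le g^\pi_\kp$. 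Since $\kp$ was arbitrary and $g=g^\pi_{\kp_V}$ with $\kp_V\in\cp$, \eqref{eq:5} gives $g=\min_{\kp\in\cp}g^\pi_\kp=g^\pi_\cp$, hence $\kp_V\in\Omega^\pi_g$. Finally, uniqueness of $V$ up to an additive constant among \emph{all} solutions --- not merely those producing the same minimizing kernel --- follows from a maximum-principle argument: if $(g,V_1)$ and $(g,V_2)$ both solve \eqref{eq:bellman}, the minimizer inequality applied with $\kp_{V_2}$ gives $V_1-V_2\le\mathsf P^\pi_{\kp_{V_2}}(V_1-V_2)$, and evaluating at a state maximizing $V_1-V_2$ and propagating the resulting equality along paths (irreducibility of $\mathsf P^\pi_{\kp_{V_2}}$) forces $V_1-V_2$ to be constant.

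The step I expect to be the real obstacle is the ergodicity bookkeeping rather than the algebra: every displayed identity relies on $\mathsf P^\pi_{\kp_V}$ \emph{and} on each $\mathsf P^\pi_{\kp}$ being unichain (indeed irreducible, and aperiodic for the series form of $V^\pi_{\kp_V}$) so that unique stationary distributions exist and the Poisson-uniqueness and maximum-principle arguments go through. Assumption \ref{ass:sameg} only posits this for the nominal kernel $\tilde\kp$, so the proof must make precise --- via continuity of the stationary distribution and spectral gap as functions of the transition matrix --- the claim that a sufficiently small radius transfers irreducibility and aperiodicity to every $\kp\in\cp$, and quantify how small ``small enough'' must be. A secondary technicality is verifying absolute convergence of \eqref{eq:relativevaluefunction} and that it indeed solves the Poisson equation, which again rests on geometric ergodicity of $\mathsf P^\pi_{\kp_V}$.
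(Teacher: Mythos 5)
Your proposal is essentially correct, but note that the paper does not prove this theorem at all: it is imported verbatim as Theorem~3.1 of \cite{wang2023model}, so there is no in-paper proof to compare against. Your self-contained argument follows what is in fact the standard (and, as far as I can tell, the cited work's) route: read the robust Bellman equation as the Poisson equation $(I-\mathsf P^\pi_{\kp_V})V=r^\pi-g\e$ for the kernel attaining the support functions, hit it with the stationary distribution to get $g=g^\pi_{\kp_V}$, then use $\sigma_{\cp^a_s}(V)\le(\kp^a_s)^\top V$ and nonnegativity of $\mu_\kp$ to sandwich $g\le g^\pi_\kp$ for every $\kp\in\cp$, and finish uniqueness with a maximum principle. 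All of these steps check out. The one place where you are (correctly) leaning on something outside the theorem statement is the requirement that \emph{every} $\kp^\pi$ with $\kp\in\cp$ be unichain/irreducible so that each $\mu_\kp$ exists, $g^\pi_\kp=\mu_\kp^\top r^\pi$ is state-independent, and the maximum-principle propagation reaches all states; Assumption~\ref{ass:sameg} only covers the nominal kernel. You flag this honestly, and the paper does supply the missing quantitative radius conditions (Lemmas~\ref{lem:contaminationradius}--\ref{lem：Wradius} in Appendix~\ref{radiusrestrictions}), so your proof is complete once those are invoked. A minor refinement: aperiodicity is only needed for absolute convergence of the series form \eqref{eq:relativevaluefunction}; the identities $g=\mu^\top r^\pi$ and the uniqueness argument need only irreducibility, since the Ces\`aro limit in \eqref{eq:nonrobustg} exists for periodic irreducible chains as well.
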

Theorem \ref{thm:robust Bellman} implies that the robust Bellman equation \eqref{eq:bellman} identifies both the worst‐case average reward $g$ and a corresponding value function $V$ that is determined only up to an additive constant. In particular, $\sigma_{\cp^a_s}(V)$ represents the worst-case transition effect over the uncertainty set $\cp^a_s$. Unlike the robust discounted case, where the contraction property of the Bellman operator under the sup-norm enables straightforward fixed-point iteration, the robust average-reward Bellman equation does not induce contraction under any norm, making direct iterative methods inapplicable. Throughout the paper, we denote $\e$ as the all-ones vector in $\mathbb{R}^S$.  We now characterize the explicit forms of $\sigma_{\cp^a_s}(V)$ for different compact uncertainty sets as follows:

\noindent \textbf{Contamination Uncertainty Set}\label{sec:con}
The contamination uncertainty models outliers or rare faults \citep{chen2016general}. Specifically, the $\delta$-contamination uncertainty set is
$
    \cp^a_s=\{(1-\delta)\tilde{\kp}^a_s+\delta q: q\in\Delta(\mcs) \}, 
$
where $0<\delta<1$ is the radius. Under this uncertainty set, the support function can be computed as 
\begin{equation}\label{eq:contamination}
    \sigma_{\cp^a_s}(V)=(1-\delta)(\tilde{\kp}^a_s)^\top V+\delta \min_s V(s),
\end{equation}
and this is linear in the nominal transition kernel $\tilde{\kp^a_s}$. 

\noindent \textbf{Total Variation Uncertainty Set.}
The total variation (TV) distance uncertainty set models categorical misspecification or discretization error \cite{ho2021partial}, and is characterized as
$
    \cp^a_s=\{q\in\Delta(|\mcs|): \frac{1}{2}\|q-\tilde{\kp}^a_s\|_1\leq \delta \},
$
define $\| \cdot \|_{\mathrm{sp}}$ as the span semi-norm and the support function can be computed using its dual function \cite{iyengar2005robust}: 
\begin{align}\label{eq:tv dual}
    \sigma_{\cp^a_s}(V)=\max_{\mu \geq \mathbf{0}}\big((\tilde{\kp}^a_s)^\top (V-\mu)-\delta \|V-\mu\|_{\mathrm{sp}}  \big).
\end{align}
\textbf{Wasserstein Distance Uncertainty Sets.}
The Wasserstein distance uncertainty Models smooth model drift when states have a geometry \citep{clement2021first}. Consider the metric space $(\mathcal{S},d)$ by defining some distance metric $d$. For some parameter $l\in[1,\infty)$ and two distributions $p,q\in\Delta(\mathcal{S})$, define the $l$-Wasserstein distance between them as 
$W_l(q,p)=\inf_{\mu\in\Gamma(p,q)}\|d\|_{\mu,l}$, where $\Gamma(p,q)$ denotes the distributions over $\mathcal{S}\times\mathcal{S}$ with marginal distributions $p,q$, and $\|d\|_{\mu,l}=\big(\mE_{(X,Y)\sim \mu}\big[d(X,Y)^l\big]\big)^{1/l}$. The Wasserstein distance uncertainty set is then defined as 
\begin{equation}
    \cp^a_s=\left\{q\in\Delta(\mcs): W_l(\tilde{\kp}^a_s,q)\leq \delta \right\}.
\end{equation}
The support function w.r.t. the Wasserstein distance set, can be calculated as follows \cite{gao2023distributionally}:
\begin{equation}\label{eq:wd dual}
    \sigma_{\cp^a_s}(V)=\sup_{\lambda\geq 0}\left(-\lambda\delta^l+\mE_{\tilde{\kp}^a_{s}}\big[\inf_{y}\big(V(y)+\lambda d(S,y)^l \big)\big] \right).
\end{equation}

\subsection{Robust Bellman Operator}

Motivated by Theorem \ref{thm:robust Bellman}, we define the robust Bellman operator, which forms the basis for our policy evaluation procedure.

\begin{definition}[Robust Bellman Operator, \cite{wang2023model}]
The robust Bellman operator $\mathbf{T}_g$ is defined as:
\begin{equation} \label{eq:bellmanoperator}
    \mathbf{T}_g(V)(s) = \sum_{a} \pi(a|s) \big[ r(s,a) - g +  \sigma_{\cp^a_s}(V) \big], \quad \forall s \in \mathcal{S}.
\end{equation}
\end{definition}

The operator $\mathbf{T}_g$ transforms a value function $V$ by incorporating the worst-case transition effect. A key challenge in solving the robust Bellman equation is that $\mathbf{T}_g$ does not satisfy contraction under standard norms, preventing the use of conventional fixed-point iteration. To cope with this problem, we establish that $\mathbf{T}_g$ is a contraction under some constructed semi-norm. This allows us to further develop provably efficient stochastic approximation algorithms.

\section{Semi-Norm Contraction of Robust Bellman Operators} \label{contractionsection}

Under Assumption \ref{ass:sameg}, we are able to establish the semi-norm contraction property. For motivation, we first establish the semi-norm contraction property of the non-robust average-reward Bellman operator for a policy $\pi$ under transition $\kp$ defined as follows:
\begin{equation} \label{eq:bellmanoperator_nonrobust}
    \mathbf{T}_g^{\kp}(V)(s) = \sum_{a} \pi(a|s) \big[ r(s,a) - g +  \sum_{s'} \kp(s'|s,a)V(s')\big], \quad \forall s \in \mathcal{S}.
\end{equation}

\begin{lemma} \label{lem:seminorm-contraction}
Let $\mathcal{S}$ be a finite state space, and let $\pi$ be a stationary policy. If the Markov chain induced by $\pi$ under the transition $\kp$ is irreducible and aperiodic, there exists a semi-norm $\|\cdot\|_{\kp}$ with kernel $\{c \e : c \in \mathbb{R}\}$ and a constant $\beta \in (0,1)$ such that for all $V_1, V_2 \in \mathbb{R}^{S}$ and any $g \in \mathbb{R}$,
\begin{equation}
\bigl\| \mathbf{T}_g^{\kp}(V_1) -  \mathbf{T}_g^{\kp}(V_2)\bigr\|_{\kp}
\leq \beta \|V_1 - V_2\|_{\kp}.
\end{equation}
\end{lemma}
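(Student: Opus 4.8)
The first move is to notice that $\mathbf{T}_g^{\kp}$ is affine and that its linear part does not depend on $g$. Writing $P_\pi\in\mathbb{R}^{S\times S}$ for the Markov matrix induced by $\pi$ under $\kp$, namely $P_\pi(s,s')=\sum_a\pi(a|s)\kp(s'|s,a)$, the reward and the $-g$ terms cancel in the difference, so
\begin{equation}
\mathbf{T}_g^{\kp}(V_1)-\mathbf{T}_g^{\kp}(V_2)=P_\pi(V_1-V_2)\qquad\text{for every }g\in\mathbb{R}.
\end{equation}
Hence the lemma reduces to producing a semi-norm $\|\cdot\|_{\kp}$ on $\mathbb{R}^S$ with kernel $\mathrm{span}\{\e\}$ and a constant $\beta\in(0,1)$ such that $\|P_\pi w\|_{\kp}\leq\beta\|w\|_{\kp}$ for all $w$; the statement for general $V_1,V_2,g$ then follows by taking $w=V_1-V_2$.

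Next I would pass to the quotient space $E\triangleq\mathbb{R}^S/\mathrm{span}\{\e\}$, with quotient map $q:\mathbb{R}^S\to E$. Since $P_\pi$ is row-stochastic, $P_\pi\e=\e$, so $P_\pi$ descends to a well-defined linear operator $\bar P_\pi$ on $E$. Because the chain induced by $\pi$ under $\kp$ is irreducible and aperiodic, the Perron--Frobenius theorem gives that the eigenvalue $1$ of $P_\pi$ is simple with eigenvector $\e$ and all remaining eigenvalues have modulus strictly below $1$; equivalently, the spectral radius of $\bar P_\pi$ satisfies $\rho(\bar P_\pi)=\rho<1$. This is the point where ergodicity enters, but note it only yields multi-step decay of $\bar P_\pi^{\,n}$, not a one-step contraction in any naive norm.

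To upgrade this to a uniform single-step contraction I would build an adapted (extremal) norm on $E$. Fix any $\beta\in(\rho,1)$ and a base norm $\|\cdot\|_0$ on $E$; by Gelfand's formula $\rho=\lim_n\|\bar P_\pi^{\,n}\|_0^{1/n}$ there is an integer $N\geq 1$ with $\|\bar P_\pi^{\,N}\|_0\leq\beta^N$. Define
\begin{equation}
\|x\|_E\triangleq\sum_{k=0}^{N-1}\beta^{-k}\,\|\bar P_\pi^{\,k}x\|_0 ,
\end{equation}
which is a genuine norm since it dominates its $k=0$ term $\|x\|_0$. A short telescoping computation gives
\begin{equation}
\|\bar P_\pi x\|_E=\sum_{k=1}^{N}\beta^{-(k-1)}\|\bar P_\pi^{\,k}x\|_0=\beta\Big(\|x\|_E-\|x\|_0+\beta^{-N}\|\bar P_\pi^{\,N}x\|_0\Big)\leq\beta\,\|x\|_E ,
\end{equation}
using $\|\bar P_\pi^{\,N}x\|_0\leq\beta^N\|x\|_0$. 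Finally I would pull this back to $\mathbb{R}^S$ by setting $\|V\|_{\kp}\triangleq\|q(V)\|_E$; this is a semi-norm whose kernel is exactly $q^{-1}(\{0\})=\mathrm{span}\{\e\}$ because $\|\cdot\|_E$ is nondegenerate on $E$, and $\|P_\pi V\|_{\kp}=\|\bar P_\pi q(V)\|_E\leq\beta\|q(V)\|_E=\beta\|V\|_{\kp}$, which is the desired contraction.

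The routine parts are the cancellation of $g$, the descent to the quotient, and the Perron--Frobenius spectral statement; the one genuinely delicate step is the adapted-norm construction, since the subtlety is precisely that mixing only controls $\bar P_\pi^{\,N}$ and the Gelfand telescoping is what converts this multi-step decay into a single fixed contraction factor while keeping the kernel equal to $\mathrm{span}\{\e\}$. I would also remark that this construction is the non-robust template for Theorem \ref{thm:robust_seminorm-contraction}, where the single matrix $\bar P_\pi$ is replaced by a compact family of worst-case linear maps.
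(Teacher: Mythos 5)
Your proof is correct, and it reaches the same statement by a genuinely different construction than the paper. The paper stays on $\mathbb{R}^S$: it forms the fluctuation matrix $Q^\pi=\kp^\pi-\e (d^\pi)^\top$, shows its spectrum lies strictly inside the unit disc, solves a discrete Lyapunov equation (justifying convergence of the associated Neumann series via Jordan-block estimates) to get a norm $\|\cdot\|_Q$ with $\|Q^\pi\|_Q\le\alpha<1$, and then must \emph{add} a correction term $\epsilon\inf_{c}\|x-c\e\|_Q$ to the quantity $\|Q^\pi x\|_Q$ in order to force the kernel to be exactly $\{c\e\}$, paying $\beta=\alpha+\epsilon$. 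You instead quotient out $\mathrm{span}\{\e\}$ first, note that $P_\pi$ descends because $P_\pi\e=\e$, and build a one-step contracting norm on the quotient by the Gelfand finite-sum (adapted-norm) device; pulling back through the quotient map makes the kernel exactly $\mathrm{span}\{\e\}$ for free, with no correction term and no Lyapunov machinery. Your route is more elementary and arguably cleaner for a single kernel; the paper's two-term template is chosen because it transplants directly to the robust case, where the sup over a compact family of fluctuation matrices replaces the single $\|Q^\pi x\|_Q$ term and an extremal norm replaces the Lyapunov norm. Your quotient viewpoint also generalizes (the paper itself passes to the zero-sum subspace when bounding the joint spectral radius in its robust proof), so the remark at the end of your writeup is apt. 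All the individual steps — the cancellation of $g$, the descent to the quotient, the Perron--Frobenius simplicity of the eigenvalue $1$, the telescoping inequality $\|\bar P_\pi x\|_E\le\beta\|x\|_E$, and the identification of the kernel of the pulled-back semi-norm — check out.
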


\paragraph{Proof Sketch}
Under ergodicity, the one‐step transition matrix (denoted as $\kp^\pi$) has a unique stationary distribution $d^\pi$, define the stationary projector $E=\e^\top d^\pi$, then the fluctuation matrix (defined as $Q^\pi=\kp^\pi-E$) has all eigenvalues strictly inside the unit circle.  Standard finite‐dimensional theory (via the discrete Lyapunov equation \cite{horn2012matrix}) would produce a norm $\|\cdot\|_{Q}$ on $\mathbb{R}^S$ such that there is a constant $\alpha\in(0,1)$ such that for any $x \in \mathbb{R}^S$, $\|Q^\pi x\|_{Q}\leq\alpha\|x\|_{Q}$. We then build the semi-norm as follows:
\begin{equation}
\|x\|_{\kp}=\|Q^\pi x\|_{Q}+\epsilon \inf_{c\in\mathbb{R}}\|x-c\e\|_{Q},
\quad
0<\epsilon<1-\alpha,
\end{equation}
so that its kernel is exactly the constant vectors (the second term vanishes only on shifts of $\e$) and the first term enforces a one‐step shrinkage by $\beta=\alpha+\epsilon<1$.  A short calculation then shows
\(\|\kp^\pi x\|_{\kp}\leq \beta\|x\|_{\kp}\), yielding the desired contraction, which leads to the overall result.

The concrete proof of Lemma \ref{lem:seminorm-contraction} including the detailed construction of the semi-norm $\|\cdot\|_\kp$ is in Appendix \ref{proofspan-contraction}, where the  properties of irreducible and aperiodic finite state Markov chain are utilized. Thus, we show the (non-robust) average-reward Bellman operator $\mathbf{T}_g^{\kp}$ is a strict contraction under $\|\cdot\|_\kp$. Based on the above motivations, we now formally establish the contraction property of the robust average-reward Bellman operator by leveraging Lemma \ref{lem:seminorm-contraction} and the compactness of the uncertainty sets.

\begin{theorem} \label{thm:robust_seminorm-contraction}
Under Assumption \ref{ass:sameg}, if $\cp$ is compact, with certain restrictions on the radius of the uncertainty sets, there exists a semi-norm $\|\cdot\|_{\cp}$ with kernel $\{c \e : c \in \mathbb{R}\}$ such that the robust Bellman operator $\mathbf{T}_g$ is a contraction. Specifically, there exist $\gamma\in(0,1)$ such that
\begin{equation} \label{eq:contractiongamma}
\|\mathbf{T}_g(V_1) - \mathbf{T}_g(V_2)\|_{\cp}
\;\le\;\gamma\,\|V_1 - V_2\|_{\cp},
\;\forall \; V_1,V_2\in\mathbb R^{ S},\,g\in\mathbb R.
\end{equation}
\end{theorem}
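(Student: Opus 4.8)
The plan is to reduce the robust contraction to the non-robust one (Lemma~\ref{lem:seminorm-contraction}) by a selection argument over the worst-case kernels. First I would fix any $g\in\mathbb{R}$ and $V_1,V_2\in\mathbb{R}^S$, and use the definition of the support function to write $\sigma_{\cp^a_s}(V_i)=(\kp^{a,i}_s)^\top V_i$ for some minimizing $\kp^{a,i}_s\in\cp^a_s$, which exist by compactness of each $\cp^a_s$. Then for each state $s$, by the standard ``min of linear functions'' bound one gets
\begin{equation}
\sigma_{\cp^a_s}(V_1)-\sigma_{\cp^a_s}(V_2)\le (\kp^{a,2}_s)^\top(V_1-V_2)
\quad\text{and}\quad
\sigma_{\cp^a_s}(V_2)-\sigma_{\cp^a_s}(V_1)\le (\kp^{a,1}_s)^\top(V_1-V_2),
\end{equation}
so $\mathbf{T}_g(V_1)-\mathbf{T}_g(V_2)$ is sandwiched, coordinatewise, between $\mathbf{T}^{\kp}_g(V_1)-\mathbf{T}^{\kp}_g(V_2)$ for two kernels $\kp$ drawn from a common compact family. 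The subtlety is that the selected kernels depend on $V_1,V_2$, so the semi-norm $\|\cdot\|_\cp$ must be constructed \emph{uniformly} over the whole relevant family of kernels.

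To handle this, I would let $\mathcal{K}$ be a compact set of transition kernels containing all worst-case kernels that can arise (under the radius restriction this is a compact subset of kernels all of whose induced chains are irreducible and aperiodic, as noted after Assumption~\ref{ass:sameg}), and build a single extremal semi-norm for the whole family. Concretely: for each $\kp\in\mathcal{K}$ let $Q^\kp=\kp^\pi-E^\kp$ be its fluctuation matrix; by ergodicity each $Q^\kp$ has spectral radius $<1$, and by compactness of $\mathcal{K}$ and continuity of the spectral radius there is a uniform bound $\rho(Q^\kp)\le\rho_0<1$. Gelfand's formula combined with a compactness/covering argument then yields an integer $m$ and $\alpha\in(\rho_0,1)$ such that $\|(Q^{\kp_m}\cdots Q^{\kp_1})x\|_\infty\le \alpha^m\|x\|_\infty$ for \emph{every} product of $m$ factors from $\mathcal{K}$ (here one uses that all the $Q^\kp$ annihilate constants and act on the quotient $\mathbb{R}^S/\langle\e\rangle$, where the joint spectral radius of the family is $<1$). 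From this I define the extremal norm on the quotient space
\begin{equation}
\vertiii{x}=\max_{0\le k\le m-1}\ \sup_{\kp_1,\dots,\kp_k\in\mathcal{K}}\ \alpha^{-k}\,\bigl\|(Q^{\kp_k}\cdots Q^{\kp_1})x\bigr\|_\infty,
\end{equation}
which is a genuine norm on $\mathbb{R}^S/\langle\e\rangle$ satisfying $\vertiii{Q^\kp x}\le\alpha\vertiii{x}$ for all $\kp\in\mathcal{K}$. Lifting back, set $\|x\|_\cp=\vertiii{x \bmod \langle\e\rangle}$; this is a semi-norm with kernel exactly $\{c\e\}$, and since $\kp^\pi x - \kp^\pi y = Q^\kp(x-y) \bmod \langle\e\rangle$ for any $\kp$, we get $\|\kp^\pi V_1-\kp^\pi V_2\|_\cp\le\alpha\|V_1-V_2\|_\cp$ uniformly in $\kp\in\mathcal{K}$. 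Finally, because the averaging over $a$ with weights $\pi(a|s)$ and the subtraction of the constant $g\e$ are both non-expansive for $\|\cdot\|_\cp$ (the latter lives in the kernel), the sandwich from the first paragraph gives $\|\mathbf{T}_g(V_1)-\mathbf{T}_g(V_2)\|_\cp\le\gamma\|V_1-V_2\|_\cp$ with $\gamma=\alpha<1$, which is \eqref{eq:contractiongamma}.

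The main obstacle I anticipate is establishing the \emph{uniform} one-step (or $m$-step) contraction over the compact family $\mathcal{K}$: unlike in Lemma~\ref{lem:seminorm-contraction}, where a single Lyapunov equation suffices, here I need a joint-spectral-radius-type bound and must verify that the radius restriction on the uncertainty sets genuinely forces every relevant worst-case kernel into the ergodic regime so that $\rho(Q^\kp)$ is bounded away from $1$ uniformly. A secondary technical point is checking that the $(s,a)$-rectangular structure lets the coordinatewise selection of kernels be assembled into a single kernel in $\cp$ (hence in $\mathcal{K}$) without losing the uniform bound — this is where rectangularity is essential, since it decouples the minimization across $(s,a)$ pairs.
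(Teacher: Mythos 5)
Your overall architecture matches the paper's: sandwich the robust Bellman difference between linear maps drawn from the compact family of induced kernels, establish that the family of fluctuation matrices $\{Q^\pi_\kp=\kp^\pi-E_\kp\}$ has joint spectral radius below one, build an extremal (semi-)norm that contracts uniformly in one step, and conclude. Your quotient-space version of the extremal norm (finite max over products of length $<m$, lifted to a semi-norm with kernel $\{c\e\}$) is a legitimate variant of the paper's construction, which instead takes an infinite supremum over all product lengths and adds an $\epsilon$-weighted quotient term; given the $m$-step bound, your version would work.

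The genuine gap is exactly the step you flag as ``the main obstacle'': your proposed route to the uniform $m$-step bound does not work. You argue that compactness plus continuity of the spectral radius gives $\rho(Q^\kp)\le\rho_0<1$ uniformly, and that Gelfand's formula plus a covering argument then yields $\|Q^{\kp_m}\cdots Q^{\kp_1}x\|_\infty\le\alpha^m\|x\|_\infty$ for \emph{arbitrary} products from the family. Gelfand controls powers of a single matrix (and, by perturbation, products of matrices all close to a single one), but the joint spectral radius of a family is not bounded by the supremum of the individual spectral radii: e.g.\ the two nilpotent matrices $\bigl(\begin{smallmatrix}0&1\\0&0\end{smallmatrix}\bigr)$ and $\bigl(\begin{smallmatrix}0&0\\1&0\end{smallmatrix}\bigr)$ each have spectral radius $0$ while their alternating products have norm bounded away from $0$ and joint spectral radius $1$. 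So individual ergodicity of every kernel in the set — which is all your ``radius restriction forces every worst-case kernel into the ergodic regime'' gives you — is not sufficient. The paper closes this gap with a different, submultiplicative tool: it bounds the joint spectral radius by the $m$-step Dobrushin (ergodicity) coefficient of arbitrary length-$m$ products, $\hat\rho(Q^\pi_\cp)\le\inf_m(\sup\tau(\kp_1^\pi\cdots\kp_m^\pi))^{1/m}$, and then proves, separately for contamination, TV, and Wasserstein sets, that a quantitative radius condition (e.g.\ $\delta<b_0/m$ for TV, where $b_0$ is the minimal entry of $(\tilde\kp^\pi)^m$) keeps the overlap of any length-$m$ product bounded below, hence $\tau<1$ uniformly. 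That quantitative perturbation analysis is where the actual content of the ``radius restrictions'' lives, and it is missing from your proposal.
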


\paragraph{Proof Sketch}
For any $\kp\in\cp$, the one‐step transition matrix $\kp^\pi$ has a unique stationary projector $E_\kp$ due to ergodicity. Since $\cp$ is compact, the family of fluctuation matrices $\{Q_\kp^\pi = \kp^\pi - E_\kp : \kp\in\cp\}$  has joint spectral radius strictly less than $1$.  By Lemma \ref{lem:bergerlemmaIV}in \cite{berger1992bounded}, one is able to construct an ``extremal norm'' (denoted as $\|\cdot\|_{\rm ext}$) under which every $Q_\kp^\pi$ contracts by a uniform factor $\alpha \in (0,1)$.  Mimicking the non-robust case in Lemma \ref{lem:seminorm-contraction}, we similarly define

\begin{equation} \label{eq:psemeinormbrief}
\|x\|_{\cp}
=\sup_{\kp \in\cp}\|Q_\kp^\pi x\|_{\rm ext}
+\epsilon \inf_{c\in\mathbb{R}}\|x-c\e\|_{\rm ext},
\quad
0<\epsilon<1-\alpha.
\end{equation}
The supremum term zeros out if $x \in \{c\e:c\in \mathbb{R}\}$ , and it inherits the uniform one‐step shrinkage by $\alpha$.  Adding the small quotient term fixes the kernel without spoiling $\gamma=\alpha+\epsilon<1$, so one shows at once
\begin{equation}
\|\mathbf{T}^\kp_g(V_1)-\mathbf{T}^\kp_g(V_2)\|_{\cp}\leq \gamma\|V_1-V_2\|_{\cp} \quad \text{for all} \; \kp\in\cp
\end{equation}
The above leads to the desired results.

The concrete proof of Theorem \ref{thm:robust_seminorm-contraction} along with the detailed construction of the semi-norm $\|\cdot\|_\cp$ and the specific radius restrictions on various uncertainty sets are in Appendix \ref{proofrobust-span-contraction}. 
 Since all the uncertainty sets listed in Section \ref{sec:ramdp} are closed and bounded in a real vector space, these uncertainty sets are all compact and satisfy the contraction property in Theorem \ref{thm:robust_seminorm-contraction}. We also note that the contraction factor $\gamma$ relates to the joint spectral gap of the family $\{Q^\pi_\kp : \kp \in\cp\}$.

\section{Efficient Estimators for Uncertainty Sets} \label{QueriesSection}

To utilize the contraction property in Section \ref{contractionsection} to obtain convergence rate results, our idea is perform the following iterative stochastic approximation:  
\begin{equation} \label{eq:generalSA}
V_{t+1}(s) \leftarrow V_t(s) + \eta_t \left( \hat{\mathbf{T}}_{g}(V_t)(s) - V_t(s) \right), \quad \forall s \in \mathcal{S}
\end{equation}
where the learning rate $\eta_t$ would be specified in Section \ref{robustTD}. The detailed analysis and complexities of the general stochastic approximation in the form of \eqref{eq:generalSA} is provided in 
 Appendix \ref{seminormcontractionwithbias}. Theorem \ref{thm:informalbiasedSA} implies that if $\hat{\mathbf{T}}_{g}(V)$, being an estimator of $\mathbf{T}_{g}(V)$, could be constructed with bounded variance and small bias, $V_t$ converges to a solution of the Bellman equation in \eqref{eq:bellman}. However, the challenge of constructing our desired $\hat{\mathbf{T}}_{g}(V)$ lies in the construction of the support function estimator $\hat{\sigma}_{\cp^a_s}(V)$.

In this section, we aim to construct an estimator $\hat{\sigma}_{\cp^a_s}(V)$ for all $s \in \mathcal{S}$ and $a \in \mathcal{A}$ in various uncertainty sets. Recall that the support function ${\sigma}_{\cp^a_s}(V)$ represents the worst-case transition effect over the uncertainty set $\cp^a_s$ as defined in the robust Bellman equation in Theorem \ref{thm:robust Bellman}. The explicit forms of ${\sigma}_{\cp^a_s}(V)$ for different uncertainty sets were characterized in \eqref{eq:contamination}-\eqref{eq:wd dual}. Our goal in this section is to construct efficient estimators $\hat{\sigma}_{\cp^a_s}(V)$ that approximates ${\sigma}_{\cp^a_s}(V)$ while maintaining controlled variance and finite sample complexity.

\paragraph{Linear Contamination Uncertainty Set}
Recall that the $\delta$-contamination uncertainty set is
$
    \cp^a_s=\{(1-\delta)\tilde{\kp}^a_s+\delta q: q\in\Delta(\mcs) \}, 
$
where $0<\delta<1$ is the radius. Since the support function can be computed by \eqref{eq:contamination} and the expression is linear in the nominal transition kernel $\tilde{\kp}^a_s$. A direct approach is to use the transition to the subsequent state to construct our estimator:
\begin{align}\label{eq:contaminationquery}
    \hat{\sigma}_{\cp^a_s}(V)\triangleq (1-\delta) V(s')+\delta\min_x V(x),
\end{align}
where $s'$ is a subsequent state sample after $(s,a)$. Hence, the sample complexity of \eqref{eq:contaminationquery} is just one. Lemma \ref{lem:wangthmD1} from \cite{wang2023model} states that $\hat{\sigma}_{\cp^a_s}(V)$ obtained by \eqref{eq:contaminationquery} is unbiased and has bounded variance as follows:
\begin{equation}
        \E\left[\hat{\sigma}_{\cp^a_s}(V)\right] = {\sigma}_{\cp^a_s}(V), \quad \text{and} \quad \mathrm{Var}(\hat{\sigma}_{\cp^a_s}(V)) \leq  \|V\|^2
\end{equation}

\paragraph{Nonlinear Contamination Sets}

Regarding TV and Wasserstein distance uncertainty sets, they have a nonlinear relationship between the nominal distribution $\tilde{\kp}^a_s$ and the support function ${\sigma}_{\cp^a_s}(V)$. Previous works such as \citep{blanchet2015unbiased,blanchet2019unbiased, wang2023model} have proposed a Multi-Level Monte-Carlo (MLMC) method for obtaining an unbiased estimator  of ${\sigma}_{\cp^a_s}(V)$ with bounded variance. However, their approaches require drawing $2^{N+1}$ samples where $N$ is sampled from a geometric distribution $\mathrm{Geom}(\Psi)$ with parameter $\Psi \in (0,0.5)$. This operation would need infinite samples in expectation for obtaining each single estimator as $\mathbb{E}[2^{N+1}] = \sum^{\infty}_{N=0} 2^{N+1} \Psi(1-\Psi)^N  = \sum^{\infty}_{N=0} 2\Psi(2-2\Psi)^N \rightarrow \infty$.
To handle the above problem, we aim to provide an estimator $\hat{\sigma}_{\cp^a_s}(V)$ with finite sample complexity and small enough bias. We construct a truncated-MLMC estimator under geometric sampling with parameter $\Psi=0.5$ as shown in Algorithm \ref{alg:sampling}.

\begin{algorithm}[htb]
\caption{Truncated MLMC Estimator for TV and Wasserstein Uncertainty Sets}
\label{alg:sampling}
\textbf{Input}: $s\in \mathcal{S}$, $a\in\mathcal{A}$,  Max level $N_{\max}$, Value function $V$
\begin{algorithmic}[1] 
\State Sample $N \sim \mathrm{Geom}(0.5)$
\State $N' \leftarrow \min \{N, N_{\max}\}$
\State Collect $2^{N'+1}$ i.i.d. samples of $\{s'_i\}^{2^{N'+1}}_{i=1}$ with $s'_i \sim \tilde{\kp}^a_s$ for each $i$
\State $\hat{\kp}^{a,E}_{s,N'+1} \leftarrow \frac{1}{2^{N'}}\sum_{i=1}^{2^{N'}} \mathbbm{1}_{\{s'_{2i}\}}$
\State $\hat{\kp}^{a,O}_{s,N'+1} \leftarrow \frac{1}{2^{N'}}\sum_{i=1}^{2^{N'}} \mathbbm{1}_{\{s'_{2i-1}\}}$
\State $\hat{\kp}^{a}_{s,N'+1}\leftarrow\frac{1}{2^{N'+1}}\sum_{i=1}^{2^{N'+1}} \mathbbm{1}_{\{s'_i\}}$
\State$\hat{\kp}^{a,1}_{s,N'+1} \leftarrow \mathbbm{1}_{\{s'_1\}}$
\If{TV} Obtain $\sigma_{\hat{\kp}^{a,1}_{s,N'+1}}(V), \sigma_{\hat{\kp}^{a}_{s,N'+1}}(V), \sigma_{\hat{\kp}^{a,E}_{s,N'+1}}(V), \sigma_{\hat{\kp}^{a,O}_{s,N'+1}}(V)$ from \eqref{eq:tv dual}
\ElsIf{Wasserstein} Obtain $\sigma_{\hat{\kp}^{a,1}_{s,N'+1}}(V), \sigma_{\hat{\kp}^{a}_{s,N'+1}}(V), \sigma_{\hat{\kp}^{a,E}_{s,N'+1}}(V), \sigma_{\hat{\kp}^{a,O}_{s,N'+1}}(V)$ from \eqref{eq:wd dual}
\EndIf
\State $\Delta_{N'}(V)\leftarrow \sigma_{\hat{\kp}^{a}_{s,N'+1}}(V)-\frac{1}{2}\Bigl[ \sigma_{\hat{\kp}^{a,E}_{s,N'+1}}(V)+  \sigma_{\hat{\kp}^{a,O}_{s,N'+1}}(V)
\Bigr]$
\State $\hat{\sigma}_{\cp^a_s}(V)\leftarrow\sigma_{\hat{\kp}^{a,1}_{s,N'+1}}(V)+\frac{\Delta_{N'}(V)}{  \mathbb{P}(N' = n) },
\text{where }
p'(n) = \mathbb{P}(N' = n)$
\Return $\hat{\sigma}_{\cp^a_s}(V)$
\end{algorithmic}
\end{algorithm}

In particular, if $n<N_{\max}$, then $\{N'=n\}=\{N=n\}$ with probability $(\tfrac12)^{n+1}$, while $\{N'=N_{\max}\}$ has probability $\sum_{m=N_{\max}}^\infty (1/2)^{m+1} = 2^{-N_{\max}}$. After obtaining $N'$, Algorithm \ref{alg:sampling} then collects a set of $2^{N'+1}$ i.i.d. samples from the nominal transition model to construct empirical estimators for different transition distributions. The core of the approach lies in computing the support function estimates for TV and Wasserstein uncertainty sets using a correction term $\Delta_{N'}(V)$, which accounts for the bias introduced by truncation. This correction ensures that the final estimator maintains a low bias while achieving a finite sample complexity. This truncation technique has been widely used in prior work across different settings such as \citep{wang2024model,ganesh2025sharper,xu2025accelerating}. We now present several crucial properties of Algorithm \ref{alg:sampling}.

\begin{theorem}[Finite Sample Complexity]
\label{thm:sample-complexity}
Under Algorithm \ref{alg:sampling}, denote $M = 2^{N'+1}$
as the random number of samples (where $N'=\min\{N,N_{\max}\}$).  Then
\begin{equation}
\mathbb{E}[M]=N_{\max}+2=\mathcal{O}(N_{\max}).
\end{equation}
\end{theorem}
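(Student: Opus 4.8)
The claim is a direct expected-value computation, so the plan is to unpack the definition of $M = 2^{N'+1}$ where $N' = \min\{N, N_{\max}\}$ and $N \sim \mathrm{Geom}(0.5)$, then sum against the explicit distribution of $N'$ described in the paragraph following Algorithm~\ref{alg:sampling}. First I would record the pmf of $N'$: for each $n \in \{0, 1, \ldots, N_{\max}-1\}$ we have $\mathbb{P}(N' = n) = \mathbb{P}(N = n) = (1/2)^{n+1}$, while $\mathbb{P}(N' = N_{\max}) = \sum_{m=N_{\max}}^{\infty}(1/2)^{m+1} = 2^{-N_{\max}}$ by the geometric-tail identity. This is the only place where the truncation matters, and it is precisely what makes the expectation finite.

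Next I would split the expectation accordingly:
\begin{equation}
\mathbb{E}[M] = \sum_{n=0}^{N_{\max}-1} 2^{n+1}\left(\tfrac12\right)^{n+1} + 2^{N_{\max}+1}\cdot 2^{-N_{\max}}.
\end{equation}
Each term in the finite sum collapses to $1$ since $2^{n+1}(1/2)^{n+1} = 1$, so the sum contributes $N_{\max}$, and the truncation term contributes $2^{N_{\max}+1-N_{\max}} = 2$. Adding these gives $\mathbb{E}[M] = N_{\max} + 2$, which is $\mathcal{O}(N_{\max})$. The contrast with the untruncated scheme (where $\mathbb{E}[2^{N+1}] = \sum_{n\geq 0} 2^{n+1}(1/2)^{n+1} = \sum_{n\geq 0} 1 = \infty$) is worth a one-line remark, since it explains why the cutoff at $N_{\max}$ is introduced at all.

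There is no real obstacle here — the entire argument is the cancellation $2^{n+1}\cdot(1/2)^{n+1}=1$ together with the geometric-tail sum for the truncated atom. The only point requiring any care is verifying that $\mathbb{P}(N' = N_{\max})$ is indeed $2^{-N_{\max}}$ rather than $(1/2)^{N_{\max}+1}$; this follows because $N' = N_{\max}$ whenever $N \geq N_{\max}$, so one sums the geometric tail $\sum_{m \geq N_{\max}}(1/2)^{m+1} = (1/2)^{N_{\max}+1}\cdot\frac{1}{1-1/2} = 2^{-N_{\max}}$. Everything else is bookkeeping, and the stated identity follows immediately.
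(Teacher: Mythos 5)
Your proposal is correct and follows exactly the same route as the paper's proof: split the expectation over the atoms $n=0,\dots,N_{\max}-1$ where $\mathbb{P}(N'=n)=\mathbb{P}(N=n)=(1/2)^{n+1}$ cancels against $2^{n+1}$, and handle the truncated atom via the geometric tail $\mathbb{P}(N\ge N_{\max})=2^{-N_{\max}}$, giving $N_{\max}+2$. Nothing is missing.
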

The proof of Theorem \ref{thm:sample-complexity} is in Appendix \ref{proof:sample-complexity}, which demonstrates that setting the geometric sampling parameter to $\Psi=0.5$  ensures that the expected number of samples follows a linear growth pattern rather than an exponential one. This choice precisely cancels out the effect of the exponential sampling inherent in the truncated MLMC estimator, preventing infinite expected sample complexity. This result shows that the expected number of queries grows only linearly with $N_{\max}$, ensuring that the sampling cost remains manageable even for large truncation levels. The key factor enabling this behavior is setting the geometric distribution parameter to $0.5,$ which balances the probability mass across different truncation levels, preventing an exponential increase in sample complexity.

\begin{theorem}[Exponentially Decaying Bias]
\label{thm:exp-bias}
Let $\hat{\sigma}_{\cp^a_s}(V)$ be the estimator of ${\sigma}_{\cp^a_s}(V)$ obtained from Algorithm \ref{alg:sampling} then under TV uncertainty set, we have:
\begin{equation}
\abs{\mathbb{E}\bigl[\hat{\sigma}_{\cp^a_s}(V) - {\sigma}_{\cp^a_s}(V)\bigr] } \leq
6(1+\frac{1}{\delta}) 2^{-\frac{N_{\max}}{2}}\|V\|_{\mathrm{sp}}
\end{equation}
where $\delta$ denotes the radius of TV distance. Under Wasserstein uncertainty set, we have:
\begin{equation}
\abs{\mathbb{E}\bigl[\hat{\sigma}_{\cp^a_s}(V) - {\sigma}_{\cp^a_s}(V)\bigr] } \leq
6\cdot 2^{-\frac{N_{\max}}{2}}\|V\|_{\mathrm{sp}}
\end{equation}
\end{theorem}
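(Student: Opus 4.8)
\textbf{Proof proposal for Theorem~\ref{thm:exp-bias} (exponentially decaying bias).}

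The plan is to decompose the bias of the truncated estimator into the ``ideal'' telescoping bias (which would vanish if $N_{\max}=\infty$) plus a truncation remainder, and then bound the remainder by exploiting the $\Psi=0.5$ geometric tail together with a per-level bound on $|\Delta_n(V)|$. First I would recall the standard (untruncated) MLMC identity: if $N'=N$ always, then $\mathbb{E}[\hat\sigma_{\cp^a_s}(V)] = \sigma_{\hat{\kp}^{a,1}}(V)\text{-term} + \sum_{n\ge 0}\mathbb{E}[\Delta_n(V)] = \lim_{n\to\infty}\mathbb{E}\bigl[\sigma_{\hat{\kp}^{a}_{s,n+1}}(V)\bigr]$, because the level-$n$ increment $\Delta_n(V)=\sigma_{\hat{\kp}^a_{s,n+1}}(V)-\tfrac12[\sigma_{\hat{\kp}^{a,E}}(V)+\sigma_{\hat{\kp}^{a,O}}(V)]$ telescopes in expectation to the difference of consecutive level-averages. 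Then I would argue that $\mathbb{E}[\sigma_{\hat{\kp}^a_{s,n+1}}(V)]\to\sigma_{\cp^a_s}(V)$ as $n\to\infty$, so the untruncated estimator is unbiased. This step uses that the support function $p\mapsto\sigma_p(V)$ under the dual forms \eqref{eq:tv dual} and \eqref{eq:wd dual} is Lipschitz in $p$ (with constant controlled by $\|V\|_{\mathrm{sp}}$ for TV, via the $\|V-\mu\|_{\mathrm{sp}}$ penalization and optimal $\mu$, and by $\|V\|_{\mathrm{sp}}$ for Wasserstein) and that the empirical measure over $2^{n+1}$ samples concentrates.

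Next I would isolate the truncation error. Since $N'=\min\{N,N_{\max}\}$, the estimator returned by Algorithm~\ref{alg:sampling} divides $\Delta_{N'}(V)$ by $p'(n)=\mathbb{P}(N'=n)$ rather than by $\mathbb{P}(N=n)$; for $n<N_{\max}$ these agree ($=2^{-(n+1)}$), but at $n=N_{\max}$ we have $p'(N_{\max})=2^{-N_{\max}}=2\cdot 2^{-(N_{\max}+1)}$, i.e.\ twice the geometric weight. Taking the expectation, the contributions from levels $n<N_{\max}$ reproduce the partial telescoping sum $\mathbb{E}[\sigma_{\hat{\kp}^a_{s,N_{\max}+1}}(V)]$ minus the base term, and the $n=N_{\max}$ level contributes $\tfrac{\mathbb{E}[\Delta_{N_{\max}}(V)\mid N\ge N_{\max}]\,\mathbb{P}(N\ge N_{\max})}{2^{-N_{\max}}} = \mathbb{E}[\Delta_{N_{\max}}(V)]$ (here the truncation-doubled denominator exactly matches the doubled tail mass). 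Hence $\mathbb{E}[\hat\sigma_{\cp^a_s}(V)] = \mathbb{E}[\sigma_{\hat{\kp}^a_{s,N_{\max}+1}}(V)] + \mathbb{E}[\Delta_{N_{\max}}(V)]$, and so
\begin{equation}
\bigl|\mathbb{E}[\hat\sigma_{\cp^a_s}(V)-\sigma_{\cp^a_s}(V)]\bigr|
\le \bigl|\mathbb{E}[\sigma_{\hat{\kp}^a_{s,N_{\max}+1}}(V)]-\sigma_{\cp^a_s}(V)\bigr|
+ \bigl|\mathbb{E}[\Delta_{N_{\max}}(V)]\bigr|.
\end{equation}

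It remains to bound each term by $C\cdot 2^{-N_{\max}/2}\|V\|_{\mathrm{sp}}$. For the first term I would write $\sigma_{\cp^a_s}(V)=\lim_{n}\mathbb{E}[\sigma_{\hat{\kp}^a_{s,n+1}}(V)]$ as a telescoping series $\sum_{n\ge N_{\max}}\mathbb{E}[\sigma_{\hat{\kp}^a_{s,n+2}}(V)-\sigma_{\hat{\kp}^a_{s,n+1}}(V)]$ and bound each consecutive difference; for the second term I would bound $|\Delta_{N_{\max}}(V)|$ directly. The crucial quantitative estimate, which I expect to be the main obstacle, is a per-level bound of the form $\mathbb{E}|\sigma_{\hat{\kp}^a_{s,n+1}}(V)-\sigma_{p^\star}(V)| = O(2^{-n/2}\|V\|_{\mathrm{sp}})$ (and similarly $\mathbb{E}|\Delta_n(V)|=O(2^{-n/2}\|V\|_{\mathrm{sp}})$): this is where the $\sqrt{\cdot}$ in the exponent comes from, since an empirical mean of $2^{n+1}$ i.i.d.\ bounded terms deviates from its mean by $O(2^{-(n+1)/2})$ in $L^1$, and this must be combined with Lipschitzness of $\sigma_{(\cdot)}(V)$ in the distribution argument under \eqref{eq:tv dual}/\eqref{eq:wd dual}. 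For TV one must track the $(1+1/\delta)$ factor: the dual variable $\mu$ and multiplier can be taken bounded, but the effective Lipschitz constant of $p\mapsto\sigma_p(V)$ scales like $(1+1/\delta)\|V\|_{\mathrm{sp}}$ because of the constraint geometry of the TV ball, whereas for Wasserstein the $1$-Lipschitz structure of the inf-convolution $\inf_y(V(y)+\lambda d(\cdot,y)^l)$ gives a clean $\|V\|_{\mathrm{sp}}$ bound with no $\delta$ dependence. Summing the geometric series $\sum_{n\ge N_{\max}}2^{-n/2}$ yields a further $O(2^{-N_{\max}/2})$, and collecting the numerical constants gives the claimed $6(1+\tfrac1\delta)2^{-N_{\max}/2}\|V\|_{\mathrm{sp}}$ for TV and $6\cdot 2^{-N_{\max}/2}\|V\|_{\mathrm{sp}}$ for Wasserstein.
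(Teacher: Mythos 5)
Your proposal follows essentially the same route as the paper: both reduce the bias to the tail of the telescoping sum $\sum_{n>N_{\max}}\mathbb{E}[\Delta_n(V)]$ (the paper gets there by comparing the truncated estimator against the unbiased untruncated MLMC estimator, you by partially telescoping the truncated one directly), and both then combine the Lipschitz property of Lemma \ref{lem:LipschitzTV} with $L^1$ concentration of the empirical measure over $2^{n+1}$ samples to obtain the per-level $O\bigl(2^{-n/2}\|V\|_{\mathrm{sp}}\bigr)$ bound and sum the geometric series. The only blemish is a harmless off-by-one: the levels $n<N_{\max}$ telescope to $\mathbb{E}\bigl[\sigma_{\hat{\kp}^a_{s,N_{\max}}}(V)\bigr]$ minus the base term, so adding the $n=N_{\max}$ contribution yields $\mathbb{E}\bigl[\hat{\sigma}_{\cp^a_s}(V)\bigr]=\mathbb{E}\bigl[\sigma_{\hat{\kp}^a_{s,N_{\max}+1}}(V)\bigr]$ exactly, without the extra $+\,\mathbb{E}[\Delta_{N_{\max}}(V)]$ term, which in your bound merely inflates the constant.
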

Theorem \ref{thm:exp-bias} establishes that the bias of the truncated MLMC estimator decays exponentially with $N_{\max}$, ensuring that truncation does not significantly affect accuracy. This result follows from observing that the deviation introduced by truncation can be expressed as a sum of differences between support function estimates at different level, and each of which is controlled by the $\ell_1$-distance between transition distributions. Thus, we can use binomial concentration property to ensure the exponentially decaying bias.

The proof of Theorem \ref{thm:exp-bias} is in Appendix \ref{proof:exp-bias}. One important lemma used in the proof is the following Lemma \ref{lem:LipschitzTV}, where we show the Lipschitz property for both TV and Wasserstein distance uncertainty sets.
\begin{lemma}
\label{lem:LipschitzTV}
For any $p,q \in \Delta(\mathcal{S})$, let $\mathcal{P}_{TV}$ and $\mathcal{Q}_{TV}$ denote the TV distance uncertainty set with radius $\delta$ centering at $p$ and $q$ respectively, and let $\mathcal{P}_{W}$ and $\mathcal{Q}_{W}$ denote the Wasserstein distance uncertainty set with radius $\delta$ centering at $p$ and $q$ respectively. Then for any value function $V$, we have:
\begin{equation} \label{eq:TVWlipschitz} 
|\sigma_{\mathcal{P}_{TV}} (V) - \sigma_{\mathcal{Q}_{TV}} (V)| \leq (1+\frac{1}{\delta})\|V\|_{\mathrm{sp}}\|p-q\|_1 \; \text{and} \; |\sigma_{\mathcal{P}_{W}} (V) - \sigma_{\mathcal{Q}_{W}} (V)| \leq \|V\|_{\mathrm{sp}}\|p-q\|_1 
\end{equation}
\end{lemma}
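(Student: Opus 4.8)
\textbf{Proof proposal for Lemma \ref{lem:LipschitzTV}.}

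The plan is to handle the two uncertainty sets separately, in each case working from the explicit dual representations of the support function given in \eqref{eq:tv dual} and \eqref{eq:wd dual}, and exploiting the standard fact that the pointwise supremum of a family of functions that are uniformly Lipschitz in a parameter is itself Lipschitz in that parameter. For the Wasserstein case I would start from \eqref{eq:wd dual}: the inner expression $\mathbb{E}_{p}\big[\inf_y(V(y)+\lambda d(\cdot,y)^l)\big]$ is linear in the centering distribution $p$, so for a fixed $\lambda$ the difference between the $p$-version and the $q$-version is $(p-q)^\top h_\lambda$, where $h_\lambda(x)=\inf_y(V(y)+\lambda d(x,y)^l)$. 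The key observation is that $h_\lambda$ has span bounded by $\|V\|_{\mathrm{sp}}$: taking $\lambda$ large enough makes $h_\lambda$ close to $\min_y V(y)$, but more to the point, for any $x_1,x_2$ one has $h_\lambda(x_1)-h_\lambda(x_2)\le \max_y V(y)-\min_y V(y)$ by choosing the near-optimal $y$ for $x_2$ and using it as a feasible point for $x_1$ together with the nonnegativity of $d^l$ — actually one should be slightly careful and instead bound $|(p-q)^\top h_\lambda|$ by $\tfrac12\|h_\lambda\|_{\mathrm{sp}}\|p-q\|_1$ using that $p-q$ is a difference of probability vectors (so its entries sum to zero, and the bound against any vector $h$ is $\tfrac12\|h\|_{\mathrm{sp}}\|p-q\|_1$). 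Then, since $\sigma_{\mathcal{P}_W}(V)=\sup_{\lambda\ge 0}(-\lambda\delta^l + p^\top h_\lambda)$ and likewise for $q$, the difference of the two suprema is bounded by $\sup_{\lambda\ge0}|p^\top h_\lambda - q^\top h_\lambda|\le \tfrac12\sup_\lambda\|h_\lambda\|_{\mathrm{sp}}\|p-q\|_1\le \|V\|_{\mathrm{sp}}\|p-q\|_1$ (the factor $\tfrac12$ against the span is absorbed since $\|h_\lambda\|_{\mathrm{sp}}\le\|V\|_{\mathrm{sp}}$ gives exactly the stated constant $1$; I would double-check whether the intended constant already folds in this $\tfrac12$, and if the paper's convention for $\|\cdot\|_1$ on signed measures is the total-variation-as-sum convention then the $\tfrac12$ disappears).

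For the TV case I would do the analogous computation from \eqref{eq:tv dual}: $\sigma_{\mathcal{P}_{TV}}(V)=\max_{\mu\ge 0}\big(p^\top(V-\mu)-\delta\|V-\mu\|_{\mathrm{sp}}\big)$. Here the $\delta$-dependent penalty term $-\delta\|V-\mu\|_{\mathrm{sp}}$ does not involve the centering distribution, so for a fixed $\mu$ the difference between the $p$- and $q$-objectives is again $(p-q)^\top(V-\mu)$, and the difference of the maxima is at most $\sup_{\mu\ge 0}|(p-q)^\top(V-\mu)|$. The obstacle compared to the Wasserstein case is that $\mu$ ranges over the unbounded set $\{\mu\ge 0\}$, so $\|V-\mu\|$ is not a priori bounded and a naive $\tfrac12\|V-\mu\|_{\mathrm{sp}}\|p-q\|_1$ bound does not immediately give a $\mu$-free constant. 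The fix is to argue that one may restrict the maximization to $\mu$ with controlled span: if some coordinate $\mu(i)$ exceeds $V(i)-\min_s V(s)$, then increasing $(V-\mu)(i)$ back up to $\min_s(V-\mu)$ only increases the penalized objective (it raises $p^\top(V-\mu)$ and does not increase the span), so the optimal $\mu$ can be taken to satisfy $\|V-\mu\|_{\mathrm{sp}}\le\|V\|_{\mathrm{sp}}$ and $\min_s(V-\mu)(s)=\min_s V(s) - \text{(something)}$; more carefully, at the optimum $\|V-\mu\|_{\mathrm{sp}}$ can be taken $\le \|V\|_{\mathrm{sp}}$ while $p^\top(V-\mu)$ is then within $\|V\|_{\mathrm{sp}}$ of the relevant reference level. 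This is exactly where the extra factor $(1+\tfrac1\delta)$ enters: on the restricted set one bounds $|(p-q)^\top(V-\mu)|$ by splitting $V-\mu = (V-\mu - c\e) + c\e$, killing the constant against $p-q$, and bounding $\|V-\mu-c\e\|$ by the span, but the reduction to the restricted set costs a term proportional to $\delta$ times the span of the discarded part, which when compared against the penalty $-\delta\|\cdot\|_{\mathrm{sp}}$ produces the $1/\delta$ — I would reconstruct this bookkeeping carefully since it is the one genuinely delicate point.

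\textbf{Main obstacle.} The routine part is the Wasserstein bound, which is essentially linearity plus a span estimate on $h_\lambda$. The real work is the TV case: controlling the unbounded dual variable $\mu$ so that the Lipschitz constant is $\mu$-independent, and tracking exactly how the truncation of $\mu$ to a bounded span region interacts with the $\delta\|V-\mu\|_{\mathrm{sp}}$ penalty to yield the clean $(1+\tfrac1\delta)$ factor rather than something messier. I would first prove a standalone claim of the form ``the max in \eqref{eq:tv dual} is attained (or approached) at some $\mu^\star$ with $0\le\mu^\star\le V-\min_s V(s)\,\e$ componentwise,'' then plug two such near-optimizers $\mu^\star_p,\mu^\star_q$ into each other's objectives to sandwich $\sigma_{\mathcal{P}_{TV}}(V)-\sigma_{\mathcal{Q}_{TV}}(V)$, and finally bound the resulting cross terms using $|(p-q)^\top w|\le \tfrac12\|w\|_{\mathrm{sp}}\|p-q\|_1$ together with the span control on $V-\mu^\star$. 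A sanity check on the $\delta\to 1/2$ endpoint and on the $\delta\to 0$ blow-up (the uncertainty set collapses to the point $p$, yet tiny $\delta$ makes $\sigma$ very sensitive) confirms the $(1+1/\delta)$ scaling is the right shape.
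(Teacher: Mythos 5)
Your proposal is correct and follows the same skeleton as the paper's proof in both cases. For Wasserstein, you and the paper do the identical thing: for fixed $\lambda$ the objective is linear in the centering distribution, the inner function $\phi(s,\lambda)=\inf_y(V(y)+\lambda d(s,y)^l)$ satisfies $\min_y V(y)\le\phi(s,\lambda)\le V(s)$, and the difference of suprema is bounded by the supremum of the differences. For TV, your plan --- sandwich $\sigma_{\mathcal{P}_{TV}}(V)-\sigma_{\mathcal{Q}_{TV}}(V)$ by cross-evaluating the two near-optimizers $\mu^\star_p,\mu^\star_q$ (so the $\delta\|\cdot\|_{\mathrm{sp}}$ penalty cancels and only $(p-q)^\top(V-\mu^\star)$ survives), then control $\|V-\mu^\star\|_{\mathrm{sp}}$ --- is exactly the paper's structure. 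The one genuine divergence is the mechanism for the span control. The paper does not truncate the dual variable; it compares the optimal value against the feasible point $\min_s V(s)\cdot\mathbf{e}$ to get $M-\delta(M-m)\ge\min_s V(s)$ for $\mathbf{x}^*=V-\mu^\star$ with $M=\max\mathbf{x}^*$, $m=\min\mathbf{x}^*$, which yields $M-m\le\|V\|_{\mathrm{sp}}/\delta$ and hence $\|V-\mu^\star\|_{\mathrm{sp}}\le(1+\tfrac1\delta)\|V\|_{\mathrm{sp}}$; this is where the $(1+\tfrac1\delta)$ actually comes from. Your proposed standalone claim (that one may take $0\le\mu^\star\le V-\min_s V(s)\,\mathbf{e}$ componentwise) is in fact provable by a monotone-improvement argument: replacing $\mathbf{x}=V-\mu$ by $\max(\mathbf{x},\min_s V(s)\,\mathbf{e})$ stays feasible, weakly increases $p^\top\mathbf{x}$, and weakly decreases the span. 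That gives $\|V-\mu^\star\|_{\mathrm{sp}}\le\|V\|_{\mathrm{sp}}$ and hence a Lipschitz constant of $\|V\|_{\mathrm{sp}}$ (even $\tfrac12\|V\|_{\mathrm{sp}}$ with the zero-sum H\"older bound), which is strictly stronger than the stated $(1+\tfrac1\delta)\|V\|_{\mathrm{sp}}$ and certainly implies the lemma. So your bookkeeping speculation about the $(1+\tfrac1\delta)$ arising from a "cost of truncation" is misplaced --- under your own plan no such cost appears and the factor is not needed --- but this is a narrative inaccuracy, not a gap: your route proves the lemma (indeed a sharper version), while the paper's value-comparison route avoids the truncation lemma at the price of the extra $1/\delta$.
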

We refer the proof of Theorem \ref{thm:exp-bias} to Appendix \ref{proof:LipschitzTV}.

\begin{theorem}[Linear Variance]
\label{thm:linear-variance}
Let $\hat{\sigma}_{\cp^a_s}(V)$ be the estimator of ${\sigma}_{\cp^a_s}(V)$ obtained from Algorithm \ref{alg:sampling} then under TV distance uncertainty set, we have:
\begin{equation}
 \mathrm{Var}(\hat{\sigma}_{\cp^a_s}(V)) \leq  3\|V\|_{\mathrm{sp}}^2 + 144(1+\frac{1}{\delta})^2\|V\|_{\mathrm{sp}}^2 N_{\max}
\end{equation}
and under Wasserstein distance uncertainty set, we have:
\begin{equation}
 \mathrm{Var}(\hat{\sigma}_{\cp^a_s}(V)) \leq  3\|V\|_{\mathrm{sp}}^2 + 144\|V\|_{\mathrm{sp}}^2 N_{\max}
\end{equation}
\end{theorem}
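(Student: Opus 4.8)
\textbf{Proof proposal for Theorem \ref{thm:linear-variance}.}

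The plan is to bound $\mathrm{Var}(\hat{\sigma}_{\cp^a_s}(V)) \le \E[\hat{\sigma}_{\cp^a_s}(V)^2]$ by conditioning on the truncated level $N'$ and using the law of total expectation, exactly mirroring the structure that produced the linear sample complexity in Theorem \ref{thm:sample-complexity}. Write $\hat{\sigma}_{\cp^a_s}(V) = \sigma_{\hat{\kp}^{a,1}_{s,N'+1}}(V) + \Delta_{N'}(V)/p'(N')$ where $p'(n) = \mathbb{P}(N'=n)$. The first term is a support function evaluated at a point mass, so $|\sigma_{\hat{\kp}^{a,1}_{s,N'+1}}(V)| \le \|V\|_{\mathrm{sp}}$ (after the standard shift-invariance reduction, since the support function only sees $V$ up to an additive constant), contributing an $\mathcal{O}(\|V\|_{\mathrm{sp}}^2)$ term. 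For the correction term, I would first condition on $N'=n$ and write
\begin{equation}
\E\left[\left(\frac{\Delta_{n}(V)}{p'(n)}\right)^2 \;\Big|\; N'=n\right] = \frac{\E[\Delta_n(V)^2 \mid N'=n]}{p'(n)^2},
\end{equation}
then sum against $p'(n)$ to get $\sum_n \E[\Delta_n(V)^2 \mid N'=n]/p'(n)$. The key is to bound $\E[\Delta_n(V)^2 \mid N'=n]$ by something decaying like $2^{-n}$, so that against $1/p'(n) \sim 2^{n+1}$ (for $n < N_{\max}$) each term is $\mathcal{O}(1)$ and the sum over the $N_{\max}+1$ levels gives the linear-in-$N_{\max}$ bound.

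The heart of the argument is controlling $\E[\Delta_n(V)^2 \mid N'=n]$, where $\Delta_n(V) = \sigma_{\hat{\kp}^{a}_{s,n+1}}(V) - \tfrac12[\sigma_{\hat{\kp}^{a,E}_{s,n+1}}(V) + \sigma_{\hat{\kp}^{a,O}_{s,n+1}}(V)]$. Here I would invoke Lemma \ref{lem:LipschitzTV}: viewing $\hat{\kp}^{a}_{s,n+1}$, $\hat{\kp}^{a,E}_{s,n+1}$, $\hat{\kp}^{a,O}_{s,n+1}$ as empirical distributions centered at the same nominal $\tilde{\kp}^a_s$, the Lipschitz property bounds each pairwise difference $|\sigma_{\hat{\kp}}(V) - \sigma_{\hat{\kp}'}(V)|$ by $(1+\tfrac1\delta)\|V\|_{\mathrm{sp}}\|\hat{\kp} - \hat{\kp}'\|_1$ in the TV case (and without the $(1+\tfrac1\delta)$ factor in the Wasserstein case). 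Since $\hat{\kp}^{a}_{s,n+1} = \tfrac12(\hat{\kp}^{a,E}_{s,n+1} + \hat{\kp}^{a,O}_{s,n+1})$, we get $\Delta_n(V) = \tfrac12[(\sigma_{\hat{\kp}^{a}_{s,n+1}}(V) - \sigma_{\hat{\kp}^{a,E}_{s,n+1}}(V)) - (\sigma_{\hat{\kp}^{a,O}_{s,n+1}}(V) - \sigma_{\hat{\kp}^{a}_{s,n+1}}(V))]$, hence $|\Delta_n(V)| \le (1+\tfrac1\delta)\|V\|_{\mathrm{sp}} \cdot \tfrac12(\|\hat{\kp}^{a}_{s,n+1} - \hat{\kp}^{a,E}_{s,n+1}\|_1 + \|\hat{\kp}^{a,O}_{s,n+1} - \hat{\kp}^{a}_{s,n+1}\|_1)$. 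Then I need $\E[\|\hat{\kp}^{a,E}_{s,n+1} - \hat{\kp}^{a,O}_{s,n+1}\|_1^2]$: both are empirical means of $2^n$ i.i.d. indicator vectors drawn from $\tilde{\kp}^a_s$, so their difference has coordinatewise variance $\mathcal{O}(2^{-n})$, and summing the $\ell_1$-norm-squared over the (at most $S$, but really effectively bounded) support via Cauchy–Schwarz or a direct multinomial variance computation yields $\E[\|\hat{\kp}^{a,E}_{s,n+1} - \hat{\kp}^{a,O}_{s,n+1}\|_1^2] = \mathcal{O}(2^{-n})$. This is the same $2^{-n}$ decay that appears in the bias proof of Theorem \ref{thm:exp-bias}, and combining it with $1/p'(n) = 2^{n+1}$ gives a per-level contribution of $\mathcal{O}((1+\tfrac1\delta)^2\|V\|_{\mathrm{sp}}^2)$; summing over $n = 0, \dots, N_{\max}$ produces the $144(1+\tfrac1\delta)^2 \|V\|_{\mathrm{sp}}^2 N_{\max}$ term, with the constant $144$ absorbing the factors of $2$, the $\tfrac12$'s squared, and the $\ell_1$-variance constants.

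\textbf{Main obstacle.} The delicate point is the truncation level $n = N_{\max}$ itself: there $p'(N_{\max}) = 2^{-N_{\max}}$ but $\Delta_{N_{\max}}(V)$ still only has second moment $\mathcal{O}(2^{-N_{\max}})$, so $\E[\Delta_{N_{\max}}(V)^2]/p'(N_{\max})^2 = \mathcal{O}(2^{-N_{\max}} \cdot 2^{2N_{\max}}) = \mathcal{O}(2^{N_{\max}})$ — which looks like it blows up. The resolution is that this term is weighted by $p'(N_{\max}) = 2^{-N_{\max}}$ in the outer expectation $\sum_n p'(n) \cdot \E[(\Delta_n/p'(n))^2 \mid N'=n]$, leaving $\mathcal{O}(1)$ after all; one must be careful to keep the $p'(n)$ weight and not accidentally drop it when passing from "variance of the estimator" to "sum over levels." A secondary bookkeeping issue is making the $\ell_1$-variance bound $\E[\|\hat{\kp}^{a,E} - \hat{\kp}^{a,O}\|_1^2] \le c\, 2^{-n}$ hold with a dimension-free constant $c$ — this works because $\sum_x \tilde{\kp}^a_s(x)(1-\tilde{\kp}^a_s(x)) \le 1$ regardless of $S$, so a direct variance computation (rather than a crude Cauchy–Schwarz through $\sqrt{S}$) is what keeps the final bound free of any $S$ dependence, matching the stated form.
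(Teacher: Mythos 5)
Your proposal follows essentially the same route as the paper's proof: bound the variance by the second moment, split the estimator into the single-sample term (bounded by $\|V\|_{\mathrm{sp}}$) and the correction $\Delta_{N'}(V)/\mathbb{P}(N'=n)$, condition on the level so that one power of $\mathbb{P}(N'=n)$ cancels, and control the per-level term by combining Lemma \ref{lem:LipschitzTV} with the $\ell_1$ concentration of the empirical distributions, so that the $2^{-n}$ decay cancels the remaining $1/\mathbb{P}(N'=n)\approx 2^{n+1}$ and the sum over levels is linear in $N_{\max}$. Your treatment of the top level $n=N_{\max}$ is correct and is exactly how the paper's sum behaves. One place where your route is cleaner: you bound $\mathbb{E}[\Delta_n(V)^2\mid N'=n]$ directly, whereas the paper replaces $\mathbb{E}[\Delta_n(V)^2]$ by $(\mathbb{E}[|\Delta_n(V)|])^2$, which is Jensen applied in the wrong direction; your version is the one that actually closes that step.

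The genuine gap is the claimed dimension-free bound $\mathbb{E}\bigl[\|\hat{\kp}^{a,E}_{s,n+1}-\hat{\kp}^{a,O}_{s,n+1}\|_1^2\bigr]=\mathcal{O}(2^{-n})$. Writing $p=\tilde{\kp}^a_s$ and $D=\hat{\kp}^{a,E}_{s,n+1}-\hat{\kp}^{a,O}_{s,n+1}$, the identity $\sum_x p(x)(1-p(x))\le 1$ controls only $\sum_x\mathbb{E}[D_x^2]$, i.e.\ the expected squared $\ell_2$ norm. For $\mathbb{E}\|D\|_1^2$ you must also handle the cross terms $\sum_{x\neq y}\mathbb{E}[|D_x||D_y|]$, and by Cauchy--Schwarz these contribute up to $\tfrac{2}{2^n}\bigl(\sum_x\sqrt{p(x)}\bigr)^2$, which is $\Theta(S\,2^{-n})$ in the worst case (e.g.\ the uniform nominal distribution, for which $\mathbb{E}\|D\|_1\asymp\sqrt{S/2^n}$). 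So the per-level contribution genuinely carries a factor of $S$, and no direct multinomial computation removes it. This is consistent with the paper itself: its appendix derivation of this theorem arrives at $2\|V\|_{\mathrm{sp}}^2+144(1+\tfrac1\delta)^2 S\|V\|_{\mathrm{sp}}^2 N_{\max}$, and the corresponding bias bound acquires a $\sqrt{S}$ when invoked in Appendix \ref{proof:VGresults}, even though the displayed theorem statement omits the $S$. Your argument, once the $S$ is correctly inserted, establishes the same bound the paper actually proves; the specific step you rely on to match the $S$-free statement does not go through.
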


Theorem \ref{thm:linear-variance} establishes that the variance of the truncated MLMC estimator grows linearly with $N_{\max}$, ensuring that the estimator remains stable even as the truncation level increases.
The proof of Theorem \ref{thm:linear-variance} is in Appendix \ref{proof:linear-variance}, which follows from bounding the second moment of the estimator by analyzing the variance decomposition 
across different MLMC levels. Specifically, by expressing the estimator in terms of successive refinements of the transition model, we show that the variance accumulates additively across levels due to the binomial concentration property.
\section{Robust Average-Reward TD Learning} \label{robustTD}
Equipped with the methods of constructing $\hat{\sigma}_{\cp^a_s}(V)$ for all $s \in \mathcal{S}$ and $a \in \mathcal{A}$, we now present the formal algorithm for robust policy evaluation and robust average reward for a given policy $\pi$ in Algorithm \ref{alg:RobustTD}. Algorithm \ref{alg:RobustTD} presents a robust temporal difference (TD) learning method for policy evaluation in robust average-reward MDPs. This algorithm builds upon the truncated MLMC estimator (Algorithm~\ref{alg:sampling}) and the biased stochastic approximation framework in Section \ref{seminormcontractionwithbias}, ensuring both efficient 
sample complexity and finite-time convergence guarantees.

The algorithm is divided into two main phases. The first phase (Lines 1-7) estimates the robust value function. The noisy Bellman operator is computed using the estimator $\hat{\sigma}_{\mathcal{P}_s^a}(V_t)$ obtained depending on the uncertainty set type. Then the iterative update follows a stochastic approximation scheme with stepsize $\eta_t$, ensuring convergence while maintaining stability. Finally, the value function is centered at an anchor state $s_0$ to remove the ambiguity due to its additive invariance. The second phase (Lines 8-14) estimates the robust average reward by utilizing $V_T$ from the output of the first phase. The expected Bellman residual  $\delta_t(s)$ is computed across all states and averaging it to obtain $\bar{\delta}_t$. A separate stochastic approximation update with stepsize $\beta_t$ is then applied to refine $g_t$, ensuring convergence to the robust worst-case average reward. By combining these two phases, Algorithm~\ref{alg:RobustTD} provides an efficient and provably 
convergent method for robust policy evaluation under average-reward criteria, marking 
a significant advancement over prior methods that only provided asymptotic guarantees. 

\begin{algorithm}[htb]
\caption{Robust Average-Reward TD}
\label{alg:RobustTD}
\textbf{Input}: Policy $\pi$, Initial values $V_0$, $g_0=0$, Stepsizes $\eta_t$, $\beta_t$, Max level $N_{\max}$, Anchor state $s_0\in\mcs$
\begin{algorithmic}[1] 
\For {$t = 0,1,\ldots, T-1$}
\For {each $(s,a)\in\mcs\times\mca$} 
\If {Contamination} Sample $\hat{\sigma}_{\cp^a_s}(V_t)$ according to \eqref{eq:contaminationquery}
\ElsIf{TV or Wasserstein} Sample $\hat{\sigma}_{\cp^a_s}(V_t)$ according to Algorithm \ref{alg:sampling}
\EndIf
\EndFor
\State $\hat{\mathbf{T}}_{g_0}(V_t)(s) \leftarrow \sum_{a} \pi(a|s) \big[ r(s,a) - g_0 +  \hat{\sigma}_{\cp^a_s}(V_t) \big], \quad \forall s \in \mathcal{S}$
\State  $V_{t+1}(s) \leftarrow V_t(s) + \eta_t \left( \hat{\mathbf{T}}_{g_0}(V_t)(s) - V_t(s) \right), \quad \forall s \in \mathcal{S}$
\State  $V_{t+1}(s) = V_{t+1}(s) - V_{t+1}(s_0), \quad \forall s \in \mathcal{S}$
\EndFor
\For {$t = 0,1,\ldots, T-1$}
\For {each $(s,a)\in\mcs\times\mca$} 
\If {Contamination} Sample $\hat{\sigma}_{\cp^a_s}(V_t)$ according to \eqref{eq:contaminationquery}
\ElsIf{TV or Wasserstein} Sample $\hat{\sigma}_{\cp^a_s}(V_t)$ according to Algorithm \ref{alg:sampling}
\EndIf
\EndFor
\State $\hat{\delta}_t(s) \leftarrow \sum_{a}\pi(a|s) \big[ r(s,a) +  \hat{\sigma}_{\cp^a_s}(V_T) \big]- V_T(s)  , \quad \forall s \in \mathcal{S}$
\State $\bar{\delta}_t \leftarrow \frac{1}{S}\sum_s \hat{\delta}_t(s)$
\State $g_{t+1} \leftarrow g_t + \beta_t(\bar{\delta}_t-g_t)$
\EndFor \quad
\Return $V_T$, $g_T$
\end{algorithmic}
\end{algorithm}

To derive the sample complexity of robust policy evaluation, we utilize the semi-norm contraction property of the Bellman operator in Theorem \ref{thm:robust_seminorm-contraction}, and fit Algorithm \ref{alg:RobustTD} into the general biased stochastic approximation result in Theorem \ref{thm:informalbiasedSA} while incorporating the bias analysis characterized in Section \ref{QueriesSection}. Since each phase of Algorithm \ref{alg:RobustTD} contains a loop of length $T$ with all the states and actions updated together, the total samples needed for the entire algorithm in expectation is $2SAT \E[N_{\max}]$, where $\E[N_{\max}]$ is one for contamination uncertainty sets and is $\cO(N_{\max})$ from Theorem \ref{thm:sample-complexity} for TV and Wasserstein distance uncertainty sets.

\begin{theorem} \label{thm:Vresult}
   If $V_t$ is generated by Algorithm \ref{alg:RobustTD} and satisfying Assumption \ref{ass:sameg}, then if the stepsize $\eta_t \coloneqq \cO(\frac{1}{t})$, we require a sample complexity of $\cO\left(\frac{SAt^2_{\mathrm{mix}}}{\epsilon^2(1-\gamma)^2} \right)$ for contamination uncertainty set and a sample complexity of $\tilde{\cO}\left(\frac{SAt^2_{\mathrm{mix}}}{\epsilon^2(1-\gamma)^2} \right)$ for TV and Wasserstein distance uncertainty set to ensure an $\epsilon$ convergence of $V_T$. Moreover, these results are order-optimal in terms of $\epsilon$.
\end{theorem}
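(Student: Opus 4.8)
\textbf{Proof proposal for Theorem \ref{thm:Vresult}.}

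The plan is to instantiate the general biased stochastic approximation theorem (Theorem \ref{thm:informalbiasedSA}) with the specific noise and bias quantities arising in Phase 1 of Algorithm \ref{alg:RobustTD}, and then convert the resulting per-iteration error bound into a sample-complexity statement. First I would set up the update $V_{t+1} = V_t + \eta_t(\hat{\mathbf T}_{g_0}(V_t) - V_t)$ as a stochastic approximation recursion $V_{t+1} = V_t + \eta_t(\mathbf T_{g_0}(V_t) - V_t + b_t + w_t)$, where $b_t = \E[\hat{\mathbf T}_{g_0}(V_t)(\cdot) - \mathbf T_{g_0}(V_t)(\cdot) \mid \mathcal F_t]$ is the conditional bias and $w_t$ is the mean-zero noise. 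The operator $\mathbf T_{g_0}$ is a $\gamma$-contraction in $\|\cdot\|_{\cp}$ by Theorem \ref{thm:robust_seminorm-contraction}, so its (unique up to constants) fixed point $V^\star$ exists; the anchoring step $V_{t+1}\leftarrow V_{t+1}-V_{t+1}(s_0)$ is invariant under the semi-norm since $\|\cdot\|_{\cp}$ kills constants, so it does not affect the contraction analysis but pins down a canonical representative for stating the $\epsilon$-accuracy. I would then quote the bias and variance control: for contamination, the estimator \eqref{eq:contaminationquery} is unbiased with $\mathrm{Var}\le\|V_t\|^2$; for TV and Wasserstein, Theorem \ref{thm:exp-bias} gives $\|b_t\|_\infty = \mathcal O(2^{-N_{\max}/2}\|V_t\|_{\mathrm{sp}})$ and Theorem \ref{thm:linear-variance} gives variance $\mathcal O(N_{\max}\|V_t\|_{\mathrm{sp}}^2)$. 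A preliminary uniform bound $\|V_t\|_{\mathrm{sp}}\le C$ (independent of $t$) is needed so these become absolute constants; this follows from a standard stability/boundedness argument for the iterates under $\eta_t=\mathcal O(1/t)$, which I would establish first (e.g., by induction using the contraction plus bounded rewards).

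Next I would feed these into Theorem \ref{thm:informalbiasedSA} with stepsize $\eta_t = \Theta(1/t)$. The typical output of such a theorem is a bound of the form $\E\|V_T - V^\star\|_{\cp}^2 \lesssim \frac{t_{\mathrm{mix}}^2 \,\sigma^2}{(1-\gamma)^2 T} + \frac{\text{bias}^2}{(1-\gamma)^2}$, where $\sigma^2$ is the variance proxy of the noise and the $t_{\mathrm{mix}}^2$ factor enters because the effective "strong monotonicity" constant of the semi-norm contraction degrades with the mixing time of the nominal chain (the semi-norm $\|\cdot\|_{\cp}$ is equivalent to, say, $\|\cdot\|_\infty$ only up to factors controlled by $t_{\mathrm{mix}}$, so translating the $\|\cdot\|_{\cp}$-contraction into an $\epsilon$-guarantee in a fixed reference norm costs $t_{\mathrm{mix}}$ twice). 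To reach $\|V_T-V^\star\|\le\epsilon$ I would (i) choose $N_{\max} = \Theta(\log(1/\epsilon))$ so the bias term is $\le\epsilon/2$ — this is the exponentially-decaying-bias payoff — and (ii) choose $T = \Theta\!\big(\frac{t_{\mathrm{mix}}^2}{\epsilon^2(1-\gamma)^2}\big)$ so the variance term is $\le\epsilon/2$. Multiplying $T$ by the per-iteration sample cost $2SA\,\E[M]$ — which is $2SA$ for contamination (Theorem \ref{thm:sample-complexity} with $\E[M]=1$) and $2SA\cdot\mathcal O(N_{\max}) = \tilde{\mathcal O}(SA)$ for TV/Wasserstein — yields the claimed $\mathcal O\!\big(\frac{SA t_{\mathrm{mix}}^2}{\epsilon^2(1-\gamma)^2}\big)$ and $\tilde{\mathcal O}\!\big(\frac{SA t_{\mathrm{mix}}^2}{\epsilon^2(1-\gamma)^2}\big)$ respectively, where the extra polylog in $\epsilon$ for the latter comes from the $N_{\max}$ factor.

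For order-optimality in $\epsilon$ I would invoke the standard $\Omega(\epsilon^{-2})$ information-theoretic lower bound for policy evaluation / mean estimation via stochastic samples (already $\Omega(\epsilon^{-2})$ in the non-robust average-reward case, e.g. the lower bounds underlying \cite{zhang2021finite}), noting that robust policy evaluation is at least as hard since it specializes to the non-robust case when the uncertainty radius is zero; hence no algorithm drawing samples from the nominal model can do better than $\epsilon^{-2}$, matching our upper bound up to logarithmic factors.

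The main obstacle I anticipate is step (i)–(ii)'s dependence on \emph{controlling the interaction between the semi-norm geometry and the bias}. Theorem \ref{thm:informalbiasedSA} is naturally stated in $\|\cdot\|_{\cp}$, but the bias bounds from Theorem \ref{thm:exp-bias} are in $\|\cdot\|_{\mathrm{sp}}$ and the final $\epsilon$-accuracy should presumably be in a norm-independent (or sup-norm-on-the-quotient) sense; bridging these requires norm-equivalence constants, and tracking how those constants depend on $t_{\mathrm{mix}}$ and $\gamma$ is exactly what produces the $t_{\mathrm{mix}}^2/(1-\gamma)^2$ factor — getting that dependence tight (rather than, say, an extra power of $t_{\mathrm{mix}}$) is the delicate part. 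A secondary subtlety is the a priori boundedness of $\|V_t\|_{\mathrm{sp}}$, which must be shown \emph{before} the variance bounds can be treated as constants, so the argument has to be organized so that this does not become circular — I would handle it by first proving a crude deterministic-in-expectation bound on $\E\|V_t\|_{\mathrm{sp}}^2$ using the contraction and the (level-dependent but finite) variance, then bootstrapping.
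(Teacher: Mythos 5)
Your proposal is correct and follows essentially the same route as the paper's proof: cast Phase~1 of Algorithm~\ref{alg:RobustTD} as the biased SA recursion, feed the bias/variance bounds of Theorems~\ref{thm:exp-bias}--\ref{thm:linear-variance} (and the unbiasedness for contamination) into the formal version of Theorem~\ref{thm:informalbiasedSA}, pick $N_{\max}=\Theta(\log(\cdot/\epsilon))$ and $T=\tilde\Theta\bigl(t_{\mathrm{mix}}^2\epsilon^{-2}(1-\gamma)^{-2}\bigr)$, multiply by the per-iteration cost $2SA\,\E[M]$ from Theorem~\ref{thm:sample-complexity}, and reduce to Bernoulli mean estimation for the $\Omega(\epsilon^{-2})$ lower bound (the paper makes this concrete with a two-state chain and a Cram\'er--Rao argument). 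Two details differ from the paper. First, where you propose a bootstrapped stability induction to bound $\|V_t\|_{\mathrm{sp}}$ uniformly, the paper instead invokes Lemma~\ref{lem:wanglemma9} to bound the span of value functions by $4t_{\mathrm{mix}}$ under Assumption~\ref{ass:sameg}, which lets it set $B=0$ in the SA theorem and sidestep the circularity you flag; your route would also work (it corresponds to keeping $B\neq 0$), just at the cost of more bookkeeping. Second, you attribute the $t_{\mathrm{mix}}^2$ factor to the equivalence constants between $\|\cdot\|_{\cp}$ and $\|\cdot\|_\infty$ degrading with mixing time; in the paper those constants $c_{\cp},C_{\cp}$ (Lemma~\ref{lem:normtranslations}) are treated as absolute, and $t_{\mathrm{mix}}^2$ enters solely through the variance proxy $A=\cO(t_{\mathrm{mix}}^2 S N_{\max})$, i.e., because the estimator's variance scales with $\|V\|_{\mathrm{sp}}^2\lesssim t_{\mathrm{mix}}^2$. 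Neither difference breaks your argument, but the second is a mischaracterization of where the $t_{\mathrm{mix}}$ dependence actually originates.
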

\begin{theorem} \label{thm:gresult}
    If $g_t$ is generated by Algorithm \ref{alg:RobustTD} and satisfying Assumption \ref{ass:sameg}, then if the stepsize $\beta_t \coloneqq \cO(\frac{1}{t})$, we require a sample complexity of $\tilde{\cO}\left(\frac{SAt^2_{\mathrm{mix}}}{\epsilon^2(1-\gamma)^2} \right)$ for all contamination, TV, and  Wasserstein distance uncertainty set to ensure an $\epsilon$ convergence of $g_T$.
\end{theorem}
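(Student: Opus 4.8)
\textbf{Proof proposal for Theorem \ref{thm:gresult}.}

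The plan is to treat the second phase of Algorithm \ref{alg:RobustTD} as a one-dimensional biased stochastic approximation recursion driven by the already-converged value iterate $V_T$, and to reduce its analysis to the same biased-SA machinery of Theorem \ref{thm:informalbiasedSA} that governs the first phase. First I would fix the output $V_T$ of phase one and write the update as $g_{t+1}=g_t+\beta_t(\bar\delta_t-g_t)$, where $\bar\delta_t=\frac{1}{S}\sum_s\big(\sum_a\pi(a|s)[r(s,a)+\hat\sigma_{\cp^a_s}(V_T)]-V_T(s)\big)$. The conditional expectation of $\bar\delta_t$ given $V_T$ equals $h(V_T):=\frac{1}{S}\sum_s\big(\sum_a\pi(a|s)[r(s,a)+\sigma_{\cp^a_s}(V_T)]-V_T(s)\big)$ up to the estimator bias from Theorem \ref{thm:exp-bias}, which is $\mathcal O((1+1/\delta)2^{-N_{\max}/2}\|V_T\|_{\mathrm{sp}})$, so the recursion is a contraction toward $h(V_T)$ with rate $1-\beta_t$ plus a bounded-variance noise term (variance $\mathcal O(\|V_T\|_{\mathrm{sp}}^2 N_{\max})$ by Theorem \ref{thm:linear-variance}) and a small additive bias.

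Next I would establish that $h(V_T)$ is close to the target $g^\pi_\cp$. Observe that $h(V)=\frac{1}{S}\sum_s(\mathbf T_0(V)(s)-V(s))=\frac{1}{S}\mathbf e^\top(\mathbf T_0(V)-V)$, and that the exact robust Bellman equation \eqref{eq:bellman} gives $\mathbf T_{g^\pi_\cp}(V^\star)=V^\star$, i.e. $\mathbf T_0(V^\star)-V^\star=g^\pi_\cp\mathbf e$, hence $h(V^\star)=g^\pi_\cp$. Therefore $|h(V_T)-g^\pi_\cp|=\frac{1}{S}|\mathbf e^\top(\mathbf T_0(V_T)-V_T-\mathbf T_0(V^\star)+V^\star)|\le \|\mathbf T_0(V_T)-\mathbf T_0(V^\star)\|_\infty+\|V_T-V^\star\|_\infty$, and each term is controlled by $\|V_T-V^\star\|_{\cp}$ (hence by $\|V_T-V^\star\|_{\mathrm{sp}}$ up to norm-equivalence constants) using the nonexpansiveness of $\sigma_{\cp^a_s}$ in sup-norm-type quantities and the fact that $h$ is shift-invariant so only the span component of $V_T-V^\star$ matters. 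By Theorem \ref{thm:Vresult}, after $T$ phase-one iterations $\|V_T-V^\star\|_{\cp}=\mathcal O(\epsilon)$, so $|h(V_T)-g^\pi_\cp|=\mathcal O(\epsilon)$.

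I would then invoke the scalar specialization of Theorem \ref{thm:informalbiasedSA} (contraction factor $0$ in the "semi-norm" since $g$ is a scalar, or equivalently a standard Robbins–Monro bound): with $\beta_t=\mathcal O(1/t)$ over $T$ steps, $\mathbb E|g_T-h(V_T)|^2=\mathcal O\big(\frac{\|V_T\|_{\mathrm{sp}}^2 N_{\max}}{T}\big)+ (\text{bias})^2$, where the bias is the $2^{-N_{\max}/2}$ term. Combining with the triangle inequality $|g_T-g^\pi_\cp|\le|g_T-h(V_T)|+|h(V_T)-g^\pi_\cp|$ and choosing $N_{\max}=\Theta(\log(1/\epsilon))$ to kill the MLMC bias, I need $T=\tilde{\mathcal O}(\frac{t_{\mathrm{mix}}^2}{\epsilon^2(1-\gamma)^2})$ phase-two iterations (the $t_{\mathrm{mix}}^2/(1-\gamma)^2$ factor enters through the norm-equivalence constants relating $\|\cdot\|_{\cp}$, $\|\cdot\|_{\mathrm{sp}}$, and the conditioning of $Q^\pi_\kp$, and through the $\mathcal O(\epsilon)$ accuracy already required of $V_T$). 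Multiplying by the per-step cost $2SA\,\mathbb E[N_{\max}]=\tilde{\mathcal O}(SA)$ for all three uncertainty sets gives the claimed $\tilde{\mathcal O}\big(\frac{SA\,t_{\mathrm{mix}}^2}{\epsilon^2(1-\gamma)^2}\big)$. The main obstacle I anticipate is the error-propagation step: carefully tracking how the $\|\cdot\|_{\cp}$-error on $V_T$ translates into an error on the scalar functional $h(V_T)$ with the right dependence on $t_{\mathrm{mix}}$ and $1-\gamma$, since $\|\cdot\|_{\cp}$ is an abstract extremal-norm construction and its equivalence constants with the span semi-norm must be made explicit enough to land the stated rate; a secondary subtlety is that $V_T$ is random and correlated with nothing in phase two only because fresh samples are drawn, so the conditioning argument must be stated cleanly.
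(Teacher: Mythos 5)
Your proposal matches the paper's proof in all essentials: the same decomposition $|g_T-g^\pi_\cp|\le|g_T-\bar\delta(V_T)|+|\bar\delta(V_T)-g^\pi_\cp|$ (your $h$ is the paper's $\bar\delta$), the same identity $\bar\delta(V^*)=g^\pi_\cp$ from the robust Bellman equation, the same Lipschitz bound reducing $|\bar\delta(V_T)-g^\pi_\cp|$ to $\|V_T-V^*\|_\infty$ via Theorem \ref{thm:Vresult}, and the same scalar Robbins--Monro recursion with $\beta_t=\cO(1/t)$ and $N_{\max}=\Theta(\log(1/\epsilon))$. The only cosmetic difference is that you run a second-moment analysis of the scalar recursion (framed as a degenerate case of Theorem \ref{thm:informalbiasedSA}) whereas the paper unrolls a first-moment recursion directly, picking up the $\log T$ factor from $\sum_t\beta_t$; both land the same $\tilde{\cO}(SAt_{\mathrm{mix}}^2\epsilon^{-2}(1-\gamma)^{-2})$ complexity.
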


The formal version of Theorems \ref{thm:Vresult} and \ref{thm:gresult} along with the proofs are in Appendix \ref{proof:VGresults}. Theorem \ref{thm:Vresult} provides the order-optimal sample complexity 
of $\tilde{\cO}(\epsilon^{-2})$ for Algorithm \ref{alg:RobustTD} to achieve an $\epsilon$-accurate estimate of $V_T$. Although Theorem \ref{thm:Vresult} claims order-optimal in terms of $\epsilon$, we do not claim tightness in $S$, $A$ and $\gamma$, and treat sharpening these dependencies as open. The proof of Theorem \ref{thm:gresult} extends the analysis of Theorem \ref{thm:Vresult} to robust average reward estimation. The key difficulty lies in controlling the propagation of error from value function estimates to reward estimation. By again leveraging the contraction property and appropriately tuning stepsizes, we establish an $\tilde{\cO}(\epsilon^{-2})$ complexity bound for robust average reward estimation.

\section{Conclusion}
This paper provides the first finite-sample analysis for policy evaluation in robust average-reward MDPs, 
bridging a gap where only asymptotic guarantees existed. By introducing a biased stochastic approximation framework and leveraging the properties of various uncertainty sets, we establish finite-time convergence under biased noise. Our algorithm achieves an order-optimal sample complexity of $\tilde{\mathcal{O}}(\epsilon^{-2})$ for policy evaluation, despite the added complexity of robustness.

A crucial step in our analysis is proving that the robust Bellman operator is contractive under our constructed semi-norm $\|\cdot\|_\cp$, ensuring the validity of stochastic approximation updates. We further develop a truncated 
Multi-Level Monte Carlo estimator that efficiently computes worst-case value functions under total variation and Wasserstein uncertainty, while keeping bias and variance controlled. One limitation of this work is that the results require ergodicity to hold in the setting, as stated in Assumption \ref{ass:sameg}. Additionally, scaling the algorithm and results in the paper via function approximations remains an important open problem.
\section*{Acknowledgments}
We would like to thank Zaiwei Chen of Purdue University for assistance with identifying relevant literature and for constructive feedback. We also thank Zijun Chen and Nian Si of The Hong Kong University of Science and Technology for helpful discussions regarding Appendix~\ref{seminormcontraction}. Finally, we thank the anonymous reviewers for insightful comments that substantially improved the paper.

\bibliographystyle{plain}
\bibliography{main}

\newpage

\appendix
\onecolumn
\appendix

\section{Semi-Norm Contraction Property of the Bellman Operator} \label{seminormcontraction}

\subsection{Proof of Lemma \ref{lem:seminorm-contraction}} \label{proofspan-contraction}

For any $V_1, V_2 \in \mathbb{R}^{S}$ and define $\Delta = V_1 - V_2$.  Denote $\kp^\pi$ as the transition matrix under policy $\pi$ and the unique stationary distribution $d^\pi$,  and denote $E$ as the matrix with all rows being identical to $d^\pi$. We further define $Q^\pi = \kp^\pi - E$. Thus, we would have,
\begin{equation} \label{eq:non-robust_first_step}
    \mathbf{T}_g^\kp(V_1)(s)- \mathbf{T}_g^\kp(V_2)(s)=\sum_{s' \in \mathcal{S}}
\kp^\pi(s'| s)
\bigl[V_1(s') - V_2(s')\bigr]=\kp^\pi \,\Delta (s).
\end{equation}
which implies
\begin{equation} \label{eq:PQE}
    \mathbf{T}_g^\kp(V_1) - \mathbf{T}_g^\kp(V_2)
= \kp^\pi \Delta  = Q^\pi \Delta + E \Delta
\end{equation}
We now discuss the detailed construction of the semi-norm $\|\cdot\|_\kp$.
Since $\kp^\pi$ is ergodic, according to the Perron–Frobenius theorem, $\kp^\pi$ has an eigenvalue $\lambda_1=1$ of algebraic multiplicity exactly one, with corresponding right eigenvector $\e$. Moreover, all other eigenvalues  $\lambda_2 \geq \ldots \geq \lambda_S$ of $\kp^\pi$ satisfies $|\lambda_i| <1$ for all $i \in \{2,\ldots,S\}$.
\begin{lemma} \label{lem:Qspectrum}
    All eigenvalues of $Q^\pi$ lies strictly inside the unit circle.
    \begin{proof}
        Since $E=\e d^{\pi \top}$, $E$ is a rank‑one projector onto the span of $\e$. Hence the spectrum of $E$ is $\{1,0,\ldots, 0\}$. In addition, we can show $\kp^\pi$ and $E$ commute by
        \begin{align}
            \kp^\pi E = \kp^\pi (\e (d^{\pi })^\top) = (\e (d^{\pi })^\top) = E \\
            E \kp^\pi = \e ((d^{\pi })^\top \kp^\pi) = \e (d^{\pi })^\top =E
        \end{align}
        Thus, by the Schur's theorem, $\kp$ and $E$ are simultaneously upper triangularizable. In a common triangular basis, the diagonals of $\kp$ and $E$ list their eigenvalues in descending orders, which are $\{\lambda_1, \lambda_2,\ldots,\lambda_S\}$ and $\{1,0,\ldots, 0\}$ respectively. Thus, in that same basis, $Q^\pi = \kp^\pi - E$ is also triangular, with diagonal entries being $\{\lambda_1-1,\lambda_2-0,\ldots, \lambda_S-0\}$. Since $\lambda_1=1$, we have the spectrum of $Q^\pi$ is exactly $\{ \lambda_2,\ldots, \lambda_S,0\}$. Since we already have  $|\lambda_i| <1$ for all $i \in \{2,\ldots,S\}$, we conclude the proof.
    \end{proof}
\end{lemma}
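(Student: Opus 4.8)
The plan is to prove that every eigenvalue of $Q^\pi = \kp^\pi - E$ lies strictly inside the open unit disk, given that $\kp^\pi$ is the ergodic transition matrix of an irreducible aperiodic finite Markov chain with stationary distribution $d^\pi$ and $E = \e (d^\pi)^\top$. The entire argument rests on two facts: $\kp^\pi$ and $E$ commute, and $E$ is a rank-one projection onto $\mathrm{span}(\e)$ with spectrum $\{1,0,\dots,0\}$. First I would verify the commutation relations directly: since $\e$ is a right eigenvector of $\kp^\pi$ with eigenvalue $1$ (rows sum to one) and $d^\pi$ is the left eigenvector with eigenvalue $1$ (stationarity), one computes $\kp^\pi E = \kp^\pi \e (d^\pi)^\top = \e(d^\pi)^\top = E$ and $E\kp^\pi = \e(d^\pi)^\top\kp^\pi = \e(d^\pi)^\top = E$, so indeed $\kp^\pi E = E\kp^\pi = E$.

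Next I would invoke simultaneous triangularization. Since $\kp^\pi$ and $E$ are commuting matrices over $\mathbb{C}$, by Schur's theorem there is a unitary $U$ such that $U^* \kp^\pi U$ and $U^* E U$ are both upper triangular. Moreover, the common triangular form can be arranged so that the diagonal entries of $U^*\kp^\pi U$ are the eigenvalues $\lambda_1 = 1, \lambda_2,\dots,\lambda_S$ of $\kp^\pi$ and the corresponding diagonal entries of $U^* E U$ are the matching eigenvalues of $E$. Here one must be slightly careful: the pairing of eigenvalues along the diagonal is forced by the structure — because $\kp^\pi E = E$, the eigenvalue of $E$ paired with $\lambda_1 = 1$ (the eigenvalue whose eigenspace is $\mathrm{span}(\e)$) is $1$, and on the complementary invariant part $E$ acts as zero, so all other diagonal entries of $U^* E U$ are $0$. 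Then $U^*(Q^\pi) U = U^*\kp^\pi U - U^* E U$ is upper triangular with diagonal entries $\lambda_1 - 1 = 0, \lambda_2 - 0, \dots, \lambda_S - 0$, so the spectrum of $Q^\pi$ is $\{0, \lambda_2,\dots,\lambda_S\}$. Since ergodicity (via Perron–Frobenius, already stated above) gives $|\lambda_i| < 1$ for $i = 2,\dots,S$, every eigenvalue of $Q^\pi$ satisfies $|\cdot| < 1$, completing the proof.

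The main obstacle I anticipate is the bookkeeping in the simultaneous-triangularization step: one needs to argue rigorously that the eigenvalue $1$ of $E$ is paired with $\lambda_1 = 1$ of $\kp^\pi$ and not with some $\lambda_i$, $i\geq 2$. The cleanest way around this is not to rely on ordering conventions at all, but to use the orthogonal decomposition $\mathbb{C}^S = \mathrm{span}(\e) \oplus W$ where $W = \{x : (d^\pi)^\top x = 0\}$, which is invariant under both $\kp^\pi$ (since $(d^\pi)^\top \kp^\pi x = (d^\pi)^\top x = 0$) and $E$ (since $Ex = \e (d^\pi)^\top x = 0$ for $x \in W$). On $\mathrm{span}(\e)$ both $\kp^\pi$ and $E$ act as the identity, so $Q^\pi$ acts as $0$; on $W$, $E$ acts as $0$ so $Q^\pi = \kp^\pi|_W$, whose eigenvalues are exactly $\{\lambda_2,\dots,\lambda_S\}$, all of modulus $< 1$. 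This block-decomposition route sidesteps the triangular-basis pairing subtlety entirely and is the version I would actually write out; the Schur-triangularization phrasing in the excerpt is fine but the invariant-subspace argument is more transparent.
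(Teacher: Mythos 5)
Your proposal is correct, and its Schur-triangularization half mirrors the paper's proof; but the version you say you would actually write — the invariant-subspace decomposition $\mathbb{C}^S = \mathrm{span}(\e)\oplus W$ with $W=\{x:(d^\pi)^\top x=0\}$ — is a genuinely different and cleaner route. The subtlety you flag is real: the paper asserts that in a common triangular basis the diagonals of $\kp^\pi$ and $E$ both list their eigenvalues "in descending order," but simultaneous triangularization fixes a single common flag, and the pairing of diagonal entries is dictated by that flag rather than chosen freely for each matrix; the paper leaves implicit why the eigenvalue $1$ of $E$ must sit opposite $\lambda_1=1$ of $\kp^\pi$ (it does, because $\e$ is a common eigenvector that can be taken as the first vector of the flag, and $E$ annihilates the $\kp^\pi$-invariant complement). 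Your block-decomposition argument makes this pairing automatic: on $\mathrm{span}(\e)$ both operators act as the identity so $Q^\pi$ acts as $0$, while on $W$ the projector $E$ vanishes so $Q^\pi|_W=\kp^\pi|_W$ has spectrum $\{\lambda_2,\dots,\lambda_S\}$ (since the Perron eigenvalue $1$ has algebraic multiplicity one). One small correction: the decomposition is a direct sum, not an orthogonal one, unless $d^\pi$ is uniform — $W$ is the annihilator of $d^\pi$, not of $\e$ — but this does not affect the spectral bookkeeping, which only needs invariance of both subspaces under both operators.
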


Define $\rho(\cdot)$ to be the spectral radius of a matrix, then Lemma \ref{lem:Qspectrum} implies that $\rho(Q^\pi)<1$.  Hence by equivalence of norms in $\mathbb R^{|\mathcal S|}$ it is possible to construct a vector norm $\|\cdot\|_Q$ so that the induced operator norm of $Q^\pi$ is less than $1$, specifically
\begin{equation}\label{eq:definingalpha}
0 \leq \rho(Q^\pi) \leq \bigl\|Q^\pi\bigr\|_Q \leq \alpha < 1.
\end{equation}
A concrete construction example is to leverage the discrete-Lyapunov equation \citep{horn2012matrix} of solving $M$ on the space of symmetric matrices for any $\rho(Q^\pi) < \alpha < 1$ as follows:
\begin{equation} \label{eq:LyapunovEq}
     Q^{\pi\top} MQ^\pi - \alpha^2 M = -I
\end{equation}
Define $B\coloneqq \alpha^{-1}Q^\pi$, then $\rho(B)= \alpha^{-1}\rho(Q^\pi)<1$. We can express $M$ in the form of Neumann series as
\begin{align} \label{eq:Neumann}
     M &= \alpha^{-2} I + \alpha^{-4} (Q^\pi)^\top Q^\pi + \alpha^{-6} \left((Q^\pi)^\top\right)^2 (Q^\pi)^2 + \dots \nonumber \\
     &= \sum_{k=0}^{\infty} \alpha^{-2(k+1)} \left((Q^\pi)^\top\right)^k (Q^\pi)^k \nonumber \\
     &= \alpha^{-2} \sum_{k=0}^{\infty}  \left(B^\top\right)^k B^k.
\end{align}

We now show that $M$ is bounded. Write $B=SJS^{-1}$, where $J=\mathrm{diag}\big(J_{m_1}(\lambda_1),\dots,J_{m_r}(\lambda_r)\big)$ is the Jordan normal form. By the Jordan block power formula \citep{horn2012matrix},
\begin{equation}
J_m(\lambda) = \lambda I_m + N_m,
\end{equation}
with $N_m$ the nilpotent matrix having ones on the first superdiagonal and $N_m^m=0$.
Then $B^k=SJ^kS^{-1}$ and $J^k=\mathrm{diag}\big(J_{m_1}(\lambda_1)^k,\dots,J_{m_r}(\lambda_r)^k\big)$. For each block and each integer $k\ge m$, by the binomial theorem we have $N_m^m=0$ and
\begin{equation}
J_m(\lambda)^k
= (\lambda I_m + N_m)^k
= \sum_{j=0}^{m-1}\binom{k}{j}\,\lambda^{\,k-j}\,N_m^{\,j}.
\end{equation}
 
For $k\ge m$, use $\binom{k}{j}\le \frac{k^j}{j!}$ and factor $|\lambda|^{\,k}$:
\begin{equation}
\|J_m(\lambda)^k\|_2
\leq |\lambda|^{\,k}\sum_{j=0}^{m-1}\frac{k^j}{j!}|\lambda|^{-j} \|N_m^j\|_2
\leq c_{m,\lambda} k^{m-1}|\lambda|^{k},
\end{equation}
where $c_{m,\lambda}\coloneqq\sum_{j=0}^{m-1}\frac{|\lambda|^{-j}}{j!} \|N_m^j\|_2$. Thus, let $s = \max_i m_i$ be the size of the largest Jordan block of $B$. Since $J^k$ is block diagonal,
\begin{equation}
\|J^k\|_2
\leq \sum_{i=1}^r \|J_{m_i}(\lambda_i)^k\|_2
\leq \Big(\sum_{i=1}^r c_{m_i,\lambda_i}\Big) k^{s-1} \Big(\max_i |\lambda_i|\Big)^{k}
=C_Jk^{s-1}\rho(B)^{k}
\end{equation}
for all $k\ge s$, where $C_J:=\sum_{i=1}^r c_{m_i,\lambda_i}$ and $\rho(B)=\max_i|\lambda_i|$.

Since similarity does not change eigenvalues but may scale norms by the condition number, we can derive that
\[
\|B^k\|_2 = \|S J^k S^{-1}\|_2
\leq \|S\|_2\|S^{-1}\|_2\|J^k\|_2
\leq \kappa(S)\,C_J k^{s-1}\rho(B)^{k} \qquad (k\ge s),
\]
where $\kappa(S):=\|S\|_2\|S^{-1}\|_2$. By choosing the appropriate constant, the same bound holds for all $k\ge 0$:
\[
\exists \; C_B>0 \ \text{such that}\quad \|B^k\|_2 \;\le\; C_B\,k^{s-1}\,\rho(B)^{\,k}\quad (k\ge 0).
\]
In spectral norm, this implies
\begin{equation}
\|(B^\top)^k B^k\|_2
= \|(B^k)^\top B^k\|_2
= \sigma_{\max}(B^k)^2
= \|B^k\|_2^2
\leq C_B^{2}k^{2(s-1)}\rho(B)^{2k}.
\end{equation}
Thus, the scalar series
$
\sum_{k=0}^{\infty} k^{2(s-1)}\,\rho(B)^{\,2k}
$
is in the form of polynomial times geometric with ratio less than $1$, which converges, and the partial sum expression in \eqref{eq:Neumann} converges absolutely as a geometric‐type series.

Also, since each term in \eqref{eq:Neumann} is positive semi‑definite, and the first term $\alpha^{-2}I$ being positive definite, we can conclude that $M$ being the summation is well-defined and is positive definite. Thus, using the positive definite $M$ defined in \eqref{eq:LyapunovEq}, we can define our desired norm $\|\cdot\|_Q$ as 
\begin{equation} \label{eq:definitionofQnorm}
    \|x\|_Q \coloneqq \sqrt{x^\top M x}
\end{equation}
which implies
\begin{equation} \label{eq:Qnormproperty}
    \|Q^\pi\|_Q = \sup_{x \neq \bf{0}} \frac{\|Q^\pi x\|_Q}{\|x\|_Q} = \sup_{x \neq \bf{0}}\frac{\sqrt{(Q^\pi x)^\top M Q^\pi x}}{\sqrt{x^\top M x}} \overset{(a)}{\leq} \alpha < 1
\end{equation}
Where $(a)$ is because for any $x \neq \bf{0}$, from \eqref{eq:LyapunovEq} we have
\begin{equation} \label{eq:QuadraticLyapunov}
    (Qx)^{\pi\top} MQ^\pi x - \alpha^2 x^\top M x = -x^\top x \quad \Rightarrow \quad (Qx)^{\pi\top} MQ^\pi x = \alpha^2 x^\top M x - \|x\|_2^2
\end{equation}
Since $\|x\|_2^2$ is always non-negative dividing both sides of the second equation of \eqref{eq:QuadraticLyapunov} by $x^\top M x$ and further taking the square root on both sides yields the inequality of $(a)$.

Based on the above construction of the norm $\|\cdot\|_Q$, define the operator $\|\cdot\|_\kp$ as
\begin{equation} \label{eq:constructionofPnorm}
    \|x\|_{\kp} \coloneqq \bigl\|Q^\pi x\bigr\|_Q+\epsilon \inf_{c\in\mathbb R}\bigl\|x - c \e\bigr\|_Q
\end{equation}
where $0<\epsilon<1-\alpha$. 
\begin{lemma} \label{lem:validseminorm}
    The operator $\|\cdot\|_\kp$ is a valid semi-norm with kernel being exactly $\{c\e : c\in\mathbb{R}\}$. Furthermore, for all $x\in \mathbb{R}^S$, we have $\bigl\|\kp^\pi x\bigr\|_{\kp} \leq (\alpha+\epsilon)\bigl\| x\bigr\|_{\kp}$.
    \begin{proof}
    Regarding positive homogeneity and nonnegativity,  for any scalar $\lambda$ and $x\in\mathbb{R}^S$,
    \[
    \|\lambda x\|_{\kp}
    = \bigl\|Q^\pi(\lambda x)\bigr\|_Q
      + \epsilon\inf_{c}\bigl\|\lambda x - c\e\bigr\|_Q
    = |\lambda|\bigl\|Q^\pi x\bigr\|_Q
      + \epsilon\,|\lambda|\inf_{c}\bigl\|x - c\e\bigr\|_Q
    =|\lambda|\,\|x\|_{\kp},
    \]
    and clearly $\|x\|_{\kp}\ge0$, with equality only when both $\|Q^\pi x\|_Q=0$ and $\inf_{c}\|x-c\,e\|_Q=0$. Regarding triangle inequality, for any $x,y\in\mathbb{R}^S$,
    \begin{align*}
    \|x+y\|_{\kp}
    &= \bigl\|Q^\pi(x+y)\bigr\|_Q
       + \epsilon\inf_{c}\bigl\|x+y - c\e\bigr\|_Q\\
    &\le \bigl\|Q^\pi x\bigr\|_Q + \bigl\|Q^\pi y\bigr\|_Q
       + \epsilon\inf_{a,b}\bigl\|x - a\e + y - b\e\bigr\|_Q\\
    &\le \bigl\|Q^\pi x\bigr\|_Q + \bigl\|Q^\pi y\bigr\|_Q
       + \epsilon\inf_{a}\bigl\|x - a\e\bigr\|_Q
       + \epsilon\inf_{b}\bigl\|y - b\e\bigr\|_Q\\
    &= \|x\|_{\kp} + \|y\|_{\kp}.
    \end{align*} Regarding the kernel, if $x=k \e$ for some $k\in \mathbb{R}$, then we have
    \begin{align} \label{eq:kernelseminorm}
        \|x\|_\kp &= \|k Q^\pi \e\|_Q +\epsilon\inf_{c}\|k\e-c\e\|_Q \nonumber \\
        &=  \|k (\kp^\pi-E) \e\|_Q +\epsilon\|k \e - k\e\|_Q \nonumber \\
        &= \|k \e - k\e\|_Q +\epsilon\|0\|_Q=0
    \end{align}
    On the other hand, if $x\notin \{c\e : c\in\mathbb{R}\}$, we know that 
    \begin{equation}
        \|x\|_\kp \geq \epsilon\inf_{c}\|x-c\e\|_Q >0
    \end{equation}    
    Thus, the kernel of $\|\cdot\|_\kp$ is exactly $\{c\e : c\in\mathbb{R}\}$. We now show that, for any $x\in\mathbb{R}^S$,
    \begin{align}
    \bigl\|\kp^\pi x\bigr\|_{\kp}
    &= \bigl\|Q^\pi(\kp^\pi x)\bigr\|_Q
      + \epsilon\inf_{c}\bigl\|\kp^\pi x - c\e\bigr\|_Q \nonumber \\
      &= \bigl\|Q^\pi Q^\pi x + Q^\pi E x \bigr\|_Q
      + \epsilon\inf_{c}\bigl\|Q^\pi x - c\e\bigr\|_Q \nonumber \\
    &\leq \alpha \|Q^\pi x\|_Q
       +\epsilon \|Q^\pi x\|_Q \nonumber \\
    & = (\alpha+\epsilon)\|Q^\pi x\|_Q \nonumber \\
    & \leq (\alpha+\epsilon)\|x\|_{\kp}.
    \end{align}
    \end{proof}
\end{lemma}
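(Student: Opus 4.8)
## Proof Proposal for Lemma \ref{lem:seminorm-contraction}

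The plan is to carry out the construction sketched in the proof outline, in three stages: (i) reduce the operator difference to the action of $\kp^\pi$ on the difference vector, (ii) build an auxiliary norm $\|\cdot\|_Q$ on $\mathbb{R}^S$ witnessing the sub-unit spectral radius of the fluctuation matrix $Q^\pi = \kp^\pi - E$, and (iii) assemble the semi-norm $\|\cdot\|_{\kp}$ from $\|\cdot\|_Q$ together with a small quotient correction, then verify both the kernel characterization and the one-step contraction. The first stage is immediate: since $\mathbf{T}_g^{\kp}$ differs from a linear map only by the state-independent constant $\sum_a \pi(a|s)(r(s,a)-g)$, we get $\mathbf{T}_g^{\kp}(V_1) - \mathbf{T}_g^{\kp}(V_2) = \kp^\pi(V_1 - V_2) = \kp^\pi \Delta$, and writing $\kp^\pi = Q^\pi + E$ isolates the contractive part, since $E\Delta$ is a constant vector killed by the semi-norm.

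For stage (ii), I would first prove the spectral fact that $\rho(Q^\pi) < 1$. The argument: $\kp^\pi$ is ergodic so by Perron--Frobenius its eigenvalue $1$ is simple with right eigenvector $\e$ and all other eigenvalues lie strictly inside the unit disk; $E = \e (d^\pi)^\top$ is the rank-one spectral projector onto $\mathrm{span}(\e)$; $\kp^\pi$ and $E$ commute ($\kp^\pi E = E$ since $\kp^\pi \e = \e$, and $E \kp^\pi = E$ since $(d^\pi)^\top \kp^\pi = (d^\pi)^\top$); hence they are simultaneously triangularizable by Schur, and in a common triangular basis $Q^\pi$ has diagonal $\{\lambda_1 - 1, \lambda_2, \dots, \lambda_S\} = \{0, \lambda_2, \dots, \lambda_S\}$, all of modulus $< 1$. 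Then, since $\rho(Q^\pi) < 1$, I can pick any $\alpha \in (\rho(Q^\pi), 1)$ and obtain a positive-definite $M$ solving the discrete Lyapunov equation $Q^{\pi\top} M Q^\pi - \alpha^2 M = -I$; convergence of the Neumann series $M = \alpha^{-2}\sum_{k\ge0}(B^\top)^k B^k$ with $B = \alpha^{-1}Q^\pi$ follows from $\rho(B) < 1$ together with the polynomial-times-geometric bound $\|B^k\|_2 \le C_B k^{s-1}\rho(B)^k$ coming from the Jordan form. Setting $\|x\|_Q := \sqrt{x^\top M x}$, the Lyapunov identity rearranges to $(Q^\pi x)^\top M (Q^\pi x) = \alpha^2 \|x\|_Q^2 - \|x\|_2^2 \le \alpha^2 \|x\|_Q^2$, giving the operator bound $\|Q^\pi\|_Q \le \alpha$.

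For stage (iii), define
\begin{equation}
\|x\|_{\kp} := \|Q^\pi x\|_Q + \epsilon \inf_{c \in \mathbb{R}} \|x - c\e\|_Q, \qquad 0 < \epsilon < 1 - \alpha.
\end{equation}
I would verify absolute homogeneity and the triangle inequality termwise (the infimum term is a standard quotient semi-norm), so $\|\cdot\|_{\kp}$ is a genuine semi-norm. For the kernel: if $x = c\e$ then $Q^\pi x = (\kp^\pi - E)\e = \e - \e = 0$ and the infimum term vanishes, so $\|x\|_{\kp} = 0$; conversely if $x \notin \mathrm{span}(\e)$ then $\inf_c \|x - c\e\|_Q > 0$ forces $\|x\|_{\kp} > 0$. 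For the contraction, compute
\begin{equation}
\|\kp^\pi x\|_{\kp} = \|Q^\pi \kp^\pi x\|_Q + \epsilon \inf_c \|\kp^\pi x - c\e\|_Q,
\end{equation}
and use $Q^\pi \kp^\pi = Q^\pi(Q^\pi + E) = (Q^\pi)^2$ (since $Q^\pi E = (\kp^\pi - E)E = E - E = 0$) to bound the first term by $\alpha \|Q^\pi x\|_Q$, while for the second term, taking $c = 0$ inside the infimum and noting $\kp^\pi x = Q^\pi x + Ex$ with $Ex \in \mathrm{span}(\e)$ gives $\inf_c \|\kp^\pi x - c\e\|_Q \le \inf_c \|Q^\pi x - c\e\|_Q \le \|Q^\pi x\|_Q$; combining, $\|\kp^\pi x\|_{\kp} \le (\alpha + \epsilon)\|Q^\pi x\|_Q \le (\alpha + \epsilon)\|x\|_{\kp}$. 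Setting $\Delta = V_1 - V_2$ and $\beta = \alpha + \epsilon \in (0,1)$ yields $\|\mathbf{T}_g^{\kp}(V_1) - \mathbf{T}_g^{\kp}(V_2)\|_{\kp} = \|\kp^\pi \Delta\|_{\kp} \le \beta \|\Delta\|_{\kp}$, which is the claim.

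The main obstacle is not any single step but making the auxiliary norm construction fully rigorous: one must confirm $M$ is well-defined (absolute convergence of the Neumann series, which needs the Jordan-form decay estimate rather than just $\rho(B)<1$) and positive definite (each summand is PSD and the leading term $\alpha^{-2}I$ is PD), and one must be careful that $Q^\pi E = 0$ and $E\e = \e$ — the two identities that make the cross terms disappear — actually hold, which they do because $E$ is the Perron projector. Everything else is bookkeeping with norm axioms.
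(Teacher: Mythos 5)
Your proposal is correct and follows essentially the same route as the paper: the same decomposition $\kp^\pi = Q^\pi + E$ with the Lyapunov-equation norm $\|\cdot\|_Q$, the same identity $Q^\pi E = 0$ to kill the cross term, and the same bound $\inf_c\|\kp^\pi x - c\e\|_Q \le \|Q^\pi x\|_Q$ obtained by absorbing $Ex$ into the constant shift. The only difference is that you verify the semi-norm axioms more tersely (``termwise''), whereas the paper writes them out, but nothing substantive is missing.
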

Let $\beta = \alpha + \epsilon$, by \eqref{eq:definingalpha} and \eqref{eq:constructionofPnorm}, we have $\alpha\in(0,1)$ and $\epsilon \in (0,1-\alpha)$. Thus, $\beta \in (0,1)$ and combining $\beta$ with the semi-norm $\|\cdot\|_\kp$ confirms Lemma \ref{lem:seminorm-contraction}.

\subsection{Proof of Theorem \ref{thm:robust_seminorm-contraction}} \label{proofrobust-span-contraction}

We override the terms $\alpha, \lambda$ and $\epsilon$ from the previous section.  For any $V_1,V_2$ and $s\in\mathcal S$,
 \begin{align} \label{eq:robustbellmanstep1}
         \mathbf{T}_g(V_1)(s) -&  \mathbf{T}_g(V_2)(s) = \sum_{a \in \mathcal{A}} \pi(a|s) [\sigma_{p^a_s}(V_1) -  \sigma_{p^a_s}(V_2)] \nonumber \\
        & = \sum_{a \in \mathcal{A}} \pi(a|s) [ \min_{p\in\cp^a_s} \sum_{s' \in \mathcal{S}} p(s')V_1(s') -   \min_{p\in\cp^a_s} \sum_{s' \in \mathcal{S}} p(s') V_2(s')] \nonumber \\
        & \leq  \sum_{a \in \mathcal{A}} \pi(a|s) \max_{p\in\cp^a_s} \Bigg[ \sum_{s' \in \mathcal{S}} p(s')V_1(s') -  \sum_{s' \in \mathcal{S}} p(s') V_2(s') \Bigg] \nonumber \\
        & \leq \sum_{a}\pi(a|s)\sum_{s'}\Tilde{p}_{(V_1,V_2)}(s'|s,a)\bigl[V_1(s')-V_2(s')\bigr]
    \end{align}
where $\Tilde{p}_{(V_1,V_2)}(\cdot | s,a) = \arg\max_{p\in\cp^a_s} [ \sum_{s' \in \mathcal{S}} p(s')V_1(s') -  \sum_{s' \in \mathcal{S}} p(s') V_2(s') ] $
and each $\Tilde{p}_{(V_1,V_2)}\in\cp$ for all $V_1, V_2$. We now discuss the construction of the desired semi-norm $\|\cdot\|_\cp$. 

\subsubsection{Joint Spectral Radius of $Q_\cp^\pi$}
For any $\kp \in \cp$, denote $\kp^\pi$ as the transition matrix under policy $\pi$ and the unique stationary distribution $d^\pi_\kp$,  and denote $E_\kp$ as the matrix with all rows being identical to $d^\pi_\kp$ (we will provide the conditions for all $\kp^\pi$ having a unique stationary distribution later). We further define the following:
\begin{equation} \label{eq:Qfamilydef}
    Q_\kp^\pi = \kp^\pi - E_\kp \quad \text{and} \quad Q^\pi_\cp = \{Q^\pi_\kp : \kp \in \cp\}. 
\end{equation}
To obtain the desired one-step contraction result under Assumption \ref{ass:sameg} along with proper radius restrictions, we need to show the conditions of the radius under the different uncertainty sets such that the joint spectral radius $\hat{\rho}(Q^\pi_\cp )$ defined in Lemma \ref{lem:bergerlemmaIV} satisfies $\hat{\rho}(Q^\pi_\cp )<1$, which is necessary to establish the desired one-step contraction. We first provide an upper bound of the joint spectral radius as follows:

\begin{lemma} \label{lem:JSRupperbound}
Define the Dobrushin’s coefficient of an $n$ dimensional Markov matrix $P$ as 
\begin{equation} \label{eq:Dobrushindef}
\tau(P)\coloneqq 1 - \min_{i<j}\sum_{s=1}^n \min(P_{is},P_{js}),
\end{equation}
then the joint spectral radius of the family $Q_\cp^\pi$ is upper bounded by the following:
\begin{equation} \label{eq:JSRupperbound}
    \hat{\rho}(Q^\pi_\cp ) \leq \inf_{m \geq 1}\left(\sup_{\kp_i \in \cp}\tau(\kp_1^\pi \cdot \ldots \cdot \kp_m^\pi)\right)^{\frac{1}{m}}
\end{equation}

\begin{proof}
    We start by first connecting $\hat{\rho}(Q^\pi_\cp )$ to the joint spectral radius of the family $\{\kp^\pi : \kp\in\cp\}$. Define  $\mathbb{H}\coloneqq \{x\in \mathbb{R}^S: \e^\top x = 0\}$ to be the zero-sum subspace where the space spanned by $\e$ is removed. Furthermore, choose an orthonormal basis $U=[u_0 \; U_\mathbb{H}] \in \mathbb{R}^{S \times S}$ with 
    $$
    u_0 = \frac{1}{\sqrt{S}} \e, \quad U_{\mathbb{H}}^\top U_{\mathbb{H}} = I_{S-1}, \quad U_{\mathbb{H}} U_{\mathbb{H}}^\top  = \Pi \coloneqq I_S - \frac{1}{S}\e \e^\top, \quad \e^\top U_{\mathbb{H}} = 0
    $$
where $\Pi$ is the orthogonal projector onto $\mathbb{H}$. Since $U$ is orthogonal, $U^\top = U^{-1}$. With the above notations, for any  $Q^\pi_\kp \in Q^\pi_\cp$, we can construct a similar matrix $\tilde{Q}^\pi_\kp$ as
\begin{equation}
\tilde{Q}^\pi_\kp \coloneqq U^{-1} Q^\pi_\kp U
=
\begin{bmatrix}
0 & \alpha_\kp^{\top}\\[2pt]
0 & B_\kp
\end{bmatrix}, \;\;
\text{where} \;\;
B_\kp:=U_{\mathbb{H}}^\top \kp^\pi U_{\mathbb{H}} \in \mathbb{R}^{(S-1)\times(S-1)}.
\end{equation}
Equivalently, define $T_\kp \coloneq \Pi \kp^\pi \big|_{\mathbb{H}} $, which operates entirely on $\mathbb{H}$. Then $B_\kp$ is the matrix of $T_\kp$ in the basis of $U_{\mathbb{H}}$. Since $E_\kp = \e (d_\kp^{\pi })^\top$ and $U_{\mathbb{H}}^\top \e = 0$, we have $U_{\mathbb{H}}^\top E_\kp U_{\mathbb{H}} = 0$. Hence, the lower-right block in $U_{\mathbb{H}}^\top Q^\pi_\kp U_{\mathbb{H}}$ is just $B_\kp$. Consequently, for any sequence \(\kp^\pi_1,\ldots,\kp^\pi_k\),
\begin{equation}
U^{-1}\bigl(Q^\pi_{\kp_k}\cdots Q^\pi_{\kp_1}\bigr)U
=
\begin{bmatrix}
0 & \ast\\[2pt]
0 & B_{\kp_k}\cdots B_{\kp_1}
\end{bmatrix}.
\end{equation}
Hence, by block upper-triangularity \citep{horn2012matrix}, the spectral radius of $Q^\pi_{\kp_k}\cdots Q^\pi_{\kp_1}$ on $\mathbb{R}^S$ equals the spectral radius of $ B_{\kp_k}\cdots B_{\kp_1}$ on $\mathbb{H}$:
\begin{align}\label{eq:rhoEquality}
\rho\bigl(Q^\pi_{\kp_k}\cdots Q^\pi_{\kp_1}\bigr)&=\rho\bigl( B_{\kp_k}\cdots B_{\kp_1}\bigr)\nonumber \\
&=\rho\bigl( U_{\mathbb{H}}^\top \kp_k^\pi U_{\mathbb{H}} \cdots U_{\mathbb{H}}^\top \kp_1^\pi U_{\mathbb{H}}\bigr)\nonumber \\
&=\rho\bigl( U_{\mathbb{H}}^\top \kp_k^\pi \Pi \kp_{k-1}^\pi \Pi \cdots \kp_2^\pi \Pi\kp_1^\pi U_{\mathbb{H}}\bigr)\nonumber \\
&=\rho(T_{\kp_k} \ldots T_{\kp_1}).
\end{align}
Thus, the spectral radius of the family $Q^\pi_{\kp_k}\cdots Q^\pi_{\kp_1}$ on $\mathbb{R}^S$ equals the spectral radius of $ \kp_k^\pi \cdots \kp_1^\pi $ on $\mathbb{H}$:
\begin{equation}\label{eq:rhoonH}
\rho\bigl(Q^\pi_{\kp_k}\cdots Q^\pi_{\kp_1}\bigr) =\rho(T_{\kp_k} \ldots T_{\kp_1})=\rho ( \Pi \kp_k^\pi \cdots \kp_1^\pi  \big|_{\mathbb{H}} ).
\end{equation}
Given \eqref{eq:rhoonH}, we now study the joint spectral radius of $\cp_{\mathbb{H}}=\{\Pi\kp^\pi : \kp\in\cp\}$ on $\mathbb{H}$.

From Lemma \ref{lem:bergerlemmaIV}, we have that for an arbitrary norm $\|\cdot\|_{\mathbb{H}\rightarrow \mathbb{H}}$ on $\mathbb{H}$,
         \begin{equation}
        \hat{\rho}(\cp_{\mathbb{H}}) = \lim_{k\rightarrow \infty} \sup_{\kp_i \in \cp}\|\Pi \kp^\pi_k \ldots \kp^\pi_1\|_{\mathbb{H}\rightarrow \mathbb{H}}^{\frac{1}{k}} = \lim_{k\rightarrow \infty} \sup_{\kp_i \in \cp}\|T_{\kp_k} \ldots T_{\kp_1}\|_{\mathbb{H}\rightarrow \mathbb{H}}^{\frac{1}{k}}.
    \end{equation}
    Divide $k$ into $m$ partitions of blocks with length $q$ and a residue $r$ as $k=qm+r$, where $q,m,r \in \mathbb{N}$, $0 < q \leq k$ and $0 \leq r < q $. Furthermore, let $M_m \coloneqq \sup_{\kp_i \in \cp} \|T_{\kp_m} \ldots T_{\kp_1}\|_{\mathbb{H}\rightarrow \mathbb{H}}$ and $K_{<m} \coloneqq \max_{0\leq r <m}\sup_{\kp_i \in \cp} \|T_{\kp_r} \ldots T_{\kp_1}\|_{\mathbb{H}\rightarrow \mathbb{H}}$, then by the submultiplicity of operator norm, we have that on $\mathbb{H}$,
    \begin{equation}
         \sup_{\kp_i \in \cp}\|T_{\kp_k} \ldots T_{\kp_1}\|_{\mathbb{H}\rightarrow \mathbb{H}} \leq M_m^q K_{<m},
    \end{equation}
    taking power of $\frac{1}{k}$ and let $k\rightarrow \infty$ implies
    \begin{equation}
        \lim_{k\rightarrow \infty} \sup_{\kp_i \in \cp}\|T_{\kp_k} \ldots T_{\kp_1}\|_{\mathbb{H}\rightarrow \mathbb{H}}^{\frac{1}{k}} \leq \lim_{k\rightarrow \infty}  M_m^{\frac{q}{k}} K_{<m}^{\frac{1}{k}}.
    \end{equation}
    Since $q = \lfloor \frac{k}{m} \rfloor$, we have $\lim_{k\rightarrow \infty} \frac{q}{k} = \frac{1}{m}$ and $\lim_{k\rightarrow \infty} \frac{1}{k} = 0$, which suggests that for any positive integer $m$ we have, 
    \begin{equation}
        \lim_{k\rightarrow \infty} \sup_{\kp_i \in \cp}\|T_{\kp_k} \ldots T_{\kp_1}\|_{\mathbb{H}\rightarrow \mathbb{H}}^{\frac{1}{k}} \leq  M_m^{\frac{1}{m}} =  \left( \sup_{\kp_i \in \cp} \|T_{\kp_m} \ldots T_{\kp_1}\|_{\mathbb{H}\rightarrow \mathbb{H}}\right)^{\frac{1}{m}},
    \end{equation}
    which implies for any norm  $\|\cdot\|_{\mathbb{H}\rightarrow \mathbb{H}}$ on $\mathbb{H}$, we have
    \begin{equation}
        \lim_{k\rightarrow \infty} \sup_{\kp_i \in \cp}\|T_{\kp_k} \ldots T_{\kp_1}\|_{\mathbb{H}\rightarrow \mathbb{H}}^{\frac{1}{k}} \leq \inf_{m\geq 1} \left( \sup_{\kp_i \in \cp} \|T_{\kp_m} \ldots T_{\kp_1}\|_{\mathbb{H}\rightarrow \mathbb{H}}\right)^{\frac{1}{m}}.
    \end{equation}
    From \citep{gaubert2015dobrushin}, the Dobrushin’s coefficient is a valid norm (the induced matrix span norm) on the zero-sum subspace $\mathbb{H}$, which yields \eqref{eq:JSRupperbound}.    
\end{proof}
\end{lemma}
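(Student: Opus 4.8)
The plan is to relate the joint spectral radius of $Q^\pi_\cp$ on $\mathbb{R}^S$ to that of the induced operators on the zero-sum subspace $\mathbb{H}=\{x:\e^\top x=0\}$, and then to exploit that on $\mathbb{H}$ the span semi-norm is a genuine norm whose induced operator norm, evaluated at any row-stochastic matrix $P$, is exactly the Dobrushin coefficient $\tau(P)$. With that identification, the bound $\hat\rho(Q^\pi_\cp)\le\inf_m(\sup_{\kp_i}\tau(\kp_1^\pi\cdots\kp_m^\pi))^{1/m}$ becomes the standard Fekete-type sub-multiplicativity estimate for a bounded family of linear maps.

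First I would fix an orthonormal basis $U=[\,u_0\;\;U_{\mathbb H}\,]$ with $u_0=\e/\sqrt S$, so that $\Pi:=U_{\mathbb H}U_{\mathbb H}^\top=I-\tfrac1S\e\e^\top$ is the orthogonal projector onto $\mathbb{H}$. Since every $\kp^\pi$ is row-stochastic we have $\kp^\pi\e=\e$, and since each column of $E_\kp=\e(d^\pi_\kp)^\top$ lies in $\mathrm{span}(\e)$ we get $E_\kp\e=\e$ and $U_{\mathbb H}^\top E_\kp U_{\mathbb H}=0$. Hence in the basis $U$ each $Q^\pi_\kp=\kp^\pi-E_\kp$ is block upper-triangular with a vanishing $1\times1$ top-left block and bottom-right block $B_\kp:=U_{\mathbb H}^\top\kp^\pi U_{\mathbb H}$; block-triangularity is preserved under products, so $\rho(Q^\pi_{\kp_k}\cdots Q^\pi_{\kp_1})=\rho(B_{\kp_k}\cdots B_{\kp_1})$. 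A short telescoping computation — peeling off the interleaved projectors one at a time via $U_{\mathbb H}^\top(\kp_k^\pi\cdots\kp_{j}^\pi)\Pi=U_{\mathbb H}^\top(\kp_k^\pi\cdots\kp_{j}^\pi)$, which holds because $\kp_i^\pi\e=\e$ and $U_{\mathbb H}^\top\e=0$ — shows $B_{\kp_k}\cdots B_{\kp_1}=U_{\mathbb H}^\top(\kp_k^\pi\cdots\kp_1^\pi)U_{\mathbb H}$, i.e. the matrix in $U_{\mathbb H}$-coordinates of $T_{\kp_k}\cdots T_{\kp_1}=\Pi(\kp_k^\pi\cdots\kp_1^\pi)\big|_{\mathbb H}$. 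Via the characterization of the joint spectral radius in Lemma~\ref{lem:bergerlemmaIV}, the joint spectral radii of $Q^\pi_\cp$ and of $\cp_{\mathbb H}:=\{T_\kp:\kp\in\cp\}$ therefore coincide, so it suffices to bound $\hat\rho(\cp_{\mathbb H})$.

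Next, for an arbitrary operator norm $\|\cdot\|_{\mathbb H\to\mathbb H}$ on $\mathbb H$, I would write $k=qm+r$ with $0\le r<m$ and use sub-multiplicativity to get $\sup_{\kp_i\in\cp}\|T_{\kp_k}\cdots T_{\kp_1}\|_{\mathbb H\to\mathbb H}\le M_m^{\,q}K_{<m}$, where $M_m:=\sup_{\kp_i\in\cp}\|T_{\kp_m}\cdots T_{\kp_1}\|_{\mathbb H\to\mathbb H}$ and $K_{<m}:=\max_{0\le r<m}\sup_{\kp_i\in\cp}\|T_{\kp_r}\cdots T_{\kp_1}\|_{\mathbb H\to\mathbb H}$; taking $k$-th roots and letting $k\to\infty$ (so $q/k\to1/m$, $1/k\to0$) gives $\hat\rho(\cp_{\mathbb H})\le M_m^{1/m}$ for every $m\ge1$, hence $\hat\rho(\cp_{\mathbb H})\le\inf_{m\ge1}M_m^{1/m}$. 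Finally I would instantiate $\|\cdot\|_{\mathbb H\to\mathbb H}$ as the span semi-norm, which is a bona fide norm on $\mathbb{H}$ and is invariant under adding multiples of $\e$, so $T_{\kp_m}\cdots T_{\kp_1}$ acts on it exactly as the stochastic matrix $\kp_m^\pi\cdots\kp_1^\pi$ does. By the classical Dobrushin contraction identity \citep{gaubert2015dobrushin}, the span-norm operator norm of a stochastic matrix $P$ equals $\tau(P)$, so $M_m=\sup_{\kp_i\in\cp}\tau(\kp_m^\pi\cdots\kp_1^\pi)=\sup_{\kp_i\in\cp}\tau(\kp_1^\pi\cdots\kp_m^\pi)$ (the last equality since the supremum ranges over all $m$-tuples), which is exactly \eqref{eq:JSRupperbound}.

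The step I expect to be the main obstacle is the exact identity $\rho(Q^\pi_{\kp_k}\cdots Q^\pi_{\kp_1})=\rho(B_{\kp_k}\cdots B_{\kp_1})$ and, inside it, the telescoping cancellation of the stationary projectors $E_\kp$ (equivalently the interleaved $\Pi$'s): this is the place where one must use carefully that every $\kp^\pi$ is stochastic and that $d^\pi_\kp$ is well defined — the latter guaranteed only once the radius restrictions on the uncertainty sets (deferred in the main text) are imposed. A secondary subtlety is making sure the two definitions of the joint spectral radius — via spectral radii of products versus via sub-multiplicative operator norms — agree, which we import through Lemma~\ref{lem:bergerlemmaIV}, together with the elementary but essential observation that the span semi-norm, although only a semi-norm on $\mathbb{R}^S$, becomes a genuine norm once restricted to $\mathbb{H}$.
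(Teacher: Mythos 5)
Your proposal is correct and follows essentially the same route as the paper's proof: the orthogonal reduction to the zero-sum subspace $\mathbb{H}$ via the basis $U=[u_0\;U_{\mathbb H}]$, the block-triangular identification $\rho(Q^\pi_{\kp_k}\cdots Q^\pi_{\kp_1})=\rho(B_{\kp_k}\cdots B_{\kp_1})$ with the telescoping cancellation of the interleaved projectors, the Fekete/sub-multiplicativity argument via Lemma~\ref{lem:bergerlemmaIV}, and the final instantiation of the operator norm on $\mathbb{H}$ as the span norm whose induced value on a stochastic matrix is the Dobrushin coefficient. The only additions are minor explicit justifications (e.g., $U_{\mathbb H}^\top M\e=0$ for stochastic products) that the paper leaves implicit.
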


Lemma \ref{lem:JSRupperbound} provides a quantitative method to relate the joint spectral radius of the family $Q^\pi_\cp$ and the  Dobrushin’s coefficient of the family $\cp$. Under Assumption \ref{ass:sameg}, we next discuss the radius restrictions of contamination, TV and Wasserstein distance uncertainty sets such that $\hat{\rho}(Q^\pi_\cp )<1$ is satisfied.

\subsubsection{Discussions on Radius Restrictions} \label{radiusrestrictions}

We provide the following Lemma \ref{lem:contaminationradius}-\ref{lem：Wradius}, which quantifies the radius restrictions regarding all three uncertainty sets of interests for obtaining the desired results.

\paragraph{Contamination Uncertainty}
Regarding contamination uncertainty, where the uncertainty set is characterized as 
\[
\cp :=\ \Bigl\{\kp: \forall(s,a), \kp(\cdot | s,a)\ =\ (1-\delta) \tilde{\kp}(\cdot | s,a) + \delta\,q(\cdot | s,a),  q(\cdot | s,a)\in\Delta(\mathcal S)\,\Bigr\},
\quad 0\leq \delta <1,
\]
For a fixed policy $\pi$, the induced state–transition matrix $\kp^\pi$ is expressed as 
\begin{equation} \label{eq:contaminationppi}
\kp^\pi(s,s') \coloneqq \sum_{a}\pi(a| s) \kp(s'| s,a) = (1-\delta) \sum_{a}\pi(a| s) \tilde{\kp}(s'| s,a) + \delta \sum_{a}\pi(a| s) q(s'| s,a)
\end{equation}
Define the induced uncertainty set $\mathcal P^\pi \coloneqq \{\kp^\pi:  \kp \in\cp\}$ and define $
\tilde{\kp}^\pi (s,s') \coloneqq \sum_{a}\pi(a| s) \tilde{\kp}(s'| s,a).$
Then \eqref{eq:contaminationppi} can be expressed as 
\begin{equation}
\mathcal P^\pi\ =\ \Bigl\{ (1-\delta) \tilde{\kp}^\pi\ +\ \delta q^\pi\ :\ q^\pi\ \text{row–stochastic}\ \Bigr\}.
\end{equation}

\begin{lemma} \label{lem:contaminationradius}
    Under the contamination uncertainty set, if the centroid $\tilde{\kp}^\pi$ is irreducible and aperiodic, then the joint spectral radius of $Q_\cp^\pi$ defined in \eqref{eq:Qfamilydef} is strictly less than $1$. Furthermore, $\kp^\pi$ is irreducible and aperiodic for all $\kp \in \cp$.
    \begin{proof}
        Since $\tilde{\kp}^{\pi}$ is irreducible and aperiodic. Then there exists an $m\in\mathbb N$ such that all entries in $(\tilde{\kp}^{\pi})^m$ are strictly positive. For any $\kp^\pi\in \cp^\pi$ we can write $\kp^\pi=(1-\delta)\tilde{\kp}^{\pi}+\delta q^\pi$ with $q^\pi$ being row–stochastic, so by multinomial expansion,
\begin{equation} \label{eq:contaminationprimitivity}
(\kp^\pi)^m = \bigl((1-\delta)\tilde{\kp}^{\pi}+\delta q^\pi\bigr)^m \geq (1-\delta)^m (\tilde{\kp}^{\pi})^m
\quad\text{(entrywise)}.
\end{equation}
Hence $(\kp^\pi)^m$ is strictly positive for all $\kp\in\cp$, which implies every $\kp^\pi \in\mathcal P^\pi$ is primitive with the same exponent $m$, which further implies $\kp^\pi$ is irreducible and aperiodic for all $\kp \in \cp$.

To bound the joint spectral radius, for the same integer $m$, define the $m$–step overlap constant of the centroid as
\[
a_0 \coloneqq \min_{i<j}\ \sum_{s\in\mathcal S}\min \bigl\{(\tilde{\kp}^{\pi})_{is},  (\tilde{\kp}^{\pi})_{js}\bigr\}\ \in(0,1].
\]
For any length–$m$ product $\kp^{\pi}_m\cdots \kp^{\pi}_1$ with $\kp^{\pi}_t\in\mathcal P^\pi$, the same entrywise bound \eqref{eq:contaminationprimitivity} gives
$\kp^{\pi}_m\cdots \kp^{\pi}_1 \geq (1-\delta)^m (\tilde{\kp}^{\pi})^m$, whence for all $i\neq j$, we have
\begin{equation}
\sum_s \min\{(\kp^{\pi}_m\cdots \kp^{\pi}_1)_{is},(\kp^{\pi}_m\cdots \kp^{\pi}_1)_{js}\} \geq (1-\delta)^m a_0.
\end{equation}
By the definition of the Dobrushin's coefficient in \eqref{eq:Dobrushindef}, the above yields
\begin{equation}
\tau(\kp^{\pi}_m\cdots \kp^{\pi}_1)\ \le\ 1-(1-\delta)^m a_0\ <\ 1.
\end{equation}
By Lemma \ref{lem:JSRupperbound},
\[
\hat{\rho}(Q^\pi_\cp ) \leq \inf_{t \geq 1}\left(\sup_{\kp_i \in \cp}\tau(\kp_1^\pi \cdot \ldots \cdot \kp_t^\pi)\right)^{\frac{1}{t}}\le\,\bigl(1-(1-\delta)^m a_0\bigr)^{1/m}\,<\,1.
\]
\end{proof}
\end{lemma}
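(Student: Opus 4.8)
The plan is to reduce both assertions to a single entrywise domination estimate and then to invoke the Dobrushin-coefficient bound already established in Lemma \ref{lem:JSRupperbound}. The structural fact I would exploit throughout is that every induced matrix in $\cp^\pi$ has the form $(1-\delta)\tilde{\kp}^\pi + \delta q^\pi$ with $q^\pi$ row-stochastic, so all members of the family share one common nonnegative ``core'' $(1-\delta)\tilde{\kp}^\pi$. Since the centroid $\tilde{\kp}^\pi$ is irreducible and aperiodic it is primitive, so there is an $m\in\mathbb{N}$ with $(\tilde{\kp}^\pi)^m>0$ entrywise; this single exponent $m$ will drive the whole argument, and its uniformity across the (infinite) family is exactly what a joint spectral radius bound requires.

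First I would establish irreducibility and aperiodicity for every $\kp\in\cp$. Expanding $(\kp^\pi)^m=\bigl((1-\delta)\tilde{\kp}^\pi+\delta q^\pi\bigr)^m$ by the noncommutative multinomial theorem, every resulting term is an entrywise-nonnegative product of stochastic matrices, and exactly one of these terms is $(1-\delta)^m(\tilde{\kp}^\pi)^m$. Discarding all the other nonnegative terms gives the entrywise bound $(\kp^\pi)^m\geq(1-\delta)^m(\tilde{\kp}^\pi)^m>0$, so each $\kp^\pi$ is primitive with the same exponent $m$, which yields irreducibility and aperiodicity at once.

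Next, for the joint spectral radius, I would repeat the identical expansion on an arbitrary length-$m$ product $\kp^\pi_m\cdots\kp^\pi_1$ of possibly distinct matrices from $\cp^\pi$. Because the all-centroid term survives no matter which $q^\pi_t$ are selected, I obtain the uniform entrywise bound $\kp^\pi_m\cdots\kp^\pi_1\geq(1-\delta)^m(\tilde{\kp}^\pi)^m$. Introducing the overlap constant $a_0:=\min_{i<j}\sum_{s}\min\{(\tilde{\kp}^\pi)^m_{is},(\tilde{\kp}^\pi)^m_{js}\}\in(0,1]$, which is strictly positive precisely because $(\tilde{\kp}^\pi)^m>0$, the entrywise domination transfers to the pairwise overlaps: a componentwise lower bound is preserved under the columnwise minimum, so $\sum_{s}\min\{(\kp^\pi_m\cdots\kp^\pi_1)_{is},(\kp^\pi_m\cdots\kp^\pi_1)_{js}\}\geq(1-\delta)^m a_0$ for all $i\neq j$. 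By the definition \eqref{eq:Dobrushindef} this reads $\tau(\kp^\pi_m\cdots\kp^\pi_1)\leq 1-(1-\delta)^m a_0<1$, uniformly over all choices of factors. Feeding this into Lemma \ref{lem:JSRupperbound}, taking the supremum over factors and then choosing block length equal to $m$ in the infimum gives $\hat{\rho}(Q^\pi_\cp)\leq\bigl(1-(1-\delta)^m a_0\bigr)^{1/m}<1$, which completes the proof.

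I expect the main difficulty to be conceptual rather than computational: the delicate point is guaranteeing a \emph{single} primitivity exponent $m$ and a \emph{single} overlap constant $a_0$ that work simultaneously across the entire infinite family and across arbitrary products, since this uniformity (not merely per-matrix spectral control) is what distinguishes a joint spectral radius estimate. The remaining technical steps are elementary but must be handled carefully because the matrices do not commute; in particular I would verify explicitly that the surviving multinomial term is correctly identified as $(1-\delta)^m(\tilde{\kp}^\pi)^m$ and that $A\geq B$ entrywise genuinely implies $\min\{A_{is},A_{js}\}\geq\min\{B_{is},B_{js}\}$ columnwise before summing.
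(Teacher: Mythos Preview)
Your proposal is correct and follows essentially the same approach as the paper's proof: both exploit the decomposition $\kp^\pi=(1-\delta)\tilde{\kp}^\pi+\delta q^\pi$, extract the $(1-\delta)^m(\tilde{\kp}^\pi)^m$ term from the multinomial expansion to get a uniform entrywise lower bound on arbitrary length-$m$ products, convert this to a uniform Dobrushin-coefficient bound via the overlap constant $a_0$, and then invoke Lemma~\ref{lem:JSRupperbound}. Your definition of $a_0$ in terms of $(\tilde{\kp}^\pi)^m$ is in fact the intended one (the paper's displayed definition omits the exponent $m$, which appears to be a typo given how $a_0$ is used immediately afterward).
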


Therefore, if the center $\tilde{\kp}^\pi$ is primitive and $0\le\delta<1$, without having any additional restrictions on the radius, we have that all induced kernels in $\mathcal P_\pi$ are irreducible and aperiodic. Furthermore,  the joint spectral radius of $Q^\pi_{\mathcal P}$ satisfies $\hat{\rho}(Q^\pi_{\mathcal P})<1$.

\paragraph{Total Variation (TV) Distance Uncertainty}
Regarding TV uncertainty, where the uncertainty set is characterized as 

\[
\cp \coloneqq \Bigl\{\kp :\ \forall(s,a),\ 
\mathrm{TV}\bigl(P(\cdot| s,a), \tilde{\kp}(\cdot| s,a)\bigr)\ \le\ \delta \Bigr\},
\qquad \delta \ge0,
\]
where $\mathrm{TV}(p,q):=\tfrac12\|p-q\|_1$. For a fixed policy $\pi$, the induced state–transition matrix $\kp^\pi$ is expressed as 
\begin{equation} \label{eq:tvppi}
\kp^\pi(s,s') \coloneqq \sum_{a}\pi(a| s) \kp(s'| s,a) 
\end{equation}
Then for each state $s$,
\begin{align}
\mathrm{TV}\bigl(\kp^\pi(s,\cdot), \tilde{\kp}^\pi(s,\cdot)\bigr)
&=\mathrm{TV} \Bigl(\sum_a \pi(a| s)\kp(\cdot | s,a), \sum_a \pi(a | s)\tilde{\kp}(\cdot | s,a)\Bigr)
\nonumber\\
&\leq \sum_a \pi(a | s)\mathrm{TV}\bigl(\kp(\cdot | s,a),\tilde{\kp}(\cdot |  s,a)\bigr) \nonumber\\
& \leq \delta,
\end{align}
by convexity of $\mathrm{TV}(\cdot,\cdot)$ in each argument. Hence
\begin{equation} \label{eq:TVpiuncertainty}
\quad \mathcal P^\pi \subseteq  \Bigl\{\,M\ \text{row–stochastic}:\ \forall s,\ \mathrm{TV}\bigl(M(s,\cdot), \tilde{\kp}^\pi(s,\cdot)\bigr)\le \delta \Bigr\}. \quad
\end{equation}
\begin{lemma} \label{lem:TVradius}
    Under the TV distance uncertainty set, if the centroid $\tilde{\kp}^\pi$ is irreducible and aperiodic, then there exists $m\in\mathbb N$ such that $(\tilde{\kp}^\pi)^m$ is strictly positive. Define $b_0 = \min_{i,s}((\tilde{\kp}^\pi)^m)_{is} >0$, then if the radius satisfies $\delta < \frac{b_0}{m}$, the joint spectral radius of $Q_\cp^\pi$ defined in \eqref{eq:Qfamilydef} is strictly less than $1$. Furthermore, $\kp^\pi$ is irreducible and aperiodic for all $\kp \in \cp$.
\begin{proof}

Define the $m$–step constant $a_0$ as
\[
a_0 = \min_{i<j}\ \sum_{s\in\mathcal{S}} \min\bigl\{(\tilde{\kp}^\pi)^m)_{is},((\tilde{\kp}^\pi)^m)_{js}\bigr\}\ \in(0,1],
\]
then $a_0 \ge S\,b_0$ where $S = |\mathcal{S}|$.

Regarding the joint spectral radius, for any length-$m$ product $\kp^\pi_{m}\cdots \kp^\pi_{1}$ with $\kp^\pi_{t}\in\mathcal \cp^\pi$.
By a telescoping expansion and nonexpansiveness of TV under right–multiplication by a Markov kernel,
\[
\mathrm{TV}\bigl((\kp^\pi_{m}\cdots \kp^\pi_{1})(i,\cdot),(\tilde{\kp}^\pi)^m(i,\cdot)\bigr)\ \leq m\delta\qquad\text{for all rows }i.
\]
Then, for all $i\ne j$,
\begin{align}
\sum_s \min\{(\kp^\pi_{m}\cdots \kp^\pi_{1})_{is},(\kp^\pi_{m}\cdots \kp^\pi_{1})_{js}\} &\geq 1-\mathrm{TV}\bigl((\kp^\pi_{m}\cdots \kp^\pi_{1})(i,\cdot),(\kp^\pi_{m}\cdots \kp^\pi_{1})(j,\cdot)\bigr) \nonumber \\
&\geq a_0 - 2m\delta,
\end{align}
by the triangle inequality in TV. Hence
\begin{equation}
\tau((\kp^\pi_{m}\cdots \kp^\pi_{1}))=1-\min_{i<j}\sum_s \min\{(\kp^\pi_{m}\cdots \kp^\pi_{1})_{is},(\kp^\pi_{m}\cdots \kp^\pi_{1})_{js}\}\ \le\ 1-(a_0-2m\delta).
\end{equation}
By setting $\delta <\ \frac{a_0}{2m} $,
we have $\sup_{\kp_i\in\mathcal P}\tau((\kp^\pi_{m}\cdots \kp^\pi_{1}))<1$, and by Lemma~\ref{lem:JSRupperbound},
\begin{equation}
\hat\rho\bigl(Q^\pi_{\mathcal P}\bigr)
\ \le\ \Bigl(\sup_{\kp_i\in\mathcal P}\tau((\kp^\pi_{m}\cdots \kp^\pi_{1}))\Bigr)^{1/m}
\ \le\ \bigl(1-(a_0-2m\delta)\bigr)^{1/m}\ <\ 1.
\end{equation}
Similarly, the same perturbation bound yields
\[
\min_s (\kp^\pi)^m_{is}\ \ge\ \min_s ((\tilde{\kp}^\pi))^m)_{is} - m \delta \ge\ b_0 - m \delta,
\]
so by setting $ \delta < \frac{b_0}{m}$, we have that $(\kp^\pi)^m$ is strictly positive for every $\kp\in\cp$; hence all induced kernels are irreducible and aperiodic. Since $a_0\ge Sb_0$, we have $\tfrac{a_0}{2m}\ge \tfrac{b_0}{m}$ for $S\ge2$. Therefore the  condition that $\delta < \frac{b_0}{m}$ satisfies both requirements.
\end{proof}
\end{lemma}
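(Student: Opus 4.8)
The plan is to mirror the contamination case (Lemma~\ref{lem:contaminationradius}): I would combine the joint spectral radius bound of Lemma~\ref{lem:JSRupperbound} with a perturbation estimate showing that the $m$-step Dobrushin coefficient of \emph{every} product of matrices drawn from $\cp^\pi$ stays strictly below $1$, where $m$ is a primitivity exponent of the centroid. First I would use that $\tilde{\kp}^\pi$ is irreducible and aperiodic to fix $m\in\mathbb{N}$ with $(\tilde{\kp}^\pi)^m$ entrywise strictly positive, set $b_0=\min_{i,s}((\tilde{\kp}^\pi)^m)_{is}>0$, and record the pairwise-overlap constant $a_0=\min_{i<j}\sum_{s}\min\{((\tilde{\kp}^\pi)^m)_{is},((\tilde{\kp}^\pi)^m)_{js}\}$; since every entry of every row of $(\tilde{\kp}^\pi)^m$ is at least $b_0$ and $|\mcs|=S$, this gives $a_0\ge Sb_0$.

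The technical heart is a perturbation bound that is \emph{linear} in $m$. By convexity of $\mathrm{TV}$ in each argument, the policy-averaging inclusion \eqref{eq:TVpiuncertainty} already yields $\mathrm{TV}(\kp^\pi(i,\cdot),\tilde{\kp}^\pi(i,\cdot))\le\delta$ for every row $i$ and every $\kp\in\cp$. For arbitrary $\kp_1,\dots,\kp_m\in\cp$ I would then telescope
\[
\kp^\pi_m\cdots\kp^\pi_1-(\tilde{\kp}^\pi)^m=\sum_{t=1}^{m}\kp^\pi_m\cdots\kp^\pi_{t+1}\bigl(\kp^\pi_t-\tilde{\kp}^\pi\bigr)(\tilde{\kp}^\pi)^{t-1},
\]
and bound each summand using the two nonexpansiveness facts for signed row-measures: right-multiplication by a Markov kernel does not increase the $\ell_1$ norm, and left-multiplication by a Markov kernel produces a convex combination of rows. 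This delivers $\mathrm{TV}\bigl((\kp^\pi_m\cdots\kp^\pi_1)(i,\cdot),(\tilde{\kp}^\pi)^m(i,\cdot)\bigr)\le m\delta$; applying the same bookkeeping coordinatewise, and keeping only the negative part of each perturbed row (whose total mass is at most $\delta$, not $2\delta$), gives $(\kp^\pi_m\cdots\kp^\pi_1)_{is}\ge b_0-m\delta$ for every row $i$.

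Given these two estimates the rest is routine. For the joint spectral radius, the triangle inequality in $\mathrm{TV}$ together with the identity $\sum_s\min\{p_s,q_s\}=1-\mathrm{TV}(p,q)$ gives, for all $i\ne j$,
\[
\sum_{s}\min\{(\kp^\pi_m\cdots\kp^\pi_1)_{is},(\kp^\pi_m\cdots\kp^\pi_1)_{js}\}\ \ge\ a_0-2m\delta,
\]
so $\tau(\kp^\pi_m\cdots\kp^\pi_1)\le 1-(a_0-2m\delta)$ uniformly over the $\kp_t$, and Lemma~\ref{lem:JSRupperbound} then yields $\hat{\rho}(Q^\pi_\cp)\le\bigl(1-(a_0-2m\delta)\bigr)^{1/m}<1$ whenever $\delta<a_0/(2m)$. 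For the structural claim, specializing to $\kp_1=\dots=\kp_m=\kp$ gives $(\kp^\pi)^m\ge b_0-m\delta>0$ entrywise once $\delta<b_0/m$, so every $\kp^\pi$ is primitive, hence irreducible and aperiodic. Finally $a_0\ge Sb_0$ forces $a_0/(2m)\ge b_0/m$ for $S\ge2$, so the single hypothesis $\delta<b_0/m$ simultaneously covers both conclusions.

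The step I expect to be the main obstacle is making the perturbation bound linear in $m$: a naive estimate through sub-multiplicative operator norms would produce a $(1+\delta)^m$-type blow-up and be useless. The argument must instead route through the contractive structure of Markov-kernel multiplication on signed row-measures (nonexpansiveness of $\mathrm{TV}$, equivalently of $\ell_1$, on \emph{both} sides of each telescoping term), and, for the coordinatewise bound needed for primitivity, must exploit that the negative mass of each perturbed row is at most $\delta$.
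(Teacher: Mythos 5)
Your proposal is correct and follows essentially the same route as the paper's proof: the same telescoping expansion with TV-nonexpansiveness to get the linear-in-$m$ perturbation bound $m\delta$, the same overlap/Dobrushin argument giving $\tau\le 1-(a_0-2m\delta)$ fed into Lemma~\ref{lem:JSRupperbound}, the same coordinatewise bound $b_0-m\delta$ for primitivity, and the same observation $a_0\ge Sb_0$ to unify the two radius conditions. Your explicit remark that the one-sided entrywise deviation is controlled by the TV distance itself (mass of the negative part), rather than the full $\ell_1$ distance, correctly justifies the $b_0-m\delta$ bound that the paper states without comment.
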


\paragraph{Wasserstein Distance Uncertainty}
Regarding Wasserstein uncertainty with $p\geq 1$, let $(\mathcal S,d)$ be the finite metric space. The uncertainty set can be characterized as
\[
\cp \coloneqq \Bigl\{\kp : \forall(s,a),W_p \bigl(P(\cdot | s,a), \tilde{\kp}(\cdot| s,a);d\bigr) \leq \delta\Bigr\}.
\]
 For a fixed policy $\pi$, the induced state–transition matrix $\kp^\pi$ is expressed as 
\begin{equation} \label{eq:Wppi}
\kp^\pi(s,s') \coloneqq \sum_{a}\pi(a| s) \kp(s'| s,a) 
\end{equation}
For each state $s$, by joint convexity of $W_p^p(\cdot,\cdot;d)$,
\[
\begin{aligned}
W_p^p \Bigl(\kp^\pi(s,\cdot), \tilde{\kp}^{\pi}(s,\cdot);d\Bigr)
&= W_p^p \Bigl(\sum_a \pi(a | s)\kp(\cdot| s,a), \sum_a \pi(a| s)\tilde{\kp}(\cdot| s,a);d\Bigr)\\
&\le \sum_a \pi(a| s)\, W_p^p\bigl(\kp(\cdot| s,a),\tilde{\kp}(\cdot| s,a);d\bigr)
 \leq \delta^p,
\end{aligned}
\]
hence $W_p\bigl(\kp^\pi(s,\cdot),\tilde{\kp}^\pi(s,\cdot);d\bigr)\le\delta$ for all $s$, i.e.
\begin{equation}\label{eq:Wpiuncertainty}
 \mathcal P^\pi\ \subseteq  \bigl\{\,M\ \text{row–stochastic}:\ \forall s,\ W_p(M(s,\cdot),\tilde{\kp}^\pi(s,\cdot);d)\leq \delta \bigr\}.
\end{equation}
We now draw connection between \eqref{eq:Wpiuncertainty} and the TV version in \eqref{eq:TVpiuncertainty}. Since the state space is finite, denote $\delta_{\min}:=\min_{x\neq y} d(x,y)>0$. Then, for any distributions $u,v$, we have
\[
W_1(u,v;d)\ \ge\ \delta_{\min}\,\mathrm{TV}(u,v)\qquad\text{and}\qquad W_p(u,v;d)\ \ge\ W_1(u,v;d),
\]
which implies that
\begin{equation} \label{eq:W2TVreduction}
\mathrm{TV}(u,v)\ \le\ \frac{W_1(u,v;d)}{\delta_{\min}}\ \leq \frac{W_p(u,v;d)}{\delta_{\min}}.
\end{equation}
Therefore we can reduce \eqref{eq:Wpiuncertainty} into a TV distance uncertainty set characterized as follows:
\begin{equation} \label{eq:W2TVpiuncertainty}
\quad \mathcal P^\pi \subseteq  \Bigl\{\,M\ \text{row–stochastic}:\ \forall s,\ \mathrm{TV}\bigl(M(s,\cdot), \tilde{\kp}^\pi(s,\cdot)\bigr)\le \frac{\delta}{\delta_{\min}} \Bigr\}.
\end{equation}

\begin{lemma} \label{lem：Wradius}
    Under the Wasserstein distance uncertainty set, if the centroid $\tilde{\kp}^\pi$ is irreducible and aperiodic, then there exists $m\in\mathbb N$ such that $(\tilde{\kp}^\pi)^m$ is strictly positive. Define $b_0 = \min_{i,s}((\tilde{\kp}^\pi)^m)_{is} >0$ and $\delta_{\min}:=\min_{x\neq y} d(x,y)>0$, then if the radius satisfies $\delta < \frac{\delta_{\min}b_0}{m}$, the joint spectral radius of $Q_\cp^\pi$ defined in \eqref{eq:Qfamilydef} is strictly less than $1$. Furthermore, $\kp^\pi$ is irreducible and aperiodic for all $\kp \in \cp$.
\begin{proof}
This is a direct corollary of Lemma \ref{lem:TVradius} under the condition of \eqref{eq:W2TVpiuncertainty}.
\end{proof}
\end{lemma}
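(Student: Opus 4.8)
The plan is to reduce the Wasserstein case entirely to the TV case already settled in Lemma~\ref{lem:TVradius}, exploiting the embedding \eqref{eq:W2TVpiuncertainty} of the policy–induced Wasserstein uncertainty set into a policy–induced TV uncertainty set with an inflated radius. First I would record that, because the state space is finite, the chain $W_p(u,v;d)\ge W_1(u,v;d)\ge \delta_{\min}\,\mathrm{TV}(u,v)$ holds for every $p\ge 1$ and all distributions $u,v$: the first inequality is monotonicity of the $L^p$–norm of $d$ under any fixed coupling (hence it passes to the infimum over $\Gamma(u,v)$), and the second is the standard lower bound of $W_1$ by $\delta_{\min}$ times the mass that must be relocated. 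Dividing gives $\mathrm{TV}(u,v)\le W_p(u,v;d)/\delta_{\min}$, which is exactly \eqref{eq:W2TVreduction}; combining this with the joint-convexity computation that produced \eqref{eq:Wpiuncertainty} yields the inclusion \eqref{eq:W2TVpiuncertainty}, namely $\mathcal P^\pi\subseteq\{M\ \text{row-stochastic}:\forall s,\ \mathrm{TV}(M(s,\cdot),\tilde{\kp}^\pi(s,\cdot))\le \delta/\delta_{\min}\}$.

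Next I would invoke Lemma~\ref{lem:TVradius} with effective TV radius $\delta_{\mathrm{TV}}\coloneqq \delta/\delta_{\min}$. The hypotheses align exactly: $\tilde{\kp}^\pi$ is irreducible and aperiodic (hence primitive), so the same integer $m$ with $(\tilde{\kp}^\pi)^m>0$ entrywise and the same constant $b_0=\min_{i,s}((\tilde{\kp}^\pi)^m)_{is}>0$ used in Lemma~\ref{lem:TVradius} are available here, and the assumed bound $\delta<\delta_{\min}b_0/m$ is precisely $\delta_{\mathrm{TV}}<b_0/m$, the radius restriction required by that lemma. Lemma~\ref{lem:TVradius} then provides, for the larger TV set, that the joint spectral radius of its associated fluctuation family is strictly below $1$ and that every member raised to the $m$-th power is entrywise strictly positive (so in particular each such kernel is irreducible and aperiodic and admits a unique stationary distribution, making the corresponding $E_\kp$ and $Q^\pi_\kp$ well defined).

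Finally I would transfer these conclusions back to $\cp$ by monotonicity under set inclusion. Since $\mathcal P^\pi$ is contained in the TV set, the family $Q^\pi_\cp=\{\kp^\pi-E_\kp:\kp\in\cp\}$ is a subfamily of the TV fluctuation family, so $\hat\rho(Q^\pi_\cp)$ is at most the joint spectral radius of that larger family, hence $\hat\rho(Q^\pi_\cp)<1$; and every $\kp^\pi$ with $\kp\in\cp$ lies in the TV set, so $(\kp^\pi)^m$ is strictly positive, giving irreducibility and aperiodicity of $\kp^\pi$ for all $\kp\in\cp$. The only points needing care are verifying $W_p\ge W_1$ for $p\ge 1$ and keeping the radius translation $\delta\mapsto\delta/\delta_{\min}$ consistent with the condition in Lemma~\ref{lem:TVradius}; there is no genuine obstacle beyond this bookkeeping, since all the analytic work — the Dobrushin-coefficient contraction of $m$-step products and the resulting joint spectral radius bound — has already been carried out in Lemma~\ref{lem:TVradius}.
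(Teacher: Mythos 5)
Your proposal is correct and follows essentially the same route as the paper: the paper's own proof is a one-line reduction to Lemma~\ref{lem:TVradius} via the inclusion \eqref{eq:W2TVpiuncertainty}, which rests on exactly the chain $W_p\ge W_1\ge \delta_{\min}\,\mathrm{TV}$ and the radius translation $\delta\mapsto\delta/\delta_{\min}$ that you spell out. Your additional care in noting that the proof of Lemma~\ref{lem:TVradius} only uses the row-wise TV containment of the induced set $\mathcal P^\pi$ (so monotonicity of the joint spectral radius under set inclusion finishes the transfer) is a useful explicit check that the paper leaves implicit.
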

\noindent\emph{Remarks.}
(i) If $d$ is normalized so $\delta_{\min}=1$, the thresholds simplify accordingly. 
(ii) One can also argue via $W_p^p\ge \delta_{\min}^p\,\mathrm{TV}$, which gives the alternative (more conservative when $\varepsilon$ is small) choice $r=\delta^p/\delta_{\min}^p$; the linear reduction $r=\delta/\delta_{\min}$ above is sharper and suffices for the bounds.

\subsubsection{Extremal Norm Construction} \label{extremalnormconstruction}

Under the radius conditions of Lemma \ref{lem:contaminationradius}-\ref{lem：Wradius}, we have that :
\begin{equation}
    r^* \coloneqq  \hat{\rho}(Q^\pi_\cp )  < 1
\end{equation}
We follow similar process for constructing our desired semi-norm $\|\cdot\|_\cp$ as in Appendix \ref{proofspan-contraction} by first constructing a norm such that all $Q_\kp^\pi$ are strictly less then one under that norm. We choose $\alpha \in (r^*,1)$ and we follow the approach in \cite{wirth2002generalized} by constructing an extremal norm $\|\cdot\|_{\rm ext}$ as follows:
\begin{equation} \label{eq:extremalnormdef}
    \|x\|_{\rm ext} \coloneqq \sup_{k \geq 0} \sup_{Q_1,\ldots, Q_k \in Q^\pi_\cp}  \alpha^{-k} \|Q_k Q_{k-1} \ldots  Q_1 x \|_2 \quad \text{where} \quad Q^\pi_\cp = \{Q^\pi_\kp : \kp \in \cp\}
\end{equation}
Note that we follow the convention that $\|Q_k Q_{k-1} \ldots  Q_1 x \|_2 = \|x\|_2$ when $k=0$.
\begin{lemma} \label{lem:extremalnormconstruction}
    Under Assumption \ref{ass:sameg} and the radius conditions of Lemma \ref{lem:contaminationradius}-\ref{lem：Wradius}, the operator $\|\cdot\|_{\rm ext}$ is a valid norm with $\|Q_\kp^\pi\|_{\rm ext}<1$ for all $\kp \in \cp$
    \begin{proof}
        We first prove that $\|\cdot\|_{\rm ext}$ is bounded. Following Lemma \ref{lem:bergerlemmaIV} and choosing $\lambda \in (r^*,\alpha)$, then there exist a positive constant $C < \infty$ such that 
        \begin{equation}
             \|Q_k Q_{k-1} \ldots  Q_1 \|_2 \leq C \lambda^k 
        \end{equation}
        Hence for each $k$ and for all $x \in \mathbb{R}^{S}$,
        \begin{equation}
            \alpha^{-k} \|Q_k Q_{k-1} \ldots  Q_1 x \|_2 \leq \alpha^{-k}C\lambda^k \|x\|_2 = C\left(\frac{\lambda}{\alpha}\right)^{-k}\|x\|_2 \longrightarrow 0 \quad \text{as} \quad k \rightarrow \infty
        \end{equation}
        Thus the double supremum in \eqref{eq:extremalnormdef} is over a bounded and vanishing sequence, so $\|\cdot\|_{\rm ext}$ bounded. 

        To check that $\|\cdot\|_{\rm ext}$ is a valid norm, note that if $x = \bf{0}$, $\|x\|_{\rm ext}$ is directly $0$. On the other hand, if $\|x\|_{\rm ext}=0$, we have
        \begin{equation}
            \sup_{k = 0} \sup_{Q_1,\ldots, Q_k \in Q^\pi_\cp}  \alpha^{-k} \|Q_k Q_{k-1} \ldots  Q_1 x \|_2 = \|x\|_2 = 0 \Rightarrow x = \bf{0}
        \end{equation}
        Regarding homogeneity, observe that for any $c\in \mathbb{R}$ and $x \in \mathbb{R}^{S}$,
        \begin{equation}
            \|cx\|_{\rm ext} = \sup_{k \geq 0} \sup_{Q_1,\ldots, Q_k \in Q^\pi_\cp}  \alpha^{-k} \|Q_k Q_{k-1} \ldots  Q_1 (cx) \|_2 = |c| \|x\|_{\rm ext}
        \end{equation}
        Regarding triangle inequality, using $\|Q_k \ldots  Q_1 (x+y) \|_2 \leq \|Q_k \ldots  Q_1 x\|_2+\|Q_k \ldots  Q_1 y \|_2$ for any $x,y \in \mathbb{R}^S$, we obtain,
        \begin{equation}
            \|x+y\|_{\rm ext} = \sup_{k \geq 0} \sup_{Q_1,\ldots, Q_k \in Q^\pi_\cp}  \alpha^{-k} \|Q_k Q_{k-1} \ldots  Q_1 (x+y) \|_2 \leq \|x\|_{\rm ext}+\|y\|_{\rm ext}
        \end{equation}
        For any $\kp \in \cp$, we have
        \begin{align} \label{eq:extremalcontraction}
            \|Q_\kp^\pi x\|_{\rm ext} &= \sup_{k \geq 0} \sup_{Q_1,\ldots, Q_k \in Q^\pi_\cp}  \alpha^{-k} \|Q_k Q_{k-1} \ldots  Q_1 (Q_\kp^\pi x) \|_2 \nonumber \\
            &\leq \sup_{k \geq 1} \sup_{Q_1,\ldots, Q_k \in Q^\pi_\cp}  \alpha^{-(k-1)} \|Q_k Q_{k-1} \ldots  Q_1 x \|_2 \nonumber \\
            &= \alpha \sup_{k \geq 1} \sup_{Q_1,\ldots, Q_k \in Q^\pi_\cp}  \alpha^{-k} \|Q_k Q_{k-1} \ldots  Q_1 x \|_2 \nonumber \\
            &\leq \alpha \sup_{k \geq 0} \sup_{Q_1,\ldots, Q_k \in Q^\pi_\cp}  \alpha^{-k} \|Q_k Q_{k-1} \ldots  Q_1 x \|_2 \nonumber \\
            &= \alpha \|x\|_{\rm ext}
        \end{align}
        Since $\kp$ is arbitrary, \eqref{eq:extremalcontraction} implies that for any $\kp \in \cp$,
        \begin{equation}
            \|Q_\kp^\pi\|_{\rm ext} = \sup_{x\neq \bf{0}}\frac{\|Q_\kp^\pi x\|_{\rm ext}}{\|x\|_{\rm ext}} \leq \alpha < 1
        \end{equation}
    \end{proof}
    \end{lemma}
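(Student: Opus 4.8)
The plan is to verify, in order, that $\|\cdot\|_{\rm ext}$ is finite-valued, that it satisfies the three norm axioms, and that it contracts each $Q_\kp^\pi$ by the factor $\alpha$; the contraction bound then falls out of the defining supremum in \eqref{eq:extremalnormdef} almost for free. I would start with finiteness, since it is the only step requiring external input. Under the radius restrictions of Section~\ref{radiusrestrictions} we have $r^*=\hat\rho(Q^\pi_\cp)<1$, and by construction $\alpha\in(r^*,1)$; fix any $\lambda\in(r^*,\alpha)$. The quantitative joint-spectral-radius estimate quoted as Lemma~\ref{lem:bergerlemmaIV} then supplies a finite constant $C$ with $\|Q_k\cdots Q_1\|_2\le C\lambda^k$ uniformly over all $k\ge0$ and all $Q_1,\dots,Q_k\in Q^\pi_\cp$. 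Substituting into \eqref{eq:extremalnormdef} gives $\alpha^{-k}\|Q_k\cdots Q_1 x\|_2\le C(\lambda/\alpha)^k\|x\|_2$, which is bounded by $C\|x\|_2$ for every $k$ (as $\lambda/\alpha<1$) and tends to $0$ as $k\to\infty$; hence the double supremum is taken over a uniformly bounded, eventually vanishing family, so $\|x\|_{\rm ext}\le C\|x\|_2<\infty$. Retaining only the $k=0$ term gives the matching lower bound $\|x\|_{\rm ext}\ge\|x\|_2$, so $\|\cdot\|_{\rm ext}$ is finite and equivalent to the Euclidean norm.

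Granting finiteness, the norm axioms are immediate, each checked termwise in \eqref{eq:extremalnormdef}. Definiteness follows from $\|x\|_{\rm ext}\ge\|x\|_2$, which forces $x=0$ whenever the left-hand side vanishes, while $\|0\|_{\rm ext}=0$ is clear. Absolute homogeneity holds because each map $x\mapsto Q_k\cdots Q_1 x$ is linear and $\|\cdot\|_2$ is homogeneous, so every term scales by $|c|$ and therefore so does the supremum. The triangle inequality follows by applying the Euclidean triangle inequality inside each term, $\alpha^{-k}\|Q_k\cdots Q_1(x+y)\|_2\le\alpha^{-k}\|Q_k\cdots Q_1 x\|_2+\alpha^{-k}\|Q_k\cdots Q_1 y\|_2$, and then invoking subadditivity of the supremum over the index set $(k,Q_1,\dots,Q_k)$.

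For the contraction, fix $\kp\in\cp$ and use that $Q_\kp^\pi$ is itself a member of $Q^\pi_\cp$: for any $x$ and any $Q_1,\dots,Q_k\in Q^\pi_\cp$, the composite $Q_k\cdots Q_1 Q_\kp^\pi$ is a product of $k+1$ matrices drawn from $Q^\pi_\cp$. Writing $\alpha^{-k}=\alpha\cdot\alpha^{-(k+1)}$ and taking suprema in \eqref{eq:extremalnormdef},
\begin{align}
\|Q_\kp^\pi x\|_{\rm ext}
&=\alpha\,\sup_{k\ge0}\ \sup_{Q_1,\dots,Q_k\in Q^\pi_\cp}\alpha^{-(k+1)}\bigl\|Q_k\cdots Q_1 Q_\kp^\pi x\bigr\|_2 \nonumber\\
&\le\alpha\,\sup_{m\ge0}\ \sup_{Q_1,\dots,Q_m\in Q^\pi_\cp}\alpha^{-m}\bigl\|Q_m\cdots Q_1 x\bigr\|_2
=\alpha\|x\|_{\rm ext},
\end{align}
where the inequality merely enlarges the collection of admissible products (dropping the constraint that the innermost factor be $Q_\kp^\pi$, and allowing $m=0$). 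Dividing by $\|x\|_{\rm ext}$ and taking the supremum over $x\ne0$ yields $\|Q_\kp^\pi\|_{\rm ext}\le\alpha<1$, and since $\kp$ was arbitrary this is the assertion. I expect the only real obstacle to be bookkeeping around Lemma~\ref{lem:bergerlemmaIV}: one must pick $\lambda$ strictly between $r^*$ and $\alpha$ so that both the growth estimate holds ($\lambda>r^*$) and the resulting bound is geometrically summable ($\lambda<\alpha$), and one must keep the uniformity over product length and factor choice intact through the reindexing; beyond the one-step exponent shift, no new idea is needed.
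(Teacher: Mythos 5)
Your proposal is correct and follows essentially the same route as the paper's proof: finiteness via the joint-spectral-radius bound of Lemma \ref{lem:bergerlemmaIV} with $\lambda\in(r^*,\alpha)$, the norm axioms checked termwise (with definiteness from the $k=0$ term), and the contraction obtained by absorbing $Q_\kp^\pi$ into the product and reindexing $k\mapsto k+1$ before enlarging the supremum. No substantive differences.
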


\subsubsection{Semi-Norm Contraction for Robust Bellman Operator}
    We now follow the same method as \eqref{eq:constructionofPnorm} to construct the semi-norm $\|\cdot\|_\cp$. Define the operator $\|\cdot\|_\cp$ as
\begin{equation} \label{eq:robustseminormconstruction}
    \|x\|_{\cp} \coloneqq \sup_{\kp\in\cp}\bigl\|Q_\kp^\pi x\bigr\|_{\rm ext}+\epsilon \inf_{c\in\mathbb R}\bigl\|x - c \e\bigr\|_{\rm ext}
\end{equation}
where $0<\epsilon<1-\alpha$. 
\begin{lemma} \label{lem:cpnormcontraction}
    The operator $\|\cdot\|_\cp$ is a valid semi-norm with kernel being exactly $\{c\e : c\in\mathbb{R}\}$ under Assumption \ref{ass:sameg} and the radius conditions of Lemma \ref{lem:contaminationradius}-\ref{lem：Wradius}. Furthermore, for all $x\in \mathbb{R}^S$, we have $\bigl\|\kp^\pi x\bigr\|_{\cp} \leq (\alpha+\epsilon)\bigl\| x\bigr\|_{\cp}$ for all $\kp \in \cp$.
    \begin{proof}
    Regarding positive homogeneity and nonnegativity,  for any scalar $\lambda$ and $x\in\mathbb{R}^S$,
    \[
    \|\lambda x\|_{\cp}
    = \sup_{\kp\in\cp}\bigl\|Q_\kp^\pi (\lambda x)\bigr\|_{\rm ext}+\epsilon \inf_{c\in\mathbb R}\bigl\|\lambda x - c \e\bigr\|_{\rm ext}
    = |\lambda| \sup_{\kp\in\cp}\bigl\|Q_\kp^\pi x \bigr\|_{\rm ext}+\epsilon |\lambda| \inf_{c\in\mathbb R}\bigl\| x - c \e\bigr\|_{\rm ext}
    =|\lambda|\|x\|_{\rm ext}
    \]
    and $\|x\|_{\rm ext}\ge0$. Regarding triangle inequality, for any $x,y\in\mathbb{R}^S$, note that for any $\kp \in \cp$,
    \begin{equation}
        \|Q^\pi_\kp (x+y)\|_{\rm ext} \leq \|Q^\pi_\kp x\|_{\rm ext} + \|Q^\pi_\kp y\|_{\rm ext}
    \end{equation}
    Taking supremum over $\kp$ on both sides yields
    \begin{equation}
        \sup_{\kp\in \cp} \|Q^\pi_\kp (x+y)\|_{\rm ext} \leq \sup_{\kp\in \cp} \|Q^\pi_\kp x\|_{\rm ext} + \sup_{\kp\in \cp} \|Q^\pi_\kp y\|_{\rm ext}
    \end{equation}
    Thus, we have
    \begin{align*}
    \|x+y\|_{\cp}
    &= \sup_{\kp\in\cp}\bigl\|Q_\kp^\pi ( x+y)\bigr\|_{\rm ext}+\epsilon \inf_{c}\bigl\| x+y - c \e\bigr\|_{\rm ext}\\
    &\le   \sup_{\kp\in \cp} \|Q^\pi_\kp x\|_{\rm ext} + \sup_{\kp\in \cp} \|Q^\pi_\kp y\|_{\rm ext}
       + \epsilon\inf_{a,b}\bigl\|x - a\e + y - b\e\bigr\|_{\rm ext}\\
    &\le \sup_{\kp\in \cp} \|Q^\pi_\kp x\|_{\rm ext} + \sup_{\kp\in \cp} \|Q^\pi_\kp y\|_{\rm ext}
       + \epsilon\inf_{a}\bigl\|x - a\e\bigr\|_{\rm ext}
       + \epsilon\inf_{b}\bigl\|y - b\e\bigr\|_{\rm ext}\\
    &= \|x\|_{\cp} + \|y\|_{\cp}.
    \end{align*} Regarding the kernel, if $x=k \e$ for some $k\in \mathbb{R}$, then similar to \eqref{eq:kernelseminorm}, we have
    \begin{equation}
        \|x\|_\cp =  \sup_{\kp\in\cp}\bigl\|Q_\kp^\pi ( k \e)\bigr\|_{\rm ext} +\epsilon\inf_{c}\|k\e-c\e\|_{\rm ext}=0+0=0
    \end{equation}
    On the other hand, if $x\notin \{c\e : c\in\mathbb{R}\}$, we know that 
    \begin{equation}
        \|x\|_\cp \geq \epsilon\inf_{c}\|x-c\e\|_{\rm ext} >0
    \end{equation}    
    Thus, the kernel of $\|\cdot\|_\cp$ is exactly $\{c\e : c\in\mathbb{R}\}$. We now show that, for any $x\in\mathbb{R}^S$ and $\kp \in \cp$,
    \begin{align}
    \bigl\|\kp^\pi x\bigr\|_{\cp}
    &= \sup_{Q \in\ Q_\cp^\pi}\bigl\|Q  \kp^\pi x\bigr\|_{\rm ext} +\epsilon\inf_{c}\bigl\|\kp^\pi x - c\e\bigr\|_{\rm ext}
    \nonumber \\
    &=     \sup_{Q \in\ Q_\cp^\pi}\bigl\|Q  Q_\kp^\pi x + Q E_\kp x\bigr\|_{\rm ext} +\epsilon\inf_{c}\bigl\| Q_\kp^\pi x + E_\kp x - c\e\bigr\|_{\rm ext}    \nonumber \\
    &=     \sup_{Q \in\ Q_\cp^\pi}\bigl\|Q  Q_\kp^\pi x \|_{\rm ext} +\epsilon\inf_{c}\bigl\| Q_\kp^\pi x  - c\e\bigr\|_{\rm ext}    \nonumber \\
    &\leq \alpha  \sup_{Q \in\ Q_\cp^\pi}\bigl\|Q x \|_{\rm ext}
       +\epsilon \|Q^\pi x\|_{\rm ext} \nonumber \\
    &= (\alpha+\epsilon)\|Q^\pi x\|_{\rm ext} \nonumber \\
    &\leq (\alpha+\epsilon)\|x\|_{\cp}.
    \end{align}
    \end{proof}
\end{lemma}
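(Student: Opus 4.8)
The plan is to mirror the argument used for Lemma~\ref{lem:validseminorm} in the non-robust case, now replacing the single fluctuation matrix $Q^\pi$ by the compact family $Q^\pi_\cp$ and the quadratic norm $\|\cdot\|_Q$ by the extremal norm $\|\cdot\|_{\rm ext}$ furnished by Lemma~\ref{lem:extremalnormconstruction}, which already gives the uniform bound $\|Q^\pi_\kp\|_{\rm ext}\le\alpha<1$ for every $\kp\in\cp$. First I would check the three semi-norm axioms for the functional in \eqref{eq:robustseminormconstruction}. Nonnegativity is immediate since both summands are nonnegative. Positive homogeneity follows term by term: $\sup_\kp\|Q^\pi_\kp(\lambda x)\|_{\rm ext}=|\lambda|\sup_\kp\|Q^\pi_\kp x\|_{\rm ext}$, and reindexing $c\mapsto c/\lambda$ (the case $\lambda=0$ being trivial) gives $\inf_c\|\lambda x-c\e\|_{\rm ext}=|\lambda|\inf_c\|x-c\e\|_{\rm ext}$. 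For subadditivity the only point needing attention is the $\sup$-term: from the pointwise estimate $\|Q^\pi_\kp(x+y)\|_{\rm ext}\le\|Q^\pi_\kp x\|_{\rm ext}+\|Q^\pi_\kp y\|_{\rm ext}\le\sup_{\kp'}\|Q^\pi_{\kp'}x\|_{\rm ext}+\sup_{\kp'}\|Q^\pi_{\kp'}y\|_{\rm ext}$, taking the supremum over $\kp$ on the left yields subadditivity of the first term, while the second term is subadditive via the choice $c=a+b$ in $\inf_c\|x+y-c\e\|_{\rm ext}\le\inf_{a,b}\|(x-a\e)+(y-b\e)\|_{\rm ext}$.

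Next I would identify the kernel. If $x=k\e$ then each $Q^\pi_\kp\e=(\kp^\pi-E_\kp)\e=\e-\e=0$, since $\kp^\pi$ is row-stochastic and $E_\kp\e=\e(d^\pi_\kp)^\top\e=\e$, and also $\inf_c\|k\e-c\e\|_{\rm ext}=0$ by taking $c=k$, so $\|x\|_\cp=0$. Conversely, if $x\notin\{c\e:c\in\mathbb R\}$ then, because $\{c\e\}$ is a fixed one-dimensional subspace of the finite-dimensional space $\mathbb R^S$ and $\|\cdot\|_{\rm ext}$ is a genuine norm, the distance $\inf_c\|x-c\e\|_{\rm ext}$ is attained and strictly positive, so $\|x\|_\cp\ge\epsilon\inf_c\|x-c\e\|_{\rm ext}>0$. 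Hence the kernel is exactly $\{c\e:c\in\mathbb R\}$.

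Finally, for the one-step shrinkage I would fix $\kp\in\cp$ and $x\in\mathbb R^S$ and split $\kp^\pi=Q^\pi_\kp+E_\kp$. The key algebraic observation is that $E_\kp x=\big((d^\pi_\kp)^\top x\big)\e$ is a multiple of $\e$, and every $Q\in Q^\pi_\cp$ annihilates $\e$; hence $QE_\kp x=0$ for all $Q\in Q^\pi_\cp$. Consequently $\|Q\kp^\pi x\|_{\rm ext}=\|QQ^\pi_\kp x\|_{\rm ext}\le\alpha\|Q^\pi_\kp x\|_{\rm ext}\le\alpha\sup_{\kp'}\|Q^\pi_{\kp'}x\|_{\rm ext}$, and taking the supremum over $Q$ bounds the first summand of $\|\kp^\pi x\|_\cp$ by $\alpha\sup_{\kp'}\|Q^\pi_{\kp'}x\|_{\rm ext}$; for the second summand, writing $\kp^\pi x-c\e=Q^\pi_\kp x+\big((d^\pi_\kp)^\top x-c\big)\e$ and choosing $c=(d^\pi_\kp)^\top x$ gives $\inf_c\|\kp^\pi x-c\e\|_{\rm ext}\le\|Q^\pi_\kp x\|_{\rm ext}\le\sup_{\kp'}\|Q^\pi_{\kp'}x\|_{\rm ext}$. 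Adding the two estimates and using $\|x\|_\cp\ge\sup_{\kp'}\|Q^\pi_{\kp'}x\|_{\rm ext}$ gives $\|\kp^\pi x\|_\cp\le(\alpha+\epsilon)\|x\|_\cp$, with $\alpha+\epsilon<1$ because $\epsilon<1-\alpha$. The main obstacle here is not any single hard estimate but getting the uniformity in $\kp$ right everywhere: the proof hinges on combining the family-wide bound $\|Q^\pi_\kp\|_{\rm ext}\le\alpha$ from Lemma~\ref{lem:extremalnormconstruction} with the identity $Q\e=0$, which together let a supremum over the entire uncertainty set be absorbed into the single contraction factor $\alpha+\epsilon$.
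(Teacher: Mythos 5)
Your proposal is correct and follows essentially the same route as the paper's proof: term-by-term verification of the semi-norm axioms, the identification of the kernel via $Q^\pi_\kp\e=0$ together with positivity of the quotient term, and the one-step bound obtained from the decomposition $\kp^\pi=Q^\pi_\kp+E_\kp$, the annihilation $QE_\kp x=0$, the uniform bound $\|Q^\pi_\kp\|_{\rm ext}\le\alpha$, and the final absorption via $\|x\|_\cp\ge\sup_{\kp'}\|Q^\pi_{\kp'}x\|_{\rm ext}$. Your explicit choice $c=(d^\pi_\kp)^\top x$ in the quotient term is just a slightly more transparent rendering of the paper's shift of the infimum variable.
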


Since $\alpha\in(0,1)$ and $\epsilon \in (0,1-\alpha)$. Thus, let $\gamma = \alpha+\epsilon$ then $\gamma \in (0,1)$. Substituting the above result back to \eqref{eq:robustbellmanstep1}, we obtain
\begin{equation}
     \|\mathbf{T}_g(V_1) -  \mathbf{T}_g(V_2) \|_\cp \leq \| \pi(a|s) \Tilde{p}_{(V_1,V_2)} \bigl[V_1(s')-V_2(s')\bigr] \|_\cp \leq \gamma \|V_1 - V_2 \|_\cp
\end{equation}


\section{Biased Stochastic Approximation Convergence Rate}  \label{seminormcontractionwithbias}
In Section \ref{contractionsection}, we established that the robust Bellman operator is a contraction under the semi-norm $\|\cdot\|_\cp$, ensuring that policy evaluation can be analyzed within a well-posed stochastic approximation framework. However, conventional stochastic approximation methods typically assume unbiased noise, where variance diminishes over time without introducing systematic drift. In contrast, the noise in robust policy evaluation under TV and Wasserstein distance uncertainty sets exhibits a small but persistent bias, arising from the estimators of the support functions $\hat{\sigma}_{\cp^a_s}(V)$ (discussed in Section \ref{QueriesSection}). This bias, if not properly addressed, can lead to uncontrolled error accumulation, affecting the reliability of policy evaluation. To address this challenge, this section introduces a novel analysis of biased stochastic approximation, leveraging properties of dual norms to ensure that the bias remains controlled and does not significantly impact the convergence rate. Our results extend prior work on unbiased settings and provide the first explicit finite-time guarantees, which are further used to establish the sample complexity of policy evaluation in robust average-reward RL. Specifically, we analyze the iteration complexity for solving the fixed equivalence class equation $H(x^*) - x^* \in \overline{E}$ where $\overline{E}\coloneqq \{c \mathbf{e} : c \in \mathbb{R}\}$ with $\mathbf{e}$ being the all-ones vector. The stochastic approximation iteration being used is as follows:
\begin{equation}\label{eq:SA-update}
   x^{t+1}=x^t + \eta_t \bigl[\widehat{H}(x^t) - x^t\bigr],
   \quad
   \text{where}\quad
   \widehat{H}(x^t)=H(x^t) + w^t.
\end{equation}
with $\eta_t>0$ being the step-size sequence. We assume that there exist $\gamma \in (0,1)$ such that
\begin{equation} \label{eq:Hcontraction}
     \|H(x) - H(y)\|_{\cp}\leq \gamma\,\|x - y\|_{\cp}, \quad 
  \forall x, y
\end{equation}
We also assume that the noise terms $\omega^t$ are i.i.d. and have bounded bias and variance
\begin{equation}  \label{eq:omegabounded}
    \mathbb{E}[\,\|w^t\|_{\cp}^2 | \mathcal{F}^t] \le A + B\,\|x^t - x^*\|_{\cp}^2  \quad  \text{and}\quad \bigl\|\mathbb{E}[\,w^t | \mathcal{F}^t]\bigr\|_{\cp} \le \varepsilon_{\mathrm{bias}}
\end{equation}

\begin{theorem} \label{thm:informalbiasedSA}
   If $x^t$ is generated by \eqref{eq:SA-update} with all assumptions in \eqref{eq:Hcontraction} and \eqref{eq:omegabounded} satisfied, then if the stepsize $\eta_t \coloneqq \cO(\frac{1}{t})$,
    \begin{equation} \label{eq:biasedSA}
        \mathbb{E}\Bigl[\|x^T - x^*\|^2_{\cp}\Bigr] \leq  \cO\left(\frac{1}{T^2}\right)\|x^0 - x^*\|^2_{\cp} + \cO\left(\frac{A}{(1-\gamma)^2T}\right) +  \cO\left(\frac{x_{\mathrm{sup}} \varepsilon_{\text{bias}} \log T }{1-\gamma} \right)
    \end{equation}
    where  $x_{\mathrm{sup}} \coloneqq \sup_x \|x\|_{\cp}$ is the upper bound of the $\|\cdot\|_{\cp}$ semi-norm for all $x^t$.
\end{theorem}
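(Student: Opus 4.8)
The plan is to control the squared semi-norm error $\mathbb{E}\bigl[\|x^t-x^*\|_{\cp}^2\bigr]$ through a smoothing argument, so that the contraction \eqref{eq:Hcontraction} can be exploited via an inner-product-type descent inequality (which makes the mean-zero part of the noise drop out), and then to run a standard step-size recursion. First I would record the key algebraic fact that $H$ commutes with additive constants: since $\sigma_{\cp^a_s}(V+c\e)=\sigma_{\cp^a_s}(V)+c$ for every $(s,a)$, one gets $H(x+c\e)=H(x)+c\e$, so the ``fixed equivalence class'' condition $H(x^*)-x^*\in\overline E$ reads $H(x^*)=x^*+c^*\e$ for a fixed scalar $c^*$, and both the residual $H(\cdot)-(\cdot)$ and the noise $w^t$ (a difference of support-function estimators, hence itself shift-invariant) depend on their argument only through its $\overline E$-coset, which is what keeps everything bounded via $x_{\mathrm{sup}}$. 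Setting $z^t:=x^t-x^*$ and using $H(x^t)-x^t=\bigl(H(x^t)-H(x^*)\bigr)+c^*\e-z^t$, the iteration \eqref{eq:SA-update} becomes $z^{t+1}=(1-\eta_t)z^t+\eta_t\bigl(H(x^t)-H(x^*)\bigr)+\eta_t c^*\e+\eta_t w^t$, with $\|H(x^t)-H(x^*)\|_{\cp}\le\gamma\|z^t\|_{\cp}$ and the constant vector in $\ker\|\cdot\|_{\cp}$. I would also split $w^t=\bar w^t+\tilde w^t$ with $\bar w^t=\mathbb{E}[w^t\mid\mathcal F^t]$ (so $\|\bar w^t\|_{\cp}\le\varepsilon_{\mathrm{bias}}$) and $\tilde w^t$ a martingale difference with $\mathbb{E}[\|\tilde w^t\|_{\cp}^2\mid\mathcal F^t]\le 4A+4B\|z^t\|_{\cp}^2+4\varepsilon_{\mathrm{bias}}^2$.

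Because $\|\cdot\|_{\cp}$ is not Euclidean, $\mathbb{E}\|z^{t+1}\|_{\cp}^2$ does not split cleanly, so I would replace $\tfrac12\|\cdot\|_{\cp}^2$ by its generalized Moreau envelope $M_\mu(x):=\min_{u}\bigl\{\tfrac12\|u\|_{\cp}^2+\tfrac1{2\mu}\|x-u\|_2^2\bigr\}$, used as a smooth surrogate Lyapunov function; this is exactly the point where the dual-norm (Fenchel-conjugate) structure enters. The properties I would invoke, all standard for generalized Moreau envelopes of squared norms (as in smooth-convex-envelope analyses of contractive stochastic approximation): $M_\mu$ is convex, $2$-homogeneous (hence $\langle\nabla M_\mu(x),x\rangle=2M_\mu(x)$), $\tfrac1\mu$-smooth with respect to $\|\cdot\|_2$, sandwiched as $\tfrac{1}{1+\mu\ell^2}\cdot\tfrac12\|x\|_{\cp}^2\le M_\mu(x)\le\tfrac12\|x\|_{\cp}^2$ (with $\ell$ a norm-equivalence constant on the mean-zero subspace), invariant under $\overline E$-shifts so that $\nabla M_\mu(x)\perp\e$ and $\|\nabla M_\mu(x)\|_{\cp,*}\le\|x\|_{\cp}$ (dual of $\|\cdot\|_{\cp}$), and it satisfies the homogeneity-Cauchy--Schwarz bound $\langle\nabla M_\mu(a),b\rangle\le 2\sqrt{M_\mu(a)M_\mu(b)}$.

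Next I would apply the descent lemma for $M_\mu$ along the iteration and take $\mathbb{E}[\cdot\mid\mathcal F^t]$. The martingale term $\langle\nabla M_\mu(z^t),\eta_t\tilde w^t\rangle$ vanishes since $\nabla M_\mu(z^t)$ is $\mathcal F^t$-measurable; the stray constant $\langle\nabla M_\mu(z^t),\eta_t c^*\e\rangle$ vanishes since $\nabla M_\mu(z^t)\perp\e$; the drift term $\langle\nabla M_\mu(z^t),\eta_t(H(x^t)-H(x^*)-z^t)\rangle$ is, by $2$-homogeneity together with the Cauchy--Schwarz bound and $M_\mu(H(x^t)-H(x^*))\le\tfrac{\gamma^2}{2}\|z^t\|_{\cp}^2\le\gamma^2(1+\mu\ell^2)M_\mu(z^t)$, at most $-2\eta_t\bigl(1-\gamma\sqrt{1+\mu\ell^2}\bigr)M_\mu(z^t)$; the bias term $\langle\nabla M_\mu(z^t),\eta_t\bar w^t\rangle\le\eta_t\|\nabla M_\mu(z^t)\|_{\cp,*}\varepsilon_{\mathrm{bias}}\le\eta_t\|z^t\|_{\cp}\varepsilon_{\mathrm{bias}}\le c\,\eta_t\,x_{\mathrm{sup}}\varepsilon_{\mathrm{bias}}$ (using $\|z^t\|_{\cp}\le x_{\mathrm{sup}}+\|x^*\|_{\cp}$); and the $\tfrac1\mu$-smoothness remainder $\tfrac{\eta_t^2}{2\mu}\|H(x^t)-x^t+w^t\|_2^2$ is, in expectation and since this residual is determined by the bounded coset of $x^t$, at most $c'\eta_t^2(A+1)+c''\eta_t^2 B\,M_\mu(z^t)$. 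Choosing $\mu$ small enough that $\gamma\sqrt{1+\mu\ell^2}\le 1-\tfrac{1-\gamma}{2}$ and collecting terms yields, for all $t$,
\begin{equation}
\mathbb{E}\bigl[M_\mu(z^{t+1})\mid\mathcal F^t\bigr]\le\bigl(1-(1-\gamma)\eta_t+c_1 B\eta_t^2\bigr)M_\mu(z^t)+c_2\eta_t^2(A+1)+c_3\eta_t\,x_{\mathrm{sup}}\varepsilon_{\mathrm{bias}}.
\end{equation}

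Finally I would take total expectations and unroll with $\eta_t=\tfrac{h}{t+t_0}$, choosing $h\asymp\tfrac1{1-\gamma}$ with $(1-\gamma)h\ge 4$ and $t_0\asymp\tfrac1{(1-\gamma)^2}$ large enough that $c_1 B\eta_t^2\le\tfrac12(1-\gamma)\eta_t$, so the effective one-step factor is $1-\tfrac{a}{t+t_0}$ with $a\ge 2$. A standard step-size recursion lemma (using $\prod_{s=t+1}^{T}(1-\tfrac{a}{s+t_0})\le\bigl(\tfrac{t+t_0}{T+t_0}\bigr)^{a}$) then splits the bound into the initial-error term $\cO(T^{-2})M_\mu(z^0)\le\cO(T^{-2})\|x^0-x^*\|_{\cp}^2$, the variance term $\cO\bigl(\tfrac{A}{(1-\gamma)^2 T}\bigr)$ (the $\tfrac1{(1-\gamma)^2}$ coming from $h^2$), and the bias term, bounded crudely by $c_3 x_{\mathrm{sup}}\varepsilon_{\mathrm{bias}}\sum_{t<T}\eta_t=\cO\bigl(\tfrac{x_{\mathrm{sup}}\varepsilon_{\mathrm{bias}}\log T}{1-\gamma}\bigr)$; converting back through $\|z^T\|_{\cp}^2\le 2(1+\mu\ell^2)M_\mu(z^T)$ gives \eqref{eq:biasedSA}. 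The step I expect to be the real obstacle is the one-step recursion: the usual $\mathbb{E}\|z^{t+1}\|^2$ bookkeeping fails because there is no orthogonality to annihilate the drift--noise cross term in a general (semi-)norm, so one must set up the Moreau envelope and verify that its properties survive for the semi-norm $\|\cdot\|_{\cp}$ — especially the kernel-invariance $\nabla M_\mu\perp\e$, which is exactly what makes the $c^*\e$ term and the non-uniqueness of $x^*$ harmless — and that the smoothing radius $\mu$ can be taken compatibly with $\gamma$; by comparison the noise decomposition and the step-size unrolling are routine.
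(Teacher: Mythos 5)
Your proposal is correct and follows essentially the same route as the paper: a generalized Moreau-envelope surrogate for $\tfrac12\|\cdot\|_{\cp}^2$ used as a smooth semi-Lyapunov function, negative drift from the contraction via convexity of the envelope, the dual-(semi-)norm inequality to turn the conditional bias into an additive $\cO(\eta_t\,x_{\mathrm{sup}}\varepsilon_{\mathrm{bias}})$ term, and a $1/t$ step-size unrolling that produces the three terms in \eqref{eq:biasedSA}. The only cosmetic differences are that the paper builds the envelope as an infimal convolution with the indicator of $\overline E$ (following \cite{zhang2021finite,chen2020finite}) and bounds $\|\nabla M_{\overline E}(x)\|_{*}$ via gradient Lipschitzness plus norm equivalence rather than your direct $\|\nabla M_\mu(x)\|_{\cp,*}\le\|x\|_{\cp}$ estimate, which yields the same $G(1+\|x\|_{\cp})$-type bound up to constants.
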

Theorem \ref{thm:informalbiasedSA} adapts the analysis of \citep{zhang2021finite} and extends it to a biased i.i.d. noise setting. To manage the bias terms, we leverage properties of dual norms to bound the inner product between the error term and the gradient, ensuring that the bias influence remains logarithmic in $T$ rather than growing unbounded, while also carefully structuring the stepsize decay to mitigate long-term accumulation. This results in an extra $\varepsilon_{\mathrm{bias}}$ term with logarithmic dependence of the total iteration $T$. 

We perform analysis of the biased‐noise extension to the semi-norm stochastic approximation (SA) problem by constructing a smooth convex semi-Lyapunov function for forming the negative drift \cite{zhang2021finite, chen2025non} and using properties in dual norms for managing the bias.
\medskip

\subsection{Proof of Theorem \ref{thm:informalbiasedSA}} 
\subsubsection{Setup and Notation.} \label{setups}

In this section, we override the notation of the semi-norm $\|\cdot\|_\cp$ by re-writing it as the norm $\|\cdot\|_\mathcal{N}$ (defined in \eqref{eq:normN}) to the equivalence class of constant vectors. For any norm $\|\cdot\|_c$ and equivalence class ${\overline{E}}$, define the indicator function  $\delta_{\overline{E}}$ as
\begin{equation}
\delta_{\bar{E}}(x) := 
\begin{cases} 
    0 & x \in \bar{E}, \\
    \infty & \text{{otherwise}.}
\end{cases}
\end{equation}
\noindent then by \cite{zhang2021finite}, the semi-norm induced by norm  $\|\cdot\|_c$  and equivalence class $\overline{E}$ is the infimal convolution of $\|\cdot\|_c$ and the indicator function $\delta_{\overline{E}}$ can be defined as follows 
\begin{equation}
    \|x\|_{c,\overline{E}} \coloneqq (\|\cdot\|_c \ast_{\inf} \delta_{\overline{E}})(x) = \inf_y  (\|x-y\|_c + \delta_{\overline{E}}(y))= \inf_{e\in\overline{E}} \|x-e\|_c \quad \forall x,
\end{equation}
where $\ast_{\inf}$ denotes the infimal convolution operator. Throughout the remaining section, we let $\overline{E}\coloneqq \{c \mathbf{e} : c \in \mathbb{R}\}$ with $\mathbf{e}$ being the all-ones vector. Since $\|\cdot\|_\cp$ constructed in \eqref{eq:robustseminormconstruction} is a semi-norm with kernel being $\overline{E}$, we can construct a norm $\|\cdot\|_{\mathcal{N}}$ such that 
\begin{equation} \label{eq:normNproperty}
\|x\|_{\mathcal{N},\overline{E}} \coloneqq  (\|\cdot\|_{\mathcal{N}} \ast_{\inf} \delta_{\overline{E}})(x)=\|x\|_\cp
\end{equation}
We construct $\|\cdot\|_{\mathcal{N}}$ as follows:
\begin{equation} \label{eq:normN}
    \|x\|_{\mathcal{N}} \coloneqq \sup_{\kp\in\cp}\bigl\|Q_\kp^\pi x\bigr\|_{\rm ext}+\epsilon \inf_{c\in\mathbb R}\bigl\|x - c \e\bigr\|_{\rm ext} + \epsilon \|x\|_{\rm ext}
\end{equation}
where $Q_\kp^\pi$ and $\epsilon$ are defined in \eqref{eq:robustseminormconstruction}.
\begin{lemma}
    The operator $\|\cdot\|_{\mathcal{N}}$ defined in \eqref{eq:normN} is a norm satisfying \eqref{eq:normNproperty}.
    \begin{proof}
        We first verify that $\|\cdot\|_{\mathcal{N}}$ is a norm. Regarding positivity, since all terms in \eqref{eq:normN} are non-negative, $\|x\|_{\mathcal{N}} \geq 0$ for all $x\in\mathbb{R}^S$ and $\|\mathbf{0}\|_{\mathcal{N}}=0$. If $x \neq \bf{0}$, since $\|\cdot\|_{\rm ext}$ is a valid norm and $\epsilon>0$, we have
        $$
        \|x\|_{\mathcal{N}} \geq  \epsilon \|x\|_{\rm ext} > 0.
        $$
        Regarding homogeneity,  For any \(\lambda\in \mathbb{R}\), we have
        \begin{align*}
      \|\lambda x\|_{\mathcal{N}}&=\sup_{\kp\in\cp}\bigl\|Q_\kp^\pi (\lambda x)\bigr\|_{\rm ext}+\epsilon \inf_{c}\bigl\|(\lambda x) - c \e\bigr\|_{\rm ext} + \epsilon \|\lambda x\|_{\rm ext} \\
      &= |\lambda| \sup_{\kp\in\cp}\bigl\|Q_\kp^\pi  x \bigr\|_{\rm ext}+\epsilon |\lambda| \inf_{c}\bigl\|x - c \e\bigr\|_{\rm ext} + \epsilon |\lambda| \|x\|_{\rm ext}\\
      &= |\lambda| \|x\|_{\mathcal{N}}
        \end{align*}
      Regarding triangle inequality, for any $x,y\in \mathbb{R}^S$, we have
      \begin{align*}
          \|x+y\|_\mathcal{N} &=\sup_{\kp\in\cp}\bigl\|Q_\kp^\pi (x+y)\bigr\|_{\rm ext}+\epsilon \inf_{c}\bigl\|( x+y) - c \e\bigr\|_{\rm ext} + \epsilon \| x+y\|_{\rm ext} \\
           &\le   \sup_{\kp\in \cp} \|Q^\pi_\kp x\|_{\rm ext} + \sup_{\kp\in \cp} \|Q^\pi_\kp y\|_{\rm ext}
       + \epsilon\inf_{a,b}\bigl\|x - a\e + y - b\e\bigr\|_{\rm ext}+ \epsilon (\|x\|_{\rm ext}+\|y\|_{\rm ext})\\
    &\le \sup_{\kp\in \cp} \|Q^\pi_\kp x\|_{\rm ext} + \sup_{\kp\in \cp} \|Q^\pi_\kp y\|_{\rm ext}
       + \epsilon\inf_{a}\bigl\|x - a\e\bigr\|_{\rm ext}
       + \epsilon\inf_{b}\bigl\|y - b\e\bigr\|_{\rm ext}+ \epsilon \|x\|_{\rm ext}+\|y\|_{\rm ext}\\
    &= \|x\|_{\mathcal{N}} + \|y\|_{\mathcal{N}}.
      \end{align*}
We now show that since \(Q^\pi_\kp \e=0\) for all $\kp\in\cp$, by the definition of infimal convolution, we have that for all $x\in\mathbb{R}^S$,
\begin{align*}
(\|\cdot\|_{\mathcal{N}} \ast_{\inf} \delta_{\overline{E}})(x)
&=\inf_{k\in\mathbb{R}} \|x-k\e\|_\mathcal{N}\\
&=\inf_{k\in\mathbb{R}} \left(\sup_{\kp\in\cp}\bigl\|Q_\kp^\pi x - k Q_\kp^\pi \e \bigr\|_{\rm ext}+\epsilon \inf_{c\in\mathbb R}\bigl\|x - c \e-k\e\bigr\|_{\rm ext} + \epsilon \|x-k\e\|_{\rm ext}\right)\\
&=\inf_{k\in\mathbb{R}} \left(\sup_{\kp\in\cp}\bigl\|Q_\kp^\pi x \bigr\|_{\rm ext}+\epsilon \inf_{c\in\mathbb R}\bigl\|x - c \e\bigr\|_{\rm ext} + \epsilon \|x-k\e\|_{\rm ext}\right)\\
&=\sup_{\kp\in\cp}\bigl\|Q_\kp^\pi x \bigr\|_{\rm ext}+\epsilon \inf_{c\in\mathbb R}\bigl\|x - c \e\bigr\|_{\rm ext} +\inf_{k\in\mathbb{R}} \left( \epsilon \|x-k\e\|_{\rm ext}\right)\\
&=\|x\|_\cp
\end{align*}
    \end{proof}
\end{lemma}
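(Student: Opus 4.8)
The plan is to treat the statement as two independent claims --- that $\|\cdot\|_{\mathcal N}$ is a genuine norm on $\mathbb R^{S}$, and that its infimal convolution with $\delta_{\overline E}$ recovers the seminorm $\|\cdot\|_{\cp}$ --- using throughout the fact, already proved in Lemma~\ref{lem:extremalnormconstruction}, that $\|\cdot\|_{\rm ext}$ is itself a norm (so $\|x\|_{\rm ext}>0$ for $x\neq 0$) under which every $Q_\kp^\pi$ contracts.

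For the norm axioms I would argue summand by summand in \eqref{eq:normN}. Nonnegativity is immediate since each of the three terms is nonnegative. Definiteness is the one place where the construction genuinely improves on $\|\cdot\|_{\cp}$: the extra term $\epsilon\|x\|_{\rm ext}$ is strictly positive whenever $x\neq 0$ because $\|\cdot\|_{\rm ext}$ is a norm, so $\|x\|_{\mathcal N}>0$ for all $x\neq 0$ while $\|0\|_{\mathcal N}=0$ --- this is precisely what fails for $\|\cdot\|_{\cp}$, whose kernel is $\overline E$, and is the whole reason for passing to $\|\cdot\|_{\mathcal N}$. Absolute homogeneity follows by pulling $|\lambda|$ out of each of the three summands (for the middle term via the reindexing $c\mapsto\lambda c$, with the $\lambda=0$ case trivial). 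The triangle inequality is the sum of three sublinearity estimates: $\sup_{\kp}\|Q_\kp^\pi(x+y)\|_{\rm ext}\le\sup_{\kp}\|Q_\kp^\pi x\|_{\rm ext}+\sup_{\kp}\|Q_\kp^\pi y\|_{\rm ext}$ (apply the triangle inequality for $\|\cdot\|_{\rm ext}$ inside, then use that the supremum of a sum is at most the sum of the suprema), $\inf_{c}\|x+y-c\e\|_{\rm ext}\le\inf_{a}\|x-a\e\|_{\rm ext}+\inf_{b}\|y-b\e\|_{\rm ext}$ (restrict the inner infimum to $c=a+b$), and the plain triangle inequality for $\epsilon\|\cdot\|_{\rm ext}$.

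For the identity $(\|\cdot\|_{\mathcal N}\ast_{\inf}\delta_{\overline E})(x)=\inf_{k\in\mathbb R}\|x-k\e\|_{\mathcal N}=\|x\|_{\cp}$, the structural ingredient is that $Q_\kp^\pi\e=0$ for every $\kp\in\cp$, which follows from $\kp^\pi\e=\e=E_\kp\e$ together with $Q_\kp^\pi=\kp^\pi-E_\kp$. Consequently $Q_\kp^\pi(x-k\e)=Q_\kp^\pi x$, so the first summand of $\|x-k\e\|_{\mathcal N}$ does not depend on $k$; and $\inf_{c}\|x-k\e-c\e\|_{\rm ext}=\inf_{c'}\|x-c'\e\|_{\rm ext}$ under the reindexing $c'=c+k$, so the middle summand is also independent of $k$. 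The only $k$-dependent piece is $\epsilon\|x-k\e\|_{\rm ext}$, and minimizing it over $k$ gives $\epsilon\inf_{c}\|x-c\e\|_{\rm ext}$; collecting the surviving terms returns the defining expression \eqref{eq:robustseminormconstruction} of $\|x\|_{\cp}$. I expect the delicate point to be exactly this last bookkeeping: to land on $\|x\|_{\cp}$ \emph{exactly} (rather than merely on an equivalent norm on the quotient) one must make sure the ``regularizer'' is calibrated so that minimizing it over the shift direction reproduces \emph{the same} quotient penalty $\epsilon\inf_{c}\|x-c\e\|_{\rm ext}$ already present in $\|\cdot\|_{\cp}$, rather than adding a second copy of it. Everything else is routine sublinearity of $\sup$ and $\inf$ together with the norm properties of $\|\cdot\|_{\rm ext}$.
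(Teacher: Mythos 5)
Your proposal follows the same route as the paper's proof: verify the three norm axioms summand by summand (with definiteness supplied by the extra $\epsilon\|x\|_{\rm ext}$ term), then compute the infimal convolution by observing that $Q_\kp^\pi\e=0$ makes the first summand $k$-independent and a reindexing $c\mapsto c+k$ makes the second summand $k$-independent, leaving only $\epsilon\|x-k\e\|_{\rm ext}$ to minimize. The ``delicate point'' you flag at the end is not merely a worry --- it is a genuine discrepancy in the paper's own final line. Carrying out the minimization exactly as the paper does yields
\[
\inf_{k\in\mathbb{R}}\|x-k\e\|_{\mathcal N}
=\sup_{\kp\in\cp}\bigl\|Q_\kp^\pi x\bigr\|_{\rm ext}
+2\epsilon\inf_{c\in\mathbb{R}}\bigl\|x-c\e\bigr\|_{\rm ext},
\]
i.e.\ the surviving quotient penalty appears with coefficient $2\epsilon$ rather than $\epsilon$, so the result is not literally $\|x\|_{\cp}$ as defined in \eqref{eq:robustseminormconstruction} unless $x\in\overline{E}$. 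The repair is easy and harmless: either put coefficient $\epsilon/2$ on each of the last two terms of \eqref{eq:normN}, or accept that the induced seminorm is $\sup_{\kp}\|Q_\kp^\pi x\|_{\rm ext}+2\epsilon\inf_c\|x-c\e\|_{\rm ext}$, which has the same kernel $\overline{E}$ and still contracts every $\kp^\pi$ by $\alpha+2\epsilon<1$ provided one takes $\epsilon<(1-\alpha)/2$; none of the downstream stochastic-approximation bounds is affected. Since you correctly identified where the bookkeeping breaks and why, your proposal is, if anything, more careful than the paper's own write-up at this step.
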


We thus restate our problem of analyzing the iteration complexity for solving the fixed equivalence class equation $H(x^*) - x^* \in \overline{E}$, with the operator \(H:\mathbb{R}^n \to \mathbb{R}^n\) satisfying the contraction property as follows:
\begin{equation}
  \|H(x) - H(y)\|_{\mathcal{N},\overline{E}} \leq \gamma\|x - y\|_{\mathcal{N},\overline{E}},
  \quad
  \gamma\in(0,1),
  \quad
  \forall x, y
\end{equation}
The stochastic approximation iteration being used is as follows
\begin{equation}\label{eq:SA-updategeneral}
   x^{t+1}= x^t + \eta_t \bigl[\widehat{H}(x^t) - x^t\bigr],
   \quad
   \text{where}\quad
   \widehat{H}(x^t)=  H(x^t) + w^t.
\end{equation}
We assume:
\begin{itemize}
\item \(\mathbb{E}[\,\|w^t\|_{\mathcal{N},\overline{E}}^2 | \mathcal{F}^t] \le A + B\|x^t - x^*\|_{\mathcal{N},\overline{E}}^2\) (In the robust average-reward TD case, $B=0$).
\item \(\bigl\|\mathbb{E}[w^t | \mathcal{F}^t]\bigr\|_{\mathcal{N},\overline{E}} \le \varepsilon_{\mathrm{bias}}\).   
\item \(\eta_t>0\) is a chosen stepsize sequence (decreasing or constant).
\end{itemize}

Note that beside the bias in the noise, the above formulation and assumptions are identical to the unbiased setups in Section B of \cite{zhang2021finite}. Thus, we emphasize mostly on managing the bias.

\subsubsection{Semi‐Lyapunov \texorpdfstring{$M_{\overline{E}}(\cdot)$} M and Smoothness.}

By \cite[Proposition\,1--2]{zhang2021finite}, using the Moreau envelope function $M(x)$ in Definition 2.2 of \cite{chen2020finite}, we define
\[
  M_{\overline{E}}(x)=\bigl(M \ast_{\inf} \delta_{\overline{E}}\bigr)(x),
\]
so that there exist \(c_l,c_u>0\) with
\begin{equation} \label{eq:M2span}
  c_l   M_{\overline{E}}(x) \leq  \frac{1}{2}\|x\|_{\mathcal{N},\overline{E}}^2 \leq  c_u  M_{\overline{E}}(x),
\end{equation}
and \(M_{\overline{E}}\) is \(L\)-smooth \emph{w.r.t.\ another semi-norm} \(\|\cdot\|_{s,\overline{E}}\).  
Concretely, \(L\)-smoothness means:
\begin{equation} \label{eq:lsmooth}
  M_{\overline{E}}(y) \leq M_{\overline{E}}(x)+
  \langle \nabla M_{\overline{E}}(x),\,y - x\rangle+\tfrac{L}{2}\,\|\,y-x\|_{s,\overline{E}}^2,
  \quad
  \forall\,x,y.
\end{equation}
Moreover, the gradient of $M_{\overline{E}}$ satisfies $\langle \nabla M_{\overline{E}}(x),\; c\,\mathbf{e}\rangle = 0$ for all $x$, and the dual norm denoted as \(\|\cdot\|_{*,s,\overline{E}}\) is also L-smooth:
\begin{equation}  \label{eq:duallsmooth}
  \| \nabla M_{\overline{E}}(x) - \nabla M_{\overline{E}}(y)\|_{*,s,\overline{E}} \leq 
   L\|\,y-x\|_{s,\overline{E}},
  \quad
  \forall\,x,y.
\end{equation}

Note that since $\|\cdot\|_{s,\overline{E}}$ and $\|\cdot\|_{\mathcal{N},\overline{E}}$ are semi-norms on a finite‐dimensional space with the same kernel, there exist $\rho_1,\rho_2>0$ such that
\begin{equation} \label{eq:normequivalence}
  \rho_1\,\|z\|_{\mathcal{N},\overline{E}} \leq \|z\|_{s,\overline{E}} \leq \rho_2\,\|z\|_{\mathcal{N},\overline{E}},
  \;
  \forall\,z.
\end{equation}
Likewise, their dual norms (denoted \(\|\cdot\|_{*,s,\overline{E}}\) and \(\|\cdot\|_{*,\mathcal{N},\overline{E}}\)) satisfy the following:
\begin{equation} \label{eq:dualnormequivalence}
  \frac{1}{\rho_2}\|z\|_{*,s,\overline{E}} \leq \|z\|_{*,\mathcal{N},\overline{E}} \leq \frac{1}{\rho_1}\|z\|_{*,s,\overline{E}}, \;
  \forall\,z.
\end{equation}

\subsubsection{Formal Statement of Theorem \ref{thm:informalbiasedSA}} \label{appendix4biasedSA}

By $L$‐smoothness w.r.t.\ $\|\cdot\|_{s,\overline{E}}$ in \eqref{eq:lsmooth}, for each $t$,
\begin{equation} \label{eq:Mt+1decomposition}
  M_{\overline{E}}(x^{t+1} - x^*) \leq M_{\overline{E}}(x^t - x^*)+\bigl\langle \nabla M_{\overline{E}}(x^t-x^*),x^{t+1}-x^t \bigr\rangle+\tfrac{L}{2}\|\,x^{t+1}-x^t\|_{s,\overline{E}}^2.
\end{equation}
where $x^{t+1}-x^t = \eta_t[\widehat{H}(x^t)-x^t] = \eta_t[H(x^t) + w^t - x^t]$. Taking expectation of the second term of the RHS of \eqref{eq:Mt+1decomposition} conditioned on the filtration $\mathcal{F}^t$ we obtain,
\begin{align} \label{eq:nablaMt+1decomposition}
    \mathbb{E}[\langle \nabla M_{\overline{E}}(x^t - x^*), &x^{t+1} - x^t \rangle | \mathcal{F}^t] = \eta_t \mathbb{E}[\langle \nabla M_{\overline{E}}(x^t - x^*), H(x^t) - x^t + \omega^t \rangle| \mathcal{F}^t] \nonumber\\
    &= \eta_t\langle \nabla M_{\overline{E}}(x^t - x^*), H(x^t) - x^t \rangle + \eta_t\mathbb{E}[\langle \nabla M_{\overline{E}}(x^t - x^*),\omega^t \rangle | \mathcal{F}^t] \nonumber\\
     &= \eta_t\langle \nabla M_{\overline{E}}(x^t - x^*), H(x^t) - x^t \rangle + \eta_t\langle \nabla M_{\overline{E}}(x^t - x^*), \mathbb{E}[\omega^t | \mathcal{F}^t] \rangle .
\end{align}
To analyze the additional bias term $\langle \nabla M_{\overline{E}}(x^t - x^*), \mathbb{E}[\omega^t | \mathcal{F}^t] \rangle$, we use the fact that for any (semi-)norm $\|\cdot\|$ with dual (semi-)norm $\|\cdot\|_*$ (defined by 
$\|u\|_* = \sup\{\langle u,v\rangle : \|v\|\le1\}$),
we have the general inequality
\begin{equation}\label{eq:dualNormIneq}
  \bigl\langle u,\,v\bigr\rangle\leq\|u\|_{*}\,\|v\|,
  \quad
  \forall\,u,v.
\end{equation}
In the biased noise setting, $u=\nabla M_{\overline{E}}(x^t - x^*)$ and $v=\mathbb{E}[w^t| \mathcal{F}^t]$, with $\|\cdot\|=\|\cdot\|_{\mathcal{N},\overline{E}}$.  So
\begin{equation} \label{eq:inner_product_bound}
  \bigl\langle 
    \nabla M_{\overline{E}}(x^t - x^*),
    \mathbb{E}[w^t| \mathcal{F}^t]
  \bigr\rangle \leq \bigl\|\nabla M_{\overline{E}}(x^t - x^*)\bigr\|_{*,\,\mathcal{N},\overline{E}}\cdot \bigl\|\mathbb{E}[w^t| \mathcal{F}^t]\bigr\|_{\mathcal{N},\overline{E}}.   
\end{equation}
Since $\|\mathbb{E}[w^t| \mathcal{F}^t]\|_{\mathcal{N},\overline{E}} \le \varepsilon_{\text{bias}}$, it remains to bound 
\(\|\nabla M_{\overline{E}}(x^t - x^*)\|_{*,\mathcal{N},\overline{E}}\). By setting 
 $y=0$ in \eqref{eq:duallsmooth}, we get
\begin{equation}  
  \| \nabla M_{\overline{E}}(x) - \nabla M_{\overline{E}}(0)\|_{*,s,\overline{E}} \leq 
   L\|  x \|_{s,\overline{E}},
  \quad
  \forall\,x.
\end{equation}
Thus,
\begin{equation} \label{eq:step1}
     \| \nabla M_{\overline{E}}(x) \|_{*,s,\overline{E}} \leq  \| \nabla M_{\overline{E}}(0) \|_{*,s,\overline{E}} +
   L\| x \|_{s,\overline{E}},
  \quad
  \forall\,x.
\end{equation}
By \eqref{eq:dualnormequivalence}, we know that there exists $\frac{1}{\rho_2}\leq \alpha \leq \frac{1}{ \rho_1}$ such that
\begin{equation} \label{eq:step2}
    \|  \nabla M_{\overline{E}}(x) \|_{*,\mathcal{N},\overline{E}} \leq \alpha   \|  \nabla M_{\overline{E}}(x) \|_{*,s,\overline{E}} 
\end{equation}
Thus, combining \eqref{eq:step1} and \eqref{eq:step2} would give:
\begin{equation} 
     \| \nabla M_{\overline{E}}(x) \|_{*,\mathcal{N},\overline{E}} \leq  \alpha\big(\| \nabla M_{\overline{E}}(0) \|_{*,s,\overline{E}} +
   L\| x \|_{s,\overline{E}}\big),
  \quad
  \forall\,x.
\end{equation}
By \eqref{eq:normequivalence}, we know that $\| x \|_{s,\overline{E}} \leq \| x \|_{\mathcal{N},\overline{E}}$, thus we have:
\begin{equation} 
     \| \nabla M_{\overline{E}}(x) \|_{*,\mathcal{N},\overline{E}} \leq  \alpha\big(\| \nabla M_{\overline{E}}(0) \|_{*,s,\overline{E}} +
   L\rho_2\| x \|_{\mathcal{N},\overline{E}}\big),
  \quad
  \forall\,x.
\end{equation}
Hence, combining the above with \eqref{eq:inner_product_bound}, there exist some 
\begin{equation} \label{eq:G_value}
G=\mathcal{O}\big(\frac{1}{\rho_1} \max\{L\rho_2, \| \nabla M_{\overline{E}}(0) \|_{*,s,\overline{E}}\} \big)
\end{equation}
such that
\begin{equation}  \label{eq:Gepsilonbias}
  \mathbb{E}
  \Bigl[\langle 
    \nabla M_{\overline{E}}(x^t - x^*),\,w^t
  \rangle|\mathcal{F}^t\Bigr]=\bigl\langle 
    \nabla M_{\overline{E}}(x^t - x^*),
    \mathbb{E}[w^t|\mathcal{F}^t]
  \bigr\rangle\leq G
  \bigl(1 + \|x^t - x^*\|_{\mathcal{N},\overline{E}}\bigr)
  \varepsilon_{\text{bias}}.
\end{equation}
Combining \eqref{eq:Gepsilonbias} with \eqref{eq:nablaMt+1decomposition} we obtain
\begin{align} \label{eq:nablaMt+1decomposition2}
     \mathbb{E}[\langle \nabla M_{\overline{E}}(x^t - x^*), x^{t+1} - x^t \rangle | \mathcal{F}^t] \leq \eta_t&\langle \nabla M_{\overline{E}}(x^t - x^*), H(x^t) - x^t \rangle \nonumber \\
     &+ \eta_tG\varepsilon_{\text{bias}}  \Bigl(1 + \|x^t - x^*\|_{\mathcal{N},\overline{E}}\Bigr)
\end{align}
To bound the first term in the RHS of \eqref{eq:nablaMt+1decomposition2}, note that
\begin{align} \label{eq:nablaMtdecomposition}
    \langle \nabla M_{\bar{E}}(x^t - x^*), H(x^t) - x^t \rangle 
    &= \langle \nabla M_{\bar{E}}(x^t - x^*), H(x^t) - x^* + x^* - x^t \rangle \nonumber\\
    &\overset{(a)}{\leq}M_{\bar{E}}(H(x^t) - x^*) - M_{\bar{E}}(x^t - x^*) \nonumber\\
    &\overset{(b)}{\leq} \frac{1}{2 c_l} \| H(x^t) - H(x^*) \|^2_{c, \bar{E}} - M_{\bar{E}}(x^t - x^*)\nonumber\\
    &\overset{(c)}{\leq} \frac{\gamma^2}{2 c_l} \| x^t - x^* \|^2_{c, \bar{E}} - M_{\bar{E}}(x^t - x^*) \nonumber\\
    &\leq \left( \frac{\gamma^2 c_u}{c_l} - 1 \right) M_{\bar{E}}(x^t - x^*) \nonumber\\
    &\leq -(1 - \gamma \sqrt{c_u / c_l}) M_{\bar{E}}(x^t - x^*),
\end{align}
where \((a)\) follows from the convexity of $ M_{\bar{E}} $, \((b)\) follows from \( x^* \) belonging to a fixed equivalence class with respect to \( H \) and \((c)\) follows from the contraction property of \( H \). Combining \eqref{eq:nablaMtdecomposition}. \eqref{eq:nablaMt+1decomposition2} and Lemma \ref{lem:zhanglemma6} with \eqref{eq:Mt+1decomposition}, we arrive as follows:
\begin{align} \label{eq:modified_proposition3}
    \mathbb{E}\Bigl[ 
     M_{\overline{E}}(x^{t+1} - x^*) | \mathcal{F}_t\Bigr] & \leq (1-2\alpha_2 \eta_t +\alpha_3\eta_t^2)  M_{\overline{E}}(x^{t} - x^*) + \alpha_4 \eta_t^2 \nonumber \\
    &+ \eta_t G\varepsilon_{\text{bias}}  \Bigl(1 + \|x^t - x^*\|_{\mathcal{N},\overline{E}}\Bigr)
\end{align}
Where $\alpha_2 \coloneqq (1-\gamma \sqrt{c_{{u}} / c_{{l}}})$, $\alpha_3 \coloneqq (8+2B)c_{{u}} \rho_2 L$ and $\alpha_4 \coloneqq A\rho_2L$. We now present the formal version of Theorem \ref{thm:informalbiasedSA} as follows:
\begin{theorem}[Formal version of Theorem \ref{thm:informalbiasedSA}] \label{thm:formalbiasedSA}
    let $\alpha_2$,  $\alpha_3$ and  $\alpha_4$ be defined in \eqref{eq:modified_proposition3}, if $x^t$ is generated by \eqref{eq:SA-updategeneral} with all assumptions in \ref{setups} satisfied, then if the stepsize $\eta_t \coloneqq \frac{1}{\alpha_2(t+K)}$ while $K \coloneqq \max\{ \alpha_3/\alpha_2, 3\}$,
    \begin{equation} \label{eq:biasedSAformal}
        \mathbb{E}\Bigl[\|x^T - x^*\|^2_{\mathcal{N},\overline{E}}\Bigr] \leq \frac{K^2 c_u}{(T+K)^2c_l} \|x^0 - x^*\|^2_{\mathcal{N},\overline{E}} + \frac{8 \alpha_4 c_u}{(T+K)\alpha_2^2} + \frac{2c_u C_1 C_2 \varepsilon_{\text{bias}} }{\alpha_2} 
    \end{equation}
    where $C_1 = G(1+2x_{sup})$, $C_2 = \frac{1}{K} + \log \big(\frac{T-1+K}{K}\big)$, $G$ is defined in \eqref{eq:G_value} and $x_{sup} \coloneqq \sup \|x\|_{\mathcal{N},\overline{E}}$ is the upper bound of the $\|\cdot\|_{\cp}$ semi-norm for all $x^t$.
\end{theorem}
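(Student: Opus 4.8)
Since the one-step drift inequality \eqref{eq:modified_proposition3} has already been established, the remaining work is only to unroll it into the claimed end-to-end bound. The first step is to remove the $\|x^t-x^*\|_{\mathcal{N},\overline{E}}$ that still appears inside the bias term: using the a priori boundedness of the iterates, $\|x^t\|_{\mathcal{N},\overline{E}}\le x_{\mathrm{sup}}$, the triangle inequality gives $\|x^t-x^*\|_{\mathcal{N},\overline{E}}\le 2x_{\mathrm{sup}}$, so that $\eta_t G\varepsilon_{\mathrm{bias}}\bigl(1+\|x^t-x^*\|_{\mathcal{N},\overline{E}}\bigr)\le \eta_t C_1\varepsilon_{\mathrm{bias}}$ with $C_1=G(1+2x_{\mathrm{sup}})$. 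Taking total expectation and abbreviating $a_t\coloneqq\mathbb{E}[M_{\overline{E}}(x^t-x^*)]$ then produces the purely scalar recursion
\begin{equation}
a_{t+1}\;\le\;\bigl(1-2\alpha_2\eta_t+\alpha_3\eta_t^{2}\bigr)a_t+\alpha_4\eta_t^{2}+C_1\varepsilon_{\mathrm{bias}}\,\eta_t .
\end{equation}

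With $\eta_t=\tfrac1{\alpha_2(t+K)}$ and $K=\max\{\alpha_3/\alpha_2,3\}$, the choice $K\ge\alpha_3/\alpha_2$ makes $\alpha_3\eta_t\le1$, so the quadratic correction can be absorbed and the linear factor bounded by a quantity of the form $\bigl(\tfrac{t+K-1}{t+K}\bigr)^{2}$ (the lower bound $K\ge3$ keeps the leftover terms from swamping this); iterating from $0$ to $T-1$ the product telescopes to at most $\tfrac{K^{2}}{(T+K)^{2}}a_0$, and converting $M_{\overline{E}}$ to $\tfrac12\|\cdot\|_{\mathcal{N},\overline{E}}^{2}$ through \eqref{eq:M2span} — paying $c_l$ on the left and $c_u$ on the right — yields the first summand $\tfrac{K^{2}c_u}{(T+K)^{2}c_l}\|x^0-x^*\|_{\mathcal{N},\overline{E}}^{2}$. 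The accumulated noise, $\sum_{t=0}^{T-1}\alpha_4\eta_t^{2}\prod_{s=t+1}^{T-1}(1-2\alpha_2\eta_s+\alpha_3\eta_s^{2})$, is handled by the same telescoping estimate together with $\sum_{t}\eta_t^{2}(t+K)^{2}/(T+K)^{2}=\mathcal{O}(1/(\alpha_2^{2}(T+K)))$, giving $\tfrac{8\alpha_4 c_u}{(T+K)\alpha_2^{2}}$ after the conversion, the constant $8$ being the price of the crude bounds.

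For the bias term I would deliberately \emph{not} use the contraction: bounding every tail product $\prod_{s>t}(1-2\alpha_2\eta_s+\alpha_3\eta_s^{2})\le1$, the total bias contribution is at most $C_1\varepsilon_{\mathrm{bias}}\sum_{t=0}^{T-1}\eta_t=\tfrac{C_1\varepsilon_{\mathrm{bias}}}{\alpha_2}\sum_{t=0}^{T-1}\tfrac1{t+K}\le\tfrac{C_1 C_2\varepsilon_{\mathrm{bias}}}{\alpha_2}$, where $C_2=\tfrac1K+\log\tfrac{T-1+K}{K}$ is the standard integral upper bound for the harmonic sum; this is where the $\log T$ dependence enters, and the conversion through \eqref{eq:M2span} supplies the final $2c_u$ factor. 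Adding the three contributions gives \eqref{eq:biasedSAformal}.

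The step I expect to be the main obstacle is the constant bookkeeping in the second paragraph: forcing $\prod_{t}(1-2\alpha_2\eta_t+\alpha_3\eta_t^{2})$ to collapse to exactly $K^{2}/(T+K)^{2}$ requires the quadratic term $\alpha_3\eta_t^{2}$ to be dominated precisely by the slack between $1-\tfrac2{t+K}$ and $\bigl(\tfrac{t+K-1}{t+K}\bigr)^{2}$, which is exactly what the two conditions defining $K$ are tuned against, and one must take care to apply the $M_{\overline{E}}\leftrightarrow\|\cdot\|^{2}$ equivalence only once at each end so that the factor $c_u/c_l$ does not compound over the $T$ iterations. Conceptually nothing new is needed beyond \eqref{eq:modified_proposition3}: the dual-norm control of the bias and the smoothness of $M_{\overline{E}}$ are already baked in, and the argument parallels the unbiased analysis of \cite{zhang2021finite}, with the extra $C_1C_2\varepsilon_{\mathrm{bias}}$ term as its only new output.
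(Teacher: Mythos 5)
Your proposal is correct and follows essentially the same route as the paper: bound $\|x^t-x^*\|_{\mathcal{N},\overline{E}}\le 2x_{\mathrm{sup}}$ to form $C_1$, use $K\ge\alpha_3/\alpha_2$ to absorb the quadratic correction, unroll the resulting scalar recursion with the telescoping product (the paper delegates the initial-condition and noise terms to the unbiased analysis of Theorem 3 in \cite{zhang2021finite}, which is the $R_1$ term), bound the bias contribution by dropping the tail products and using the harmonic-sum estimate $C_2$, and convert between $M_{\overline{E}}$ and $\tfrac12\|\cdot\|^2_{\mathcal{N},\overline{E}}$ via \eqref{eq:M2span} only at the two endpoints. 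No substantive difference from the paper's argument.
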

\begin{proof}
    This choice $\eta_t$ satisfies $\alpha_3 \eta_t^2 \leq \alpha_2\eta_t$. Thus, by \eqref{eq:modified_proposition3} we have
    \begin{equation}
        \mathbb{E}\bigl[M_{\overline{E}}(x^{t+1} - x^*) | \mathcal{F}_t\Bigr] \leq (1-\alpha_2 \eta_t )  M_{\overline{E}}(x^{t} - x^*) + \alpha_4 \eta_t^2 + \eta_t C_1 \varepsilon_{\text{bias}}
    \end{equation}
    we define $\Gamma_t \coloneqq \Pi_{i=o}^{t-1}(1-\alpha_2\eta_t)$ and further obtain the $T$-step recursion relationship as follows:
    \begin{align}
        \mathbb{E}&\Bigl[M_{\overline{E}}(x^{T} - x^*)\Bigr]\leq \Gamma_T  M_{\overline{E}}(x^{0} - x^*) + \Gamma_T \sum_{t=0}^{T-1} (\frac{1}{\Gamma_{t+1}})[\alpha_4 \eta_t^2 + \eta_t C_1 \varepsilon_{\text{bias}}] \nonumber\\
        & = \Gamma_T  M_{\overline{E}}(x^{0} - x^*) + \Gamma_T \sum_{t=0}^{T-1} (\frac{1}{\Gamma_{t+1}})[\alpha_4 \eta_t^2] + \Gamma_T \sum_{t=0}^{T-1} (\frac{1}{\Gamma_{t+1}})[\eta_t C_1 \varepsilon_{\text{bias}}]\nonumber\\
        & = \underbrace{\Gamma_T  M_{\overline{E}}(x^{0} - x^*) + \frac{\alpha_4\Gamma_T}{\alpha_2} \sum_{t=0}^{T-1} (\frac{1}{\Gamma_{t+1}})[\alpha_2 \eta_t^2]}_{R_1} + \underbrace{\Gamma_T \sum_{t=0}^{T-1} (\frac{1}{\Gamma_{t+1}})[\eta_t C_1 \varepsilon_{\text{bias}}] }_{R_2} \label{eq:R1R2bound}
    \end{align}
    where the term $R_1$ is identical to the unbiased case in Theorem 3 of \cite{zhang2021finite} which leads to
    \begin{equation} \label{eq:R1bound}
        R_1 \leq \frac{K^2}{(T+K)^2}M_{\overline{E}}(x^{0} - x^*) + \frac{4\alpha_2}{(T+K)\alpha_2^2}
    \end{equation}
    also, $R_2$ can be bounded by a logarithmic dependence of $T$
    \begin{equation} \label{eq:R2bound}
        R_2 \leq  \sum_{t=0}^{T-1} [\eta_t C_1 \varepsilon_{\text{bias}}]  = C_1 \varepsilon_{\text{bias}} \sum_{t=0}^{T-1}\frac{1}{\alpha_2(t+K)} \leq \frac{C_1 C_2 \varepsilon_{\text{bias}} }{\alpha_2} 
    \end{equation}
    Combining \eqref{eq:R1bound} and \eqref{eq:R2bound} with \eqref{eq:R1R2bound} would obtain the following:
    \begin{equation} \label{eq:biasedSAM}
    \mathbb{E}\Bigl[M_{\overline{E}}(x^{T} - x^*)\Bigr]\leq  \frac{K^2}{(T+K)^2}M_{\overline{E}}(x^{0} - x^*) + \frac{4\alpha_2}{(T+K)\alpha_2^2}+ \frac{C_1 C_2 \varepsilon_{\text{bias}} }{\alpha_2} 
    \end{equation}
    Combining \eqref{eq:biasedSAM} with \eqref{eq:M2span} yields \eqref{eq:biasedSAformal}.
\end{proof}

\section{Uncertainty Set Support Function Estimators}
\subsection{Proof of Theorem \ref{thm:sample-complexity}} \label{proof:sample-complexity}
We have
\begin{align}
\mathbb{E}[M]&=\sum_{n=0}^{N_{\max}-1} 2^{n+1}\,\mathbb{P}(N'=n)+2^{N_{\max}+1}\mathbb{P}(N' = N_{\max})
\nonumber\\
&=\sum_{n=0}^{N_{\max}-1} 2^{n+1}\,\mathbb{P}(N=n)+2^{N_{\max}+1}\mathbb{P}(N\ge N_{\max})
\nonumber\\
&=\sum_{n=0}^{N_{\max}-1}\Bigl(\frac{ 2^{n+1}}{ 2^{n+1}}\Bigr)+2^{N_{\max}+1}\mathbb{P}(N\ge N_{\max})
\nonumber\\
&=N_{\max}+2^{N_{\max}+1}\mathbb{P}(N\ge N_{\max})
\nonumber\\
&=N_{\max}+\frac{2^{N_{\max}+1}}{2^{N_{\max}}}
\nonumber\\
&=N_{\max}+2 = \cO(N_{\max}).
\end{align}
\subsection{Proof of Theorem \ref{thm:exp-bias}} \label{proof:exp-bias}
    denote $\hat{\sigma}^{*}_{\cp^a_s}(V)$ as the untruncated MLMC estimator obtained by running Algorithm \ref{alg:sampling} when setting $N_{\max}$ to infinity. From \cite{wang2023model}, under both TV uncertainty sets and Wasserstein uncertainty sets, we have $\hat{\sigma}^{*}_{\cp^a_s}(V)$ as an unbiased estimator of ${\sigma}_{\cp^a_s}(V)$. Thus,
    \begin{align}
        \mathbb{E}&\left[\hat{\sigma}_{\cp^a_s}(V) - {\sigma}_{\cp^a_s}(V)\right]   = \mathbb{E}\left[\hat{\sigma}_{\cp^a_s}(V)\right] - \mathbb{E}\left[\hat{\sigma}^{*}_{\cp^a_s}(V)\right] \nonumber\\
        &=\mathbb{E}\left[\sigma_{\hat{\kp}^{a,1}_{s,N'+1}}(V)+\frac{\Delta_{N'}(V)}{  \mathbb{P}(N' = n) }\right] - \mathbb{E}\left[\sigma_{\hat{\kp}^{a,1}_{s,N+1}}(V)+\frac{\Delta_{N}(V)}{  \mathbb{P}(N = n) }\right]  \nonumber\\
        &=\mathbb{E}\left[\frac{\Delta_{N'}(V)}{  \mathbb{P}(N' = n) }\right] - \mathbb{E}\left[\frac{\Delta_{N}(V)}{  \mathbb{P}(N = n) }\right]  \nonumber\\
        &=\sum_{n=0}^{N_{\max}}\Delta_{n}(V) - \sum_{n=0}^{\infty}\Delta_{n}(V)   \nonumber\\
        &= \sum_{n=N_{\max}+1}^{\infty}\Delta_{n}(V)
    \end{align}
    For each $\Delta_n(V)$, the expectation of absolute value can be bounded as
    \begin{align}
        &\mathbb{E}\left[\abs{\Delta_n(V)}\right] =\mathbb{E}\left[\abs{\sigma_{\hat{\kp}^{a}_{s,n+1}}(V)-\sigma_{{\kp}^{a}_{s}}(V)}\right]\nonumber\\
        &+\frac{1}{2}\mathbb{E}\left[\abs{\sigma_{\hat{\kp}^{a,E}_{s,n+1}}(V)-\sigma_{{\kp}^{a}_{s}}(V)}\right]+\frac{1}{2}\mathbb{E}\left[\abs{\sigma_{\hat{\kp}^{a,O}_{s,n+1}}(V)-\sigma_{{\kp}^{a}_{s}}(V)}\right]
    \end{align}
    By the binomial concentration and the Lipschitz property of the support function as in Lemma \ref{lem:LipschitzTV}, we know for TV distance uncertainty, we have
    \begin{equation}  \label{eq:DeltaabsboundTV}
    \mathbb{E}\left[\abs{\Delta_n(V)}\right] \leq 6(1+\frac{1}{\delta}) 2^{-\frac{n}{2}}\|V\|_{\mathrm{sp}}
    \end{equation}  
    and for Wasserstein disance uncertainty, we have
    \begin{equation}  \label{eq:DeltaabsboundWasserstein}
    \mathbb{E}\left[\abs{\Delta_n(V)}\right]  \leq 6\cdot 2^{-\frac{n}{2}}\|V\|_{\mathrm{sp}}
    \end{equation}     
    Thus, for TV distance uncertainty, we have
    \begin{equation} \label{eq:sigmabiasboundTV}
         \abs{\mathbb{E}\left[\hat{\sigma}_{\cp^a_s}(V) - {\sigma}_{\cp^a_s}(V)\right] } \leq \sum_{n=N_{\max}+1}^{\infty}\mathbb{E}\left[\abs{\Delta_n(V)}\right]  \leq 6(1+\frac{1}{\delta}) 2^{-\frac{N_{\max}}{2}}\|V\|_{\mathrm{sp}}
    \end{equation}
    and for Wasserstein distance uncertainty, we have
    \begin{equation} \label{eq:sigmabiasboundWasserstein}
         \abs{\mathbb{E}\left[\hat{\sigma}_{\cp^a_s}(V) - {\sigma}_{\cp^a_s}(V)\right] } \leq \sum_{n=N_{\max}+1}^{\infty}\mathbb{E}\left[\abs{\Delta_n(V)}\right]  \leq 6\cdot2^{-\frac{N_{\max}}{2}}\|V\|_{\mathrm{sp}}
    \end{equation}
    
\subsection{Proof of Lemma \ref{lem:LipschitzTV}} \label{proof:LipschitzTV}
    For TV uncertainty sets, for a fixed $V$, for any $p\in \Delta(\mathcal{S})$, define $f_p(\mathbf{\mu}) \coloneqq p(V-\mathbf{\mu}) - \delta\|V-\mathbf{\mu}\|_{\mathrm{sp}}$ and $\mathbf{\mu}_p^* \coloneqq \arg\max_{\mathbf{\mu} \geq \mathbf{0}}f_p(\mathbf{\mu})$. Thus, we have
    \begin{equation} \label{eq:fp-fq}
        \sigma_{\mathcal{P}_{TV}} (V) -  \sigma_{\mathcal{Q}_{TV}} (V) = f_p(\mu_p^*)- f_q(\mu_q^*) 
    \end{equation} 
    since, $\mathbf{\mu}_p^*$ and $\mathbf{\mu}_q^*$ are maximizers of $f_p$ and $f_q$ respectively, we further have
    \begin{equation} \label{eq:fp-fqbound}
     f_p(\mathbf{\mu}_q^*)- f_q(\mathbf{\mu}_q^*)  \leq f_p(\mathbf{\mu}_p^*)- f_q(\mathbf{\mu}_q^*) \leq f_p(\mathbf{\mu}_p^*)- f_q(\mathbf{\mu}_p^*)
    \end{equation}
    Combing \eqref{eq:fp-fq} and \eqref{eq:fp-fqbound} we thus have:
    \begin{align} \label{eq:maxpq}
        |\sigma_{\mathcal{P}_{TV}} (V) -  \sigma_{\mathcal{Q}_{TV}} (V)| &\leq \max\{|f_p(\mathbf{\mu}_p^*)- f_q(\mathbf{\mu}_p^*)|,|f_p(\mathbf{\mu}_q^*)- f_q(\mathbf{\mu}_q^*) | \} \nonumber \\
        &= \max\{|(p-q)(V-\mathbf{\mu}_p^*)|,|(p-q)(V-\mathbf{\mu}_q^*)| \} 
    \end{align}
    Note that $\sigma_{\mathcal{P}_{TV}} (V)$ can also be expressed as $\sigma_{\mathcal{P}_{TV}} (V) = p\mathbf{x}^*-\delta \|\mathbf{x}^*\|_{\mathrm{sp}}$ where $\mathbf{x}^* \coloneqq\arg\max_{\mathbf{x} \leq V}(p\mathbf{x}-\delta \|\mathbf{x}\|_{\mathrm{sp}})$. Let $M\coloneqq \max_s\mathbf{x}^*(s)$ and $m\coloneqq \min_s\mathbf{x}^*(s)$, then $\|\mathbf{x}\|_{\mathrm{sp}} = M-m$. Denote $\mathbf{e}$ as the all-ones vector, then $\mathbf{x}=\min_s V(s) \cdot \mathbf{e}$ is a feasible solution. Thus,
    \begin{equation}
        p\mathbf{x}^*-\delta (M-m) \geq p(\min_s V(s) \cdot \mathbf{e})-\delta \|\min_s V(s) \cdot \mathbf{e}\|_{\mathrm{sp}} = \min_s V(s)
    \end{equation}
    Since $p$ is a probability vector, $p\mathbf{x}^*\leq M$, using the fact that $\delta>0$, we then obtain
    \begin{equation} \label{eq:M-m}
        M-\delta (M-m) \geq \min_s V(s) \Rightarrow M-m \leq \frac{M-\min_s V(s)}{\delta}
    \end{equation}
    Since $\mathbf{x^*}$ is a feasible solution, we have
    \begin{equation} \label{eq:Vspan}
        M \leq \max_s V(s) \Rightarrow M-\min_sV(s) \leq \max_s V(s) -\min_sV(s) =\|V\|_{\mathrm{sp}}
    \end{equation}
    Combining \eqref{eq:M-m} and \eqref{eq:Vspan} we obtain
    \begin{equation}
        M-m\leq\frac{\|V\|_{\mathrm{sp}}}{\delta} \Rightarrow m \geq M - \frac{\|V\|_{\mathrm{sp}}}{\delta} \geq \min_sV(s) - \frac{\|V\|_{\mathrm{sp}}}{\delta}
    \end{equation}
    Where the last inequality is from $M \geq \min_sV(s)$, which is a direct result of \eqref{eq:M-m} and the term $\delta(M-m)$ being positive. We finally arrive with
    \begin{equation}
        \mathbf{x}^*(j) \in [m,M] \subseteq \Big[\min_sV(s) - \frac{\|V\|_{\mathrm{sp}}}{\delta} , \max_sV(s) \Big] \quad \forall j \in \mathcal{S}
    \end{equation}
    Thus, $\|\mathbf{x}^*\|_{\mathrm{sp}} \leq (1+\frac{1}{\delta})\|V\|_{\mathrm{sp}}$, which leads to 
    \begin{equation} \label{eq:V-mubound}
        \|V-\mathbf{\mu}_p^*\|_{\mathrm{sp}} \leq (1+\frac{1}{\delta})\|V\|_{\mathrm{sp}}, \quad \|V-\mathbf{\mu}_q^*\|_{\mathrm{sp}} \leq (1+\frac{1}{\delta})\|V\|_{\mathrm{sp}}
    \end{equation}
    Combining \eqref{eq:V-mubound} with \eqref{eq:maxpq} we obtain the first part of \eqref{eq:TVWlipschitz}.

    For Wasserstein uncertainty sets, note that for any $p \in \Delta(\mathcal{S})$ and value function $V$,
    \begin{equation}
        \sigma_{\mathcal{P}_{W}}(V) = \sup_{\lambda\geq 0}\left(\overbrace{-\lambda\delta^l+\mathbb{E}_{p}\big[\underbrace{\inf_{y\in\mathcal{S}}\big(V(y)+\lambda d(S,y)^l \big)}_{\phi(s,\lambda)}\big]}^{g(\lambda,p)} \right).    
    \end{equation}
    Note that
    \begin{equation}
        \inf_{y\in\mathcal{S}}V(y) \leq \phi(s,\lambda) \leq V(s)+\lambda d(S,s)^l = V(s)
    \end{equation}
    where the first inequality is because $\lambda d(S,y)^l\geq0$ for any $d$ and $l$. We can then bound $\phi$ by the span of $V$ as
    \begin{equation} 
        |\phi(s,\lambda)| \leq \|V\|_{\mathrm{sp}} \quad \forall \lambda \geq 0
    \end{equation}
    We then further have that for any $p,q\in \Delta(\mathcal{S})$ and $\lambda\geq0$,
    \begin{equation} \label{eq:p-qbound}
        |g(\lambda,p)-g(\lambda,q)| \leq \sum_{s\in\mathcal{S}} |p(s)-q(s)||\phi(s,\lambda)| \leq \|p-q\|_1 \|V\|_{\mathrm{sp}}
    \end{equation}
    using \eqref{eq:p-qbound} and the fact that $|f(\lambda)-g(\lambda)| \leq \epsilon \Rightarrow |\sup_\lambda f(\lambda) - \sup_\lambda g(\lambda)| \leq \epsilon$, we obtain the second part of \eqref{eq:TVWlipschitz}.

\subsection{Proof of Theorem \ref{thm:linear-variance}} \label{proof:linear-variance}
For all $p\in\Delta(\mathcal{S})$, we have $\sigma_p(V)\leq \|V\|_{\mathrm{sp}}$, leading to 
\begin{equation}
    \mathrm{Var}(\hat{\sigma}_{\cp^a_s}(V)) \leq \mathbb{E}\left[\left(\hat{\sigma}_{\cp^a_s}(V)\right)^2\right] + \|V\|^2_{\mathrm{sp}}
\end{equation}
To bound the second moment, note that 
\begin{align}
    \mathbb{E}\left[\left(\hat{\sigma}_{\cp^a_s}(V)\right)^2\right] &= \mathbb{E}\left[\left(\sigma_{\hat{\kp}^{a,1}_{s,N'+1}}(V)+\frac{\Delta_{N'}(V)}{  \mathbb{P}(N' = n) }\right)^2\right] \nonumber\\
    &\leq\mathbb{E}\left[\left(\|V\|_{\mathrm{sp}}+\frac{\Delta_{N'}(V)}{  \mathbb{P}(N' = n) }\right)^2\right] \nonumber\\
    &\leq 2\|V\|_{\mathrm{sp}}^2 + 2\mathbb{E}\left[\left(\frac{\Delta_{N'}(V)}{  \mathbb{P}(N' = n) }\right)^2\right]\nonumber\\
    &\leq 2\|V\|_{\mathrm{sp}}^2 + 2\sum_{n=0}^{N_{\max}}\left(\frac{\mathbb{E}[|\Delta_{n}(V)|]}{  \mathbb{P}(N' = n) }\right)^2 \mathbb{P}(N' = n)\nonumber\\
    &= 2\|V\|_{\mathrm{sp}}^2 + 2\sum_{n=0}^{N_{\max}}\frac{\mathbb{E}[|\Delta_{n}(V)|]^2}{  \mathbb{P}(N' = n) }
    \end{align}
Under TV distance uncertainty set, by \eqref{eq:DeltaabsboundTV}, we further have
    \begin{align}
    \mathbb{E}\left[\left(\hat{\sigma}_{\cp^a_s}(V)\right)^2\right] &\leq 2\|V\|_{\mathrm{sp}}^2 + 2S\sum_{n=0}^{N_{\max}}\frac{36(1+\frac{1}{\delta})^2 2^{-n}\|V\|_{\mathrm{sp}}^2}{2^{-(n+1)}}  \nonumber\\
    &= 2\|V\|_{\mathrm{sp}}^2 + 144(1+\frac{1}{\delta})^2S\|V\|_{\mathrm{sp}}^2 N_{\max}
\end{align}
Under Wasserstein distance uncertainty set, by \eqref{eq:DeltaabsboundWasserstein}, we further have
    \begin{align}
    \mathbb{E}\left[\left(\hat{\sigma}_{\cp^a_s}(V)\right)^2\right] &\leq 2\|V\|_{\mathrm{sp}}^2 + 2S\sum_{n=0}^{N_{\max}}\frac{36 2^{-n}\|V\|_{\mathrm{sp}}^2}{2^{-(n+1)}}  \nonumber\\
    &= 2\|V\|_{\mathrm{sp}}^2 + 144S\|V\|_{\mathrm{sp}}^2 N_{\max}
\end{align}

\section{Convergence for Robust TD} \label{proof:VGresults}
\subsection{Formal Statement of Theorem \ref{thm:Vresult}}
The first half of Algorithm \ref{alg:RobustTD} (line 1 - line 7) can be treated as a special instance of the SA updates in \eqref{eq:SA-updategeneral} with the bias and variance of the i.i.d. noise term specified in Section \ref{QueriesSection}. To facilitate deriving the bounds of the noise terms, we first analyze the bounds in terms of the $l_\infty$ norm, and then translate the bounds in terms of the $\|\cdot\|_\cp$ semi-norm to obtain the final results.

We start with analyzing the bias and variance of $\hat{\mathbf{T}}_{g_0}(V_t)$ for each $t$. Recall the definition of $\hat{\mathbf{T}}_{g_0}(V_t)$ is as follows:
$$
\hat{\mathbf{T}}_{g_0}(V_t)(s) = \sum_{a} \pi(a|s) \big[ r(s,a) - g_0 +  \hat{\sigma}_{\cp^a_s}(V_t) \big] \quad \forall s \in \mcs
$$
Thus, we have for all $s\in\mcs$,
\begin{equation}
    \abs{\E\left[\hat{\mathbf{T}}_{g_0}(V_t)(s)\right] - {\mathbf{T}}_{g_0}(V_t)(s)} \leq  \sum_{a} \pi(a|s) \abs{\E[\hat{\sigma}_{\cp^a_s}(V_t)] - {\sigma}_{\cp^a_s}(V_t) } = \abs{\E[\hat{\sigma}_{\cp^a_s}(V_t)] - {\sigma}_{\cp^a_s}(V_t) } 
\end{equation}
Which further implies the bias of $\hat{\mathbf{T}}_{g_0}(V_t)$ is bounded by the bias of $\hat{\sigma}_{\cp^a_s}(V_t)$ as follows:
\begin{equation}
\bignorm{\E\left[\hat{\mathbf{T}}_{g_0}(V_t)\right] - {\mathbf{T}}_{g_0}(V_t)}_\infty \leq \abs{\E[\hat{\sigma}_{\cp^a_s}(V_t)] - {\sigma}_{\cp^a_s}(V_t)}
\end{equation}
Regarding the variance, note that
\begin{align}
    \E&\left[(\hat{\mathbf{T}}_{g_0}(V_t)(s)-{\mathbf{T}}_{g_0}(V_t)(s))^2\right] \nonumber \\
    &=  \left(\E\left[\hat{\mathbf{T}}_{g_0}(V_t)(s)\right] - {\mathbf{T}}_{g_0}(V_t)(s)\right)^2+ \mathrm{Var}\left(  \hat{\mathbf{T}}_{g_0}(V_t)(s)  \right) \nonumber\\
    &\leq\abs{\E[\hat{\sigma}_{\cp^a_s}(V_t)] - {\sigma}_{\cp^a_s}(V_t)}^2+ \mathrm{Var}\left( \sum_{a} \pi(a|s)  \hat{\sigma}_{\cp^a_s}(V_t)  \right) \nonumber\\
    &=\abs{\E[\hat{\sigma}_{\cp^a_s}(V_t)] - {\sigma}_{\cp^a_s}(V_t)}^2+ \sum_{a} \pi(a|s)^2 \mathrm{Var}\left(  \hat{\sigma}_{\cp^a_s}(V_t)  \right) 
\end{align}
To create an upper bound of $\|V\|_{\mathrm{sp}}$ for all possible $V$, define the mixing time of any $p\in\mathcal{P}$ to be
\begin{equation}
    t^p_{\mathrm{mix}}\coloneqq\arg\min_{t \geq 1} \left\{ \max_{\mu_0} \left\| (\mu_0 p_{\pi}^{t})^{\top} - \nu^{\top} \right\|_1 \leq \frac{1}{2} \right\}
\end{equation}
where $p_\pi$ is the finite state Markov chain induced by $\pi$, $\mu_0$ is any initial probability distribution on $\mcs$ and $\nu$ is its invariant distribution. By Assumption \ref{ass:sameg}, and Lemma \ref{lem:wanglemma9},and for any value function $V$, we have
    \begin{equation} \label{eq:boundofVsp}
         t^p_{\mathrm{mix}} < +\infty \quad \text{and} \quad\|V\|_{\mathrm{sp}} \leq 4t^p_{\mathrm{mix}} \leq 4 t_{\mathrm{mix}}
    \end{equation}
where we define $t_{\mathrm{mix}} \coloneqq \sup_{p\in\mathcal{P}} t^p_{\mathrm{mix}}$, then $t_{\mathrm{mix}}$ is also finite due to the compactness of $\mathcal{P}$. We now derive the bounds of biases and variances for the three types of uncertainty sets. Regarding contamination uncertainty sets, according to Lemma \ref{lem:wangthmD1}, $\hat{\sigma}_{\cp^a_s}(V)$ is unbiased and has variance bounded by $\|V\|^2$. Thu, define $t_{\mathrm{mix}}$ according to Lemma \ref{lem:wanglemma9} and combining the above result with Lemma \ref{lem:wanglemma9}, we obtain that $\hat{\mathbf{T}}_{g_0}(V_t)$ is also unbiased and the variance satisfies
\begin{equation} \label{eq:firstsupnormbias}
    \E\left[\bignorm{\hat{\mathbf{T}}_{g_0}(V_t)-{\mathbf{T}}_{g_0}(V_t)}^2_\infty\right] \leq \|V_t\|^2 \leq 16t^2_{\mathrm{mix}}
\end{equation}
Regarding TV distance uncertainty sets, using the property of the bias and variance of $\hat{\sigma}_{\cp^a_s}(V)$ in Theorem \ref{thm:exp-bias} and Theorem \ref{thm:linear-variance} while combining them with Lemma \ref{lem:wanglemma9}, we have
\begin{equation}
        \bignorm{\E\left[\hat{\mathbf{T}}_{g_0}(V_t)\right]-{\mathbf{T}}_{g_0}(V_t)}_\infty \leq 6(1+\frac{1}{\delta}) \sqrt{S}2^{-\frac{N_{\max}}{2}}\|V\|_{\mathrm{sp}}= 24(1+\frac{1}{\delta})\sqrt{S} 2^{-\frac{N_{\max}}{2}}t_{\mathrm{mix}}
\end{equation}
and
\begin{align}
        \E&\left[\bignorm{\hat{\mathbf{T}}_{g_0}(V_t)-{\mathbf{T}}_{g_0}(V_t)}^2_\infty\right] \nonumber \\
        &\leq \left( 24(1+\frac{1}{\delta})\sqrt{S} 2^{-\frac{N_{\max}}{2}}t_{\mathrm{mix}} \right)^2 + 3\|V\|_{\mathrm{sp}}^2 + 144(1+\frac{1}{\delta})^2S\|V\|_{\mathrm{sp}}^2 N_{\max} \nonumber\\
        &\leq \left( 24(1+\frac{1}{\delta}) \sqrt{S} 2^{-\frac{N_{\max}}{2}}t_{\mathrm{mix}} \right)^2 + 48t_{\mathrm{mix}}^2 + 2304(1+\frac{1}{\delta})^2St_{\mathrm{mix}}^2 N_{\max}
\end{align}
Similarly, for Wasserstein distance uncertainty sets, we have
\begin{equation}
        \bignorm{\E\left[\hat{\mathbf{T}}_{g_0}(V_t)\right]-{\mathbf{T}}_{g_0}(V_t)}_\infty \leq 6\sqrt{S} 2^{-\frac{N_{\max}}{2}}\|V\|_{\mathrm{sp}}= 24\sqrt{S} 2^{-\frac{N_{\max}}{2}}t_{\mathrm{mix}}
\end{equation}
and
\begin{align} \label{eq:lastsupnormbias}
        \E\left[\bignorm{\hat{\mathbf{T}}_{g_0}(V_t)-{\mathbf{T}}_{g_0}(V_t)}^2_\infty\right] &\leq \left( 24\sqrt{S} 2^{-\frac{N_{\max}}{2}}t_{\mathrm{mix}} \right)^2 + 3\|V\|_{\mathrm{sp}}^2 + 144\|V\|_{\mathrm{sp}}^2 N_{\max} \nonumber\\
        &\leq \left( 24\sqrt{S} 2^{-\frac{N_{\max}}{2}}t_{\mathrm{mix}} \right)^2 + 48t_{\mathrm{mix}}^2 + 2304St_{\mathrm{mix}}^2 N_{\max}
\end{align}
In order to translate the above bounds from the $l_\infty$ norm into the $\|\cdot\|_\cp$ norm, recall that in line 7 of Algorithm \ref{alg:RobustTD}, we chose an anchor state $s_0$ set $V_t(s_0)=0$ for all $t$ to avoid ambiguity. We thus can draw the following relationship:

\begin{lemma} \label{lem:normtranslations}
Let \(x\in \mathbb{R}^S\) satisfy \(x_i=0\) for some fixed index \(i\).  Then
\[
\|x\|_\infty\leq \|x\|_{\mathrm{sp}}
=\max_{1\le j\le n}x_j -\min_{1\le j\le n}x_j
\leq 2\|x\|_\infty.
\]
Moreover, since all semi-norms with the same kernel spaces are equivalent, there are constants \(c_{\cp},C_{\cp}>0\) so that
\[
c_\cp\|x\|_{\mathrm{sp}}\leq \|x\|_{\cp}\le\;C_\cp\|x\|_{\mathrm{sp}}
\quad\forall\,x\in\mathbb{R}^n,
\]
then
\begin{equation} 
c_\cp\|x\|_{\infty}\leq c_\cp\|x\|_{\mathrm{sp}}\leq\|x\|_{\cp}\leq C_\cp\|x\|_{\mathrm{sp}}\leq 2C_\cp \|x\|_\infty.
\end{equation}

\begin{proof}
Since \(x_i=0\), for every \(j\) we have \(-\|x\|_\infty \le x_j\le\|x\|_\infty\).  Hence
\[
\max_j x_j\;\le\;\|x\|_\infty,
\quad
\min_j x_j\;\ge\;-\,\|x\|_\infty,
\]
and so
\[
\|x\|_\infty = \max \{\max_j x_j - 0,0-\min_j x_j\} \leq \|x\|_{\mathrm{sp}}
=\max_j x_j - \min_j x_j
\le \|x\|_\infty -(-\|x\|_\infty)
=2\,\|x\|_\infty.
\]
Since $\|\cdot\|_{\mathrm{sp}}$ and \(\|\cdot\|_{\cp}\) both have the same kernel of $\{c\e : c\in\mathbb{R}\}$, by the equivalence of semi-norms, it follows that there exists $c_\cp$ and $C_\cp$ such that
\begin{equation}
c_\cp\|x\|_{\infty}\leq c_\cp\|x\|_{\mathrm{sp}}\leq\|x\|_{\cp}\leq C_\cp\|x\|_{\mathrm{sp}}\leq 2C_\cp \|x\|_\infty \nonumber
\end{equation}
as claimed.
\end{proof}
\end{lemma}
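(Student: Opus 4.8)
The plan is to handle the two halves of the statement in turn and then simply chain the resulting inequalities. First I would prove the sandwich $\|x\|_\infty \le \|x\|_{\mathrm{sp}} \le 2\|x\|_\infty$ for any $x \in \mathbb{R}^S$ with a vanishing coordinate $x_i = 0$. The key observation is that the zero coordinate forces the span to straddle the origin: since $x_i = 0$ we have $\max_j x_j \ge 0 \ge \min_j x_j$, so $\|x\|_{\mathrm{sp}} = \max_j x_j - \min_j x_j$ is at least $\max_j x_j$ and at least $-\min_j x_j$, hence at least $\max\{\max_j x_j, -\min_j x_j\} = \|x\|_\infty$. For the upper bound, each of $\max_j x_j$ and $-\min_j x_j$ is bounded by $\|x\|_\infty$ (again using $x_i=0$ to fix the sign), so their sum $\|x\|_{\mathrm{sp}}$ is at most $2\|x\|_\infty$. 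This is just the elementary two-line computation and needs no machinery.

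Second, I would establish the equivalence $c_\cp \|x\|_{\mathrm{sp}} \le \|x\|_{\cp} \le C_\cp \|x\|_{\mathrm{sp}}$ on all of $\mathbb{R}^S$. Both $\|\cdot\|_{\mathrm{sp}}$ and $\|\cdot\|_{\cp}$ are semi-norms whose kernel is exactly $\overline{E} \coloneqq \{c\e : c \in \mathbb{R}\}$: for the span this is immediate, and for $\|\cdot\|_{\cp}$ it is precisely the content of Lemma \ref{lem:cpnormcontraction} together with the construction \eqref{eq:robustseminormconstruction}. I would then pass to the finite-dimensional quotient $\mathbb{R}^S / \overline{E}$, on which both semi-norms descend to genuine norms; since all norms on a finite-dimensional vector space are equivalent, there exist positive constants $c_\cp, C_\cp$ realizing the two-sided bound, and this bound lifts back verbatim to $\mathbb{R}^S$ because both sides are constant on cosets of $\overline{E}$. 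Equivalently, one can argue directly by compactness of the $\|\cdot\|_{\mathrm{sp}}$-unit sphere modulo $\overline{E}$ and continuity of $\|\cdot\|_{\cp}$, avoiding explicit quotients.

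Finally I would concatenate the two pieces: for $x$ with $x_i = 0$,
\[
c_\cp \|x\|_\infty \le c_\cp \|x\|_{\mathrm{sp}} \le \|x\|_{\cp} \le C_\cp \|x\|_{\mathrm{sp}} \le 2 C_\cp \|x\|_\infty,
\]
where the outer two inequalities come from the first step and the inner two from the second. I do not expect a genuine obstacle here; the only step that deserves a sentence of justification is the semi-norm equivalence, and that reduces to the standard fact about norms on finite-dimensional spaces once one notes that the two semi-norms share the kernel $\overline{E}$, which is already recorded earlier in the paper.
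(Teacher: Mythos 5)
Your proposal is correct and follows essentially the same route as the paper: the same elementary computation for $\|x\|_\infty \le \|x\|_{\mathrm{sp}} \le 2\|x\|_\infty$ using $x_i=0$, followed by equivalence of the two semi-norms sharing the kernel $\{c\e : c\in\mathbb{R}\}$. Your quotient-space justification of that equivalence is in fact more explicit than the paper, which simply asserts it; the passage to $\mathbb{R}^S/\overline{E}$ is valid since any semi-norm is constant on cosets of its kernel, so nothing is missing.
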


With the relationship established above, line 1 - line 7 of Algorithm \ref{alg:RobustTD} can be formally treated as a special instance of the SA updates in \eqref{eq:SA-updategeneral} with $B=0$. We now provide the bias and variance of the i.i.d. noise for the different uncertainty sets discussed using Lemma \ref{lem:normtranslations} and the estimation bounds in \eqref{eq:firstsupnormbias}-\eqref{eq:lastsupnormbias}. For contamination uncertainty sets, we have
\begin{equation}  \label{eq:firsteq}
    \varepsilon^{\text{Cont}}_{\text{bias}}=0 \quad \text{and} \quad A^{\text{Cont}}=32C_\cp^2t^2_{\mathrm{mix}}
\end{equation}
for TV distance uncertainty sets, we have
\begin{equation}
    \varepsilon^{\text{TV}}_{\text{bias}}=48C_\cp(1+\frac{1}{\delta}) \sqrt{S}2^{-\frac{N_{\max}}{2}}t_{\mathrm{mix}} = \cO\left(\sqrt{S} 2^{-\frac{N_{\max}}{2}}t_{\mathrm{mix}}   \right)
\end{equation}
and
\begin{align}
    A^{\text{TV}}&=2C_\cp^2\left( 24(1+\frac{1}{\delta}) \sqrt{S}2^{-\frac{N_{\max}}{2}}t_{\mathrm{mix}} \right)^2 + 96C_\cp^2t_{\mathrm{mix}}^2 + 4608C_\cp^2(1+\frac{1}{\delta})^2 S t_{\mathrm{mix}}^2 N_{\max} \nonumber\\
    &= \cO\left(  t_{\mathrm{mix}}^2 SN_{\max}  \right)
\end{align}
and for Wasserstein distance uncertainty sets, we have
\begin{equation}
    \varepsilon^{\text{Wass}}_{\text{bias}}=48C_\cp\sqrt{S} 2^{-\frac{N_{\max}}{2}}t_{\mathrm{mix}} = \cO\left( \sqrt{S}2^{-\frac{N_{\max}}{2}}t_{\mathrm{mix}}   \right)
\end{equation}
and
\begin{align}  \label{eq:lasteq}
    A^{\text{Wass}}&=2C_\cp^2\left( 24\sqrt{S} 2^{-\frac{N_{\max}}{2}}t_{\mathrm{mix}} \right)^2 + 96C_\cp^2t_{\mathrm{mix}}^2 + 4608 C_\cp^2St_{\mathrm{mix}}^2 N_{\max} \nonumber\\
    &= \cO\left(  t_{\mathrm{mix}}^2 S N_{\max}  \right)
\end{align}

\begin{theorem}[Formal version of Theorem \ref{thm:Vresult}]\label{thm:formalVresult}
    Let $\alpha_2 \coloneqq (1-\gamma \sqrt{c_{{u}} / c_{{l}}})$, $\alpha_3 \coloneqq 8c_{{u}} \rho_2 L$ and $\alpha_4 \coloneqq \rho_2L$, if $V_t$ is generated by Algorithm \ref{alg:RobustTD}. Define $V^*$ to be the anchored robust value function $V^* =  V^\pi_{\kp_V} + c \mathbf{e}$ for some $c$ such that $V^*(s_0)=0$, then under Assumption \ref{ass:sameg} and the radius conditions of Lemma \ref{lem:contaminationradius}-\ref{lem：Wradius}, if the stepsize $\eta_t \coloneqq \frac{1}{\alpha_2(t+K)}$ while $K \coloneqq \max\{ \alpha_3/\alpha_2, 3\}$, then for contamination uncertainty sets,
    \begin{equation} 
        \mathbb{E}\Bigl[\|V_T - V^*\|^2_{\infty}\Bigr] \leq \frac{4 K^2 c_u C_\cp^2}{(T+K)^2c_l c_\cp^2} \|V_0 - V^*\|^2_{\infty} + \frac{8 A^{\mathrm{Cont}}\alpha_4 c_u}{(T+K)\alpha_2^2 c_\cp^2}=\cO\left(\frac{1}{T^2}+\frac{t^2_{\mathrm{mix}}}{T(1-\gamma)^2}\right)
    \end{equation}
    for TV distance uncertainty sets,
    \begin{align} 
        \mathbb{E}\Bigl[\|V_T - V^*\|^2_{\infty}\Bigr] &\leq \frac{4 K^2 c_u C_\cp^2}{(T+K)^2c_l c_\cp^2} \|V_0 - V^*\|^2_{\infty} + \frac{8A^{\mathrm{TV}} \alpha_4 c_u}{(T+K)\alpha_2^2 c_\cp^2} + \frac{c_u C_3 C_2 \varepsilon^{\mathrm{TV}}_{\mathrm{bias}} }{\alpha_2 c_\cp^2} \\
        &=\cO\left(\frac{1}{T^2}+\frac{t^2_{\mathrm{mix}}N_{\max}}{T(1-\gamma)^2} + \frac{t^2_{\mathrm{mix}}  2^{-\frac{N_{\max}}{2}}\log T}{(1-\gamma)^2}\right)
    \end{align}
    for Wasserstein distance uncertainty sets,
    \begin{align} 
        \mathbb{E}\Bigl[\|V_T - V^*\|^2_{\infty}\Bigr] &\leq \frac{4 K^2 c_u C_\cp^2}{(T+K)^2c_l c_\cp^2} \|V_0 - V^*\|^2_{\infty} + \frac{8 A^{\mathrm{Wass}} \alpha_4 c_u}{(T+K)\alpha_2^2c_\cp^2} + \frac{c_u C_3 C_2 \varepsilon^{\mathrm{Wass}}_{\mathrm{bias}} }{\alpha_2 c_\cp^2} \\
        &=\cO\left(\frac{1}{T^2}+\frac{t^2_{\mathrm{mix}}N_{\max}}{T(1-\gamma)^2}+\frac{t^2_{\mathrm{mix}}  2^{-\frac{N_{\max}}{2}}\log T}{(1-\gamma)^2}\right)
    \end{align}
    where the $\varepsilon$ and $A$ terms are defined in \eqref{eq:firsteq}-\eqref{eq:lasteq}, $C_2 = \frac{1}{K} + \log \big(\frac{T-1+K}{K}\big)$, $C_3 = G(1+8C_\cp t_{\mathrm{mix}})$, $\gamma$ is defined in \eqref{eq:contractiongamma}, $c_u,c_l$ are defined in \eqref{eq:M2span}, $\rho_2$ is defined in \eqref{eq:normequivalence}, $G$ is defined in \eqref{eq:G_value}, and $C_\cp, c_\cp$ are defined in Lemma \ref{lem:normtranslations}.
\end{theorem}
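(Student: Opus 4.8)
\textbf{Proof sketch for Theorem~\ref{thm:formalVresult}.}
The plan is to recognize the first phase of Algorithm~\ref{alg:RobustTD} (lines~1--7) as a concrete instance of the biased stochastic approximation recursion~\eqref{eq:SA-updategeneral}, verify its hypotheses, invoke Theorem~\ref{thm:formalbiasedSA} to obtain a bound in the semi-norm $\|\cdot\|_\cp$, and finally translate that bound into the $\ell_\infty$ norm via Lemma~\ref{lem:normtranslations}. Concretely I would set $x^t = V_t$, take $H = \mathbf{T}_{g_0}$, and define $w^t = \hat{\mathbf{T}}_{g_0}(V_t) - \mathbf{T}_{g_0}(V_t)$, which is i.i.d.\ across $t$ conditioned on $V_t$ since fresh samples (via \eqref{eq:contaminationquery} or Algorithm~\ref{alg:sampling}) are drawn at every iteration. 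Three hypotheses must be checked. (i) \emph{Contraction}: since the constant $g_0$ cancels in a difference, $\mathbf{T}_{g_0}(V_1) - \mathbf{T}_{g_0}(V_2) = \mathbf{T}_g(V_1) - \mathbf{T}_g(V_2)$ for every $g$, so Theorem~\ref{thm:robust_seminorm-contraction} gives $\|\mathbf{T}_{g_0}(V_1) - \mathbf{T}_{g_0}(V_2)\|_\cp \le \gamma\|V_1 - V_2\|_\cp$ under Assumption~\ref{ass:sameg} and the radius conditions of Lemma~\ref{lem:contaminationradius}-\ref{lem：Wradius}. (ii) \emph{Fixed point modulo $\overline{E}$}: if $V^\pi_{\kp_V}$ solves \eqref{eq:bellman} with $g = g^\pi_\cp$, then $\mathbf{T}_{g_0}(V^\pi_{\kp_V}) - V^\pi_{\kp_V} = (g^\pi_\cp - g_0)\e \in \overline{E}$, and the same holds after an additive shift, so the anchored $V^*$ (with $V^*(s_0)=0$) lies in the correct equivalence class and line~7's re-centering selects exactly this representative at every step. (iii) \emph{Noise bounds} $\|\mathbb{E}[w^t\mid\mathcal{F}^t]\|_\cp \le \varepsilon_{\mathrm{bias}}$ and $\mathbb{E}[\|w^t\|_\cp^2\mid\mathcal{F}^t] \le A$ with $B=0$: these are precisely \eqref{eq:firsteq}--\eqref{eq:lasteq}, obtained by combining the per-coordinate bias/variance estimates of $\hat\sigma_{\cp^a_s}$ (Lemma~\ref{lem:wangthmD1}, Theorems~\ref{thm:exp-bias} and~\ref{thm:linear-variance}), the span bound $\|V_t\|_{\mathrm{sp}} \le 4t_{\mathrm{mix}}$ from \eqref{eq:boundofVsp}, and the $\ell_\infty$--$\|\cdot\|_{\mathrm{sp}}$--$\|\cdot\|_\cp$ comparisons of Lemma~\ref{lem:normtranslations}.

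With the hypotheses verified, applying Theorem~\ref{thm:formalbiasedSA} with $\eta_t = \tfrac{1}{\alpha_2(t+K)}$ and $K = \max\{\alpha_3/\alpha_2,3\}$ yields \eqref{eq:biasedSAformal} for $\mathbb{E}[\|V_T - V^*\|_\cp^2]$. To pass to the $\ell_\infty$ norm I would use Lemma~\ref{lem:normtranslations}: both $V_T$ and $V^*$ vanish at $s_0$, hence so does $V_T - V^*$, so $c_\cp\|V_T - V^*\|_\infty \le \|V_T - V^*\|_\cp$ lower-bounds the left side while $\|V_0 - V^*\|_\cp \le 2C_\cp\|V_0 - V^*\|_\infty$ controls the initialization term; rearranging produces the explicit constants in the statement. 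Substituting the case-specific values --- $\varepsilon^{\mathrm{Cont}}_{\mathrm{bias}} = 0$, $A^{\mathrm{Cont}} = \Theta(t_{\mathrm{mix}}^2)$ for contamination, and $\varepsilon_{\mathrm{bias}} = \Theta(\sqrt{S}\,2^{-N_{\max}/2}t_{\mathrm{mix}})$, $A = \Theta(t_{\mathrm{mix}}^2 S N_{\max})$ for TV and Wasserstein --- and reading off the dominant terms gives the three displayed $\mathcal{O}$ and $\tilde{\mathcal{O}}$ rates.

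For the sample-complexity and order-optimality claims I would optimize over $N_{\max}$. In the TV/Wasserstein cases, taking $N_{\max} = \Theta\!\big(\log\tfrac{t_{\mathrm{mix}}^2}{(1-\gamma)^2\epsilon^2}\big)$ makes the exponentially decaying bias term $O(\epsilon^2)$; the remaining $\tfrac{t_{\mathrm{mix}}^2 N_{\max}}{T(1-\gamma)^2}$ term then forces $T = \tilde{\mathcal{O}}\!\big(\tfrac{t_{\mathrm{mix}}^2}{\epsilon^2(1-\gamma)^2}\big)$, and since the two phases together draw $\mathcal{O}(SAT N_{\max})$ samples (Theorem~\ref{thm:sample-complexity} gives $\mathbb{E}[M]=\mathcal{O}(N_{\max})$ per estimator), the total is $\tilde{\mathcal{O}}\!\big(\tfrac{SA\,t_{\mathrm{mix}}^2}{\epsilon^2(1-\gamma)^2}\big)$; for contamination the bias is absent and $\mathbb{E}[M]=1$, yielding the cleaner $\mathcal{O}$ bound. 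Order-optimality in $\epsilon$ follows because the robust instance reduces to ordinary average-reward policy evaluation as the uncertainty radius tends to $0$, for which an $\Omega(\epsilon^{-2})$ lower bound is known.

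The step I expect to be the main obstacle is verifying the a priori uniform bound on the iterates that Theorem~\ref{thm:formalbiasedSA} presupposes, namely that $x_{\mathrm{sup}} = \sup_t \|V_t\|_\cp$ (equivalently $\sup_t \mathbb{E}[\|V_t - V^*\|_\cp^2]$) is finite: under TV/Wasserstein the estimator $\hat\sigma_{\cp^a_s}(V_t)$ is only bounded by $O(2^{N_{\max}}\|V_t\|_{\mathrm{sp}})$, so one must argue --- using that $V_{t+1} = (1-\eta_t)V_t + \eta_t\hat{\mathbf{T}}_{g_0}(V_t)$ is a convex combination, that the semi-norm is contracted in expectation, and that the line-7 anchoring annihilates the otherwise unbounded constant component --- that $\|V_t\|_{\mathrm{sp}}$ remains uniformly bounded (e.g.\ by a multiple of $t_{\mathrm{mix}}$, consistent with \eqref{eq:boundofVsp}), and keeping that constant compatible with the $\varepsilon_{\mathrm{bias}}$ and $A$ expressions above is the delicate part. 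The remainder --- propagating the norm-equivalence constants through Lemma~\ref{lem:normtranslations} and assembling the big-$\mathcal{O}$ terms --- is routine bookkeeping.
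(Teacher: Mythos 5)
Your proposal follows essentially the same route as the paper's proof: cast lines 1--7 of Algorithm~\ref{alg:RobustTD} as the biased SA recursion \eqref{eq:SA-updategeneral} with $B=0$, verify the contraction via Theorem~\ref{thm:robust_seminorm-contraction} and the noise bounds via \eqref{eq:firsteq}--\eqref{eq:lasteq}, invoke Theorem~\ref{thm:formalbiasedSA}, and convert between $\|\cdot\|_\cp$ and $\ell_\infty$ with Lemma~\ref{lem:normtranslations}. The one obstacle you flag --- that $\sup_t\|V_t\|_\cp$ must be bounded a priori, since Lemma~\ref{lem:wanglemma9} bounds the span of the relative value function rather than of arbitrary algorithm iterates --- is in fact glossed over by the paper as well, which simply applies \eqref{eq:boundofVsp} to all $V_t$; so your sketch matches the published argument, and your caveat identifies a point the paper does not resolve either.
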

\begin{proof}
    By Lemma \ref{lem:normtranslations} and \eqref{eq:boundofVsp}, we have that for any value function $V$ its $\|\cdot\|_\cp$ norm is bounded as follows:
    \begin{equation} \label{eq:boundPnormofV}
        \|V\|_\cp \leq 4C_\cp t_{\rm mix}
    \end{equation}
    Substituting the terms of \eqref{eq:firsteq}-\eqref{eq:lasteq}, \eqref{eq:boundPnormofV}, and Theorem \ref{thm:robust_seminorm-contraction} to Theorem \ref{thm:formalbiasedSA}, we would have for contamination uncertainty sets,
    \begin{equation} 
        \mathbb{E}\Bigl[\|V_T - V^*\|^2_{\cp}\Bigr] \leq \frac{K^2 c_u}{(T+K)^2c_l} \|V_0 - V^*\|^2_{\cp} + \frac{8 A^{\mathrm{Cont}}\alpha_4 c_u}{(T+K)\alpha_2^2}=\cO\left(\frac{1}{T^2}+\frac{t^2_{\mathrm{mix}}}{T(1-\gamma)^2}\right)
    \end{equation}
    for TV distance uncertainty sets,
    \begin{align} 
        \mathbb{E}\Bigl[\|V_T - V^*\|^2_{\cp}\Bigr] &\leq \frac{K^2 c_u}{(T+K)^2c_l} \|V_0 - V^*\|^2_{\cp} + \frac{8A^{\mathrm{TV}} \alpha_4 c_u}{(T+K)\alpha_2^2} + \frac{c_u C_3 C_2 \varepsilon^{\mathrm{TV}}_{\mathrm{bias}} }{\alpha_2} \\
        &=\cO\left(\frac{1}{T^2}+\frac{St^2_{\mathrm{mix}}N_{\max}}{T(1-\gamma)^2} + \frac{St^2_{\mathrm{mix}}  2^{-\frac{N_{\max}}{2}}\log T}{(1-\gamma)^2}\right)
    \end{align}
    for Wasserstein distance uncertainty sets,
    \begin{align} 
        \mathbb{E}\Bigl[\|V_T - V^*\|^2_{\cp}\Bigr] &\leq \frac{K^2 c_u}{(T+K)^2c_l} \|V_0 - V^*\|^2_{\cp} + \frac{4 A^{\mathrm{Wass}} \alpha_4 c_u}{(T+K)\alpha_2^2} + \frac{c_u C_3 C_2 \varepsilon^{\mathrm{Wass}}_{\mathrm{bias}} }{\alpha_2} \\
        &=\cO\left(\frac{1}{T^2}+\frac{St^2_{\mathrm{mix}}N_{\max}}{T(1-\gamma)^2}+\frac{St^2_{\mathrm{mix}}  2^{-\frac{N_{\max}}{2}}\log T}{(1-\gamma)^2}\right)
    \end{align}
    where the $\varepsilon$ and $A$ terms are defined in \eqref{eq:firsteq}-\eqref{eq:lasteq},  $C_2 = \frac{1}{K} + \log \big(\frac{T-1+K}{K}\big)$, $C_3 = G(1+8C_\cp t_{\mathrm{mix}})$, $\gamma$ is defined in \eqref{eq:contractiongamma}, $c_u,c_l$ are defined in \eqref{eq:M2span}, $\rho_2$ is defined in \eqref{eq:normequivalence}, $G$ is defined in \eqref{eq:G_value}, and $C_\cp$ is defined in Lemma \ref{lem:normtranslations}. We now translate the result back to the standard $l_\infty$ norm by applying Lemma \ref{lem:normtranslations} again to the above, we obtain the desired results.
\end{proof}

\subsection{Proof of Theorem \ref{thm:Vresult}}
We use the result from Theorem \ref{thm:formalVresult}, to set $\mathbb{E}\Bigl[\|V_T - V^*\|^2_{\infty}\Bigr] \leq \epsilon^2$. For contamination uncertainty set we set 
 $T=\cO\left(\frac{t^2_{\mathrm{mix}}}{\epsilon^2(1-\gamma)^2} \right)$, resulting in $\cO\left(\frac{SAt^2_{\mathrm{mix}}}{\epsilon^2(1-\gamma)^2} \right)$ sample complexity. For TV and Wasserstein uncertainty set, we set 
$N_{\max}=\cO\left(\log \frac{\sqrt{S}t_{\mathrm{mix}}}{\epsilon(1-\gamma)}\right)$ and $T=\cO\left(\frac{t^2_{\mathrm{mix}}}{\epsilon^2(1-\gamma)^2} \log\frac{\sqrt{S}t_{\mathrm{mix}}}{\epsilon(1-\gamma)}\right)$, combining with Theorem \ref{thm:sample-complexity}, this would result in $\cO\left(\frac{SAt^2_{\mathrm{mix}}}{\epsilon^2(1-\gamma)^2} \log^2\frac{\sqrt{S}t_{\mathrm{mix}}}{\epsilon(1-\gamma)}\right)$ sample complexity.

To show order-optimality, we provide the standard mean estimation as a hard example. Consider the TD learning of the MDP with only two states $\mcs=\{s_1,s_2\}$,  and \(Pr(s\to s_1)=p\), \(Pr(s\to s_2)=1-p\) for each \(s\in S\) with $p \in (0,1)$. Thus, this MDP is indifferent from the actions chosen and we further define  \(r(s_1)=1,\;r(s_2)=0\). Thus, estimating the relative value functions is equivalent of estimating $p$. By the Cramér–Rao or direct variance argument for Bernoulli(\(p\)) estimation, we have that to achieve \(|\hat{p_N}-p|^2\le\epsilon^2\) requires 
\(\displaystyle N\ge\ 1/ {2\epsilon^2}=\Omega(\epsilon^{-2})\).

\subsection{Formal Statement of Theorem \ref{thm:gresult}}
To analyze the second part (line 8 - line 14) of Algorithm \ref{alg:RobustTD} and provide the provide the complexity for $g_t$, we first define the noiseless function $\bar{\delta}(V)$ as
\begin{equation} \label{eq:bardeltat}
    \bar{\delta}(V) \coloneqq  \frac{1}{S}\sum_s   \left(\sum_{a}\pi(a|s) \big[ r(s,a) +  {\sigma}_{\cp^a_s}(V) \big]- V(s) \right)
\end{equation}
Thus, we have
\begin{equation}
    \bar{\delta}_t = \bar{\delta}(V_T) + \nu_t
\end{equation}
where $\nu_t$ is the noise term with bias equal to the bias $\hat{\sigma}_{\cp^a_s}(V_T)$
\begin{equation}
    \E[\abs{\nu_t}] =  \frac{1}{S}\sum_s  \sum_{a} \left(\pi(a|s) \E\big[\abs{  {\sigma}_{\cp^a_s}(V_T) - \hat{\sigma}_{\cp^a_s}(V_T)} \big] \right) =  \abs{\mathbb{E}\left[\hat{\sigma}_{\cp^a_s}(V_T) - {\sigma}_{\cp^a_s}(V_T)\right] }
\end{equation}
By the Bellman equation in Theorem \ref{thm:robust Bellman}, we have $g^\pi_\cp = \bar{\delta}(V^*)$, which implies
\begin{align}
    \abs{\bar{\delta}(V_T) - g^\pi_\cp} &= \abs{\bar{\delta}(V_T) - \bar{\delta}(V^*)} \nonumber\\
    &\leq \frac{1}{S}\sum_s \left( \sum_{a} \pi(a|s)   \abs{{\sigma}_{\cp^a_s}(V_T) - {\sigma}_{\cp^a_s}(V^*)}+\abs{V_T(s)-V^*(s)} \right)\nonumber\\
    &\leq \frac{1}{S}\sum_s  \left( \sum_{a} \pi(a|s)   \abs{{\sigma}_{\cp^a_s}(V_T) - {\sigma}_{\cp^a_s}(V^*)}+\abs{V_T(s)-V^*(s)} \right)\nonumber\\
    &\leq \frac{1}{S}\sum_s 2\|V_T-V^*\|_{\mathrm{sp}}\nonumber\\
    &=  2\|V_T-V^*\|_{\mathrm{sp}} \nonumber\\
    &\leq  4\|V_T-V^*\|_{\infty}
\end{align}

Where the last inequality is by Lemma \ref{lem:normtranslations}. Thus, the following recursion can be formed
\begin{align}
    \abs{g_{t+1}-g^\pi_\cp}&=\abs{g_{t}+\beta_t(\bar{\delta}_t-g_t)-g^\pi_\cp }\nonumber\\
    &=\abs{g_{t}-g^\pi_\cp  +\beta_t(\bar{\delta}_t-g^\pi_\cp+g^\pi_\cp -g_t)}\nonumber\\
    &=\abs{g_{t}-g^\pi_\cp  +\beta_t(\bar{\delta}(V^T)-g^\pi_\cp + \nu_t+g^\pi_\cp -g_t)}\nonumber\\
    &\leq(1-\beta_t)\abs{g_{t}-g^\pi_\cp}  +\beta_t(\abs{\bar{\delta}(V^T)-g^\pi_\cp} + \abs{\nu_t})\nonumber\\
    &\leq(1-\beta_t)\abs{g_{t}-g^\pi_\cp}  +\beta_t(4\|V_T-V^*\|_{\infty} + \abs{\nu_t})
\end{align}
Thus, taking expectation conditioned on the filtration $\mathcal{F}^t$ yields
\begin{align}
    \E&\left[\abs{g_{t+1}-g^\pi_\cp}\right]\leq(1-\beta_t)\abs{g_{t}-g^\pi_\cp}  +\beta_t\left(4\E\left[\norm{V_T-V^*}_{\infty}\right] + \E[\abs{\nu_t}]\right)\nonumber\\
    &\leq(1-\beta_t)\abs{g_{t}-g^\pi_\cp}  +\beta_t\left(4\E\left[\norm{V_T-V^*}_{\infty}\right] + \abs{\mathbb{E}\left[\hat{\sigma}_{\cp^a_s}(V_T) - {\sigma}_{\cp^a_s}(V_T)\right] }\right)
\end{align}
By letting $\zeta_t \coloneqq \Pi_{i=0}^{t-1}(1-\beta_t)$, we obtain the $T$-step recursion as follows:
\begin{align}
    \E&\left[\abs{g_{T}-g^\pi_\cp}\right]\leq\zeta_T\abs{g_0-g^\pi_\cp}\nonumber\\
    &+ \zeta_T \sum_{t=0}^{T-1} (\frac{1}{\zeta_{t+1}})\beta_t\left(4\E\left[\norm{V_T-V^*}_{\infty}\right]  + \abs{\mathbb{E}\left[\hat{\sigma}_{\cp^a_s}(V_T) - {\sigma}_{\cp^a_s}(V_T)\right] }\right) \nonumber\\
        & = \zeta_T\abs{g_0-g^\pi_\cp} +  \sum_{t=0}^{T-1} (\frac{\zeta_T}{\zeta_{t+1}})\beta_t\left(4\E\left[\norm{V_T-V^*}_{\infty}\right]  + \abs{\mathbb{E}\left[\hat{\sigma}_{\cp^a_s}(V_T) - {\sigma}_{\cp^a_s}(V_T)\right] }\right) \nonumber\\
          & \leq \zeta_T\abs{g_0-g^\pi_\cp} +  \sum_{t=0}^{T-1} \beta_t{\left(4\E\left[\norm{V_T-V^*}_{\infty}\right]  + \abs{\mathbb{E}\left[\hat{\sigma}_{\cp^a_s}(V_T) - {\sigma}_{\cp^a_s}(V_T)\right] }\right) }\nonumber\\
          & = \zeta_T\abs{g_0-g^\pi_\cp} + \left(4\E\left[\norm{V_T-V^*}_{\infty}\right]  + \abs{\mathbb{E}\left[\hat{\sigma}_{\cp^a_s}(V_T) - {\sigma}_{\cp^a_s}(V_T)\right] }\right)  \sum_{t=0}^{T-1} \beta_t \label{eq:gTbound}
\end{align}
By setting $\beta_t\coloneqq\frac{1}{t+1}$, we have $\zeta_T= \frac{1}{T+1}\leq\frac{1}{T}$ and $ \sum_{t=0}^{T-1} \beta_t\leq 2\log T$, \eqref{eq:gTbound} implies
\begin{equation} \label{eq:gresult}
    \E\left[\abs{g_{T}-g^\pi_\cp}\right]\leq \frac{1}{T}\abs{g_0-g^\pi_\cp} + \left(8\E\left[\norm{V_T-V^*}_{\infty}\right] + 2\abs{\mathbb{E}\left[\hat{\sigma}_{\cp^a_s}(V_T) - {\sigma}_{\cp^a_s}(V_T)\right] }\right)  \log T
\end{equation}

\begin{theorem}[Formal version of Theorem \ref{thm:gresult}] \label{thm:formalgresult}
    Following all notations and assumptions in Theorem \ref{thm:formalVresult}, then for contamination uncertainty sets,
    \begin{align} 
        \mathbb{E}&\Bigl[\abs{g_{T}-g^\pi_\cp}\Bigr] \leq\frac{1}{T}\abs{g_0-g^\pi_\cp} + \frac{16K C_\cp\sqrt{c_u}\log T}{(T+K)c_\cp\sqrt{c_l}} \|V_0 - V^*\|_{\infty} \nonumber\\
        &+ \frac{16 \sqrt{2A^{\mathrm{Cont}}\alpha_4 c_u}\log T}{\alpha_2c_\cp\sqrt{T+K}} =\cO\left(\frac{1}{T}+\frac{\log T}{T}+\frac{t_{\mathrm{mix}}\log T}{\sqrt{T}(1-\gamma)}\right).
    \end{align}
    For TV distance uncertainty sets,

    \begin{align} 
        \mathbb{E}&\Bigl[\abs{g_{T}-g^\pi_\cp}\Bigr] \leq\frac{1}{T}\abs{g_0-g^\pi_\cp} + \frac{16K C_\cp \sqrt{c_u}\log T}{(T+K)c_\cp\sqrt{c_l}} \|V_0 - V^*\|_{\infty} \nonumber\\
        &+ \frac{16\sqrt{2A^{\mathrm{TV}} \alpha_4 c_u}\log T}{\alpha_2c_\cp\sqrt{T+K}} + \frac{8\sqrt{c_u C_3 C_2 \varepsilon^{\mathrm{TV}}_{\mathrm{bias}}}\log T }{c_\cp\sqrt{\alpha_2}} + 48(1+\frac{1}{\delta}) \sqrt{S}2^{-\frac{N_{\max}}{2}}t_{\mathrm{mix}} \log T \nonumber\\
        &=\cO\left(\frac{1}{T}+\frac{\log T}{T}+\frac{t_{\mathrm{mix}}\sqrt{SN_{\max}}\log T}{\sqrt{T}(1-\gamma)} + \frac{\sqrt{S}t_{\mathrm{mix}}  2^{-\frac{N_{\max}}{4}}\log^{\frac{3}{2}} T}{\sqrt{1-\gamma}} +  \sqrt{S}2^{-\frac{N_{\max}}{2}}t_{\mathrm{mix}} \log T\right).
    \end{align}

    For Wasserstein distance uncertainty sets,
    \begin{align} 
        \mathbb{E}&\Bigl[\abs{g_{T}-g^\pi_\cp}\Bigr] \leq \frac{1}{T}\abs{g_0-g^\pi_\cp} +\frac{16K C_\cp\sqrt{c_u} \log T}{(T+K)c_\cp\sqrt{c_l}} \|V_0 - V^*\|_{\infty}  \nonumber\\
        &+ \frac{16 \sqrt{2A^{\mathrm{Wass}} \alpha_4 c_u} \log T}{\alpha_2c_\cp\sqrt{T+K}} + \frac{8\sqrt{c_u C_3 C_2 \varepsilon^{\mathrm{Wass}}_{\mathrm{bias}}} \log T}{c_\cp\sqrt{\alpha_2}} + 48\sqrt{S}2^{-\frac{N_{\max}}{2}}t_{\mathrm{mix}} \log T \nonumber\\
        &=\cO\left(\frac{1}{T}+\frac{\log T}{T}+\frac{t_{\mathrm{mix}}\sqrt{SN_{\max}}\log T}{\sqrt{T}(1-\gamma)} + \frac{\sqrt{S}t_{\mathrm{mix}}  2^{-\frac{N_{\max}}{4}}\log^{\frac{3}{2}} T}{\sqrt{1-\gamma}} +  \sqrt{S}2^{-\frac{N_{\max}}{2}}t_{\mathrm{mix}} \log T\right).
    \end{align}
where all the above variables are defined the same as in Theorem \ref{thm:formalVresult}.

\begin{proof}
    By Theorem \ref{thm:formalVresult}, taking square root on both side and utilizing the concavity of square root function, we have for contamination uncertainty sets,
    \begin{equation} \label{eq:firstequation}
        \mathbb{E}\Bigl[\|V_T - V^*\|_{\infty}\Bigr] \leq \frac{2KC_\cp \sqrt{c_u}}{(T+K)c_\cp\sqrt{c_l}} \|V_0 - V^*\|_{\infty} + \frac{2\sqrt{ 2A^{\mathrm{Cont}}\alpha_4 c_u}}{\alpha_2c_\cp\sqrt{T+K}}
    \end{equation}
    for TV distance uncertainty sets,
    \begin{equation} 
        \mathbb{E}\Bigl[\|V_T - V^*\|_{\infty}\Bigr] \leq \frac{2KC_\cp \sqrt{c_u}}{(T+K)c_\cp\sqrt{c_l}} \|V_0 - V^*\|_{\infty} + \frac{2\sqrt{2A^{\mathrm{TV}} \alpha_4 c_u}}{\alpha_2c_\cp\sqrt{T+K}} + \sqrt{\frac{c_u C_3 C_2 \varepsilon^{\mathrm{TV}}_{\mathrm{bias}} }{\alpha_2 c_\cp^2} }
    \end{equation}
    for Wasserstein distance uncertainty sets,
    \begin{equation} 
        \mathbb{E}\Bigl[\|V_T - V^*\|_{\infty}\Bigr] \leq \frac{2KC_\cp \sqrt{c_u}}{(T+K)c_\cp\sqrt{c_l}} \|V_0 - V^*\|_{\infty} + \frac{2 \sqrt{2A^{\mathrm{Wass}} \alpha_4 c_u}}{\alpha_2 c_\cp\sqrt{T+K}} + \sqrt{\frac{c_u C_3 C_2 \varepsilon^{\mathrm{Wass}}_{\mathrm{bias}} }{\alpha_2 c_\cp^2}}
    \end{equation}
    Regarding the bound for the absolute bias of $\hat{\sigma}_{\cp^a_s}$, from Lemma \ref{lem:wangthmD1}, we have for contamination uncertainty,
    \begin{equation} 
         \abs{\mathbb{E}\left[\hat{\sigma}_{\cp^a_s}(V) - {\sigma}_{\cp^a_s}(V)\right] } =0
    \end{equation}
    In addition, combining \eqref{eq:sigmabiasboundTV}-\eqref{eq:sigmabiasboundWasserstein} with Lemma \ref{lem:wanglemma9}, we have for  for TV distance uncertainty, 
    \begin{equation} 
         \abs{\mathbb{E}\left[\hat{\sigma}_{\cp^a_s}(V) - {\sigma}_{\cp^a_s}(V)\right] } \leq 24\sqrt{S}(1+\frac{1}{\delta}) 2^{-\frac{N_{\max}}{2}}t_{\mathrm{mix}}
    \end{equation}
    and for Wasserstein distance uncertainty, we have
    \begin{equation} \label{eq:lastequation}
         \abs{\mathbb{E}\left[\hat{\sigma}_{\cp^a_s}(V) - {\sigma}_{\cp^a_s}(V)\right] } \leq 24\sqrt{S} 2^{-\frac{N_{\max}}{2}}t_{\mathrm{mix}}
    \end{equation}
Combining \eqref{eq:firstequation}-\eqref{eq:lastequation} with \eqref{eq:gresult} gives the desired result.
\end{proof}
\end{theorem}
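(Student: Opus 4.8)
I would reduce the analysis of the scalar iteration for $g_t$ to two pieces already in hand: the convergence of $V_T$ (Theorem~\ref{thm:formalVresult}) and a one-dimensional stochastic-approximation recursion. The pivot is to read off from the robust Bellman equation (Theorem~\ref{thm:robust Bellman}) that $g^\pi_\cp$ equals the state-averaged Bellman residual of the true, $s_0$-anchored value function $V^*$: setting $\bar\delta(V)\coloneqq\frac{1}{S}\sum_s\big(\sum_a\pi(a|s)[r(s,a)+\sigma_{\cp^a_s}(V)]-V(s)\big)$, the identity $V^*(s)=\sum_a\pi(a|s)(r(s,a)-g^\pi_\cp+\sigma_{\cp^a_s}(V^*))$ gives $\bar\delta(V^*)=g^\pi_\cp$, and since $\sigma_{\cp^a_s}(V+c\e)=\sigma_{\cp^a_s}(V)+c$ the map $\bar\delta$ is invariant under additive constants, so the particular anchored representative is immaterial. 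The quantity $\bar\delta_t$ averaged in Algorithm~\ref{alg:RobustTD} then decomposes as $\bar\delta_t=\bar\delta(V_T)+\nu_t$ with $\nu_t=\frac{1}{S}\sum_s\sum_a\pi(a|s)[\hat\sigma_{\cp^a_s}(V_T)-\sigma_{\cp^a_s}(V_T)]$, an i.i.d.\ (conditionally on $V_T$) noise whose conditional mean is bounded by Theorem~\ref{thm:exp-bias} and vanishes for the contamination set.

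\textbf{Value-propagation error.} First I would bound $|\bar\delta(V_T)-g^\pi_\cp|=|\bar\delta(V_T)-\bar\delta(V^*)|$. Each support function $\sigma_{\cp^a_s}$ is monotone and commutes with additive constants, so for the anchored difference $V_T-V^*$ (which vanishes at $s_0$ and hence changes sign) one gets $|\sigma_{\cp^a_s}(V_T)-\sigma_{\cp^a_s}(V^*)|\le\|V_T-V^*\|_{\mathrm{sp}}$; combining this with the $-V(s)$ term and Lemma~\ref{lem:normtranslations} gives $|\bar\delta(V_T)-\bar\delta(V^*)|\le 2\|V_T-V^*\|_{\mathrm{sp}}\le 4\|V_T-V^*\|_\infty$. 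Taking expectations, passing to the square root by Jensen, and invoking Theorem~\ref{thm:formalVresult} then bounds $\mathbb{E}[|\bar\delta(V_T)-g^\pi_\cp|]$ by the corresponding $\tilde{\cO}$ quantity for each uncertainty set.

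\textbf{Scalar recursion.} Next I would write $|g_{t+1}-g^\pi_\cp|\le(1-\beta_t)|g_t-g^\pi_\cp|+\beta_t\big(|\bar\delta(V_T)-g^\pi_\cp|+|\nu_t|\big)$, take conditional expectations, and unroll with $\beta_t=\frac{1}{t+1}$: using $\zeta_T\coloneqq\prod_i(1-\beta_i)=\Theta(1/T)$, the crude bound $\zeta_T/\zeta_{t+1}\le 1$, and $\sum_{t=0}^{T-1}\beta_t\le 2\log T$, the initial error decays like $1/T$ while the constant-order per-step terms $\mathbb{E}[|\bar\delta(V_T)-g^\pi_\cp|]$ and the support-function bias $\mathbb{E}[|\nu_t|]$ each accumulate an extra $\log T$ factor (a sharper accounting via $\zeta_T/\zeta_{t+1}=\Theta((t+1)/T)$ would remove it, but $\log T$ suffices here). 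Plugging in the constants $\varepsilon_{\mathrm{bias}}$ and $A$ from \eqref{eq:firsteq}--\eqref{eq:lasteq}, the span bound $\|V_T-V^*\|_{\mathrm{sp}}\le 4t_{\mathrm{mix}}$ from \eqref{eq:boundofVsp}, Theorem~\ref{thm:formalVresult}, and Theorem~\ref{thm:exp-bias}, then collecting dominant terms, yields the three displayed inequalities; the leading contributions are $t_{\mathrm{mix}}\sqrt{SN_{\max}}\log T/(\sqrt{T}(1-\gamma))$, $\sqrt{S}\,t_{\mathrm{mix}}\,2^{-N_{\max}/4}\log^{3/2}T/\sqrt{1-\gamma}$, and the direct-bias term $\sqrt{S}\,t_{\mathrm{mix}}\,2^{-N_{\max}/2}\log T$.

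\textbf{Main obstacle.} The analysis becomes routine once the two phases are correctly decoupled, so the delicate point is that decoupling itself: Phase~2 freezes $V_T$, so its target $\bar\delta(V_T)$ is a fixed but random offset from $g^\pi_\cp$, and the error must be split into a $V$-propagation part, a support-function bias part, and a mean-zero martingale part whose $\sum\beta_t^2$ contribution is lower order. One must verify that the Phase-2 noises $\{\nu_t\}$ are conditionally i.i.d., that the Lipschitz constant of $\sigma_{\cp^a_s}$ in the span semi-norm is exactly $1$ (so the factor in the value-propagation step is not inflated), and that the dependence on $\|V_T-V^*\|$ is \emph{linear}; only then does inverting the final bound to enforce $\mathbb{E}[|g_T-g^\pi_\cp|]\le\epsilon$ preserve the order-optimal $\tilde{\cO}(\epsilon^{-2})$ sample complexity, which holds since every $T$-dependent contribution is $\tilde{\cO}(T^{-1/2})$ up to logarithms.
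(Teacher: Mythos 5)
Your proposal is correct and follows essentially the same route as the paper's proof: the same decomposition $\bar\delta_t=\bar\delta(V_T)+\nu_t$ with $\bar\delta(V^*)=g^\pi_\cp$ from the Bellman equation, the same Lipschitz bound $|\bar\delta(V_T)-g^\pi_\cp|\le 2\|V_T-V^*\|_{\mathrm{sp}}\le 4\|V_T-V^*\|_\infty$, the same scalar recursion unrolled with $\zeta_T/\zeta_{t+1}\le 1$ and $\sum_t\beta_t\le 2\log T$, and the same substitution of Theorem~\ref{thm:formalVresult} (via Jensen) and the support-function bias bounds. The extra observations you add (translation invariance of $\bar\delta$, the possibility of a sharper $\zeta_T/\zeta_{t+1}$ accounting) are consistent with but not needed for the paper's argument.
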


\subsection{Proof of Theorem \ref{thm:gresult}}
We use the result from Theorem \ref{thm:formalgresult}, to set $\mathbb{E}\Bigl[\abs{g_{T}-g^\pi_\cp}\Bigr]  \leq \epsilon$. For contamination uncertainty sets we set 
 $T=\cO\left(\frac{t^2_{\mathrm{mix}}}{\epsilon^2(1-\gamma)^2} \log \frac{t_\mathrm{mix}}{\epsilon(1-\gamma)}\right)$, resulting in $\cO\left(\frac{SAt^2_{\mathrm{mix}}}{\epsilon^2(1-\gamma)^2} \log \frac{t_\mathrm{mix}}{\epsilon(1-\gamma)}\right)$ sample complexity. For TV and Wasserstein uncertainty set, we set 
$N_{\max}=\cO\left(\log \frac{\sqrt{S}t_{\mathrm{mix}}}{\epsilon(1-\gamma)}\right)$ and $T=\cO\left(\frac{t^2_{\mathrm{mix}}}{\epsilon^2(1-\gamma)^2} \log^3\frac{\sqrt{S}t_{\mathrm{mix}}}{\epsilon(1-\gamma)}\right)$, combining with Theorem \ref{thm:sample-complexity}, this would result in $\cO\left(\frac{SAt^2_{\mathrm{mix}}}{\epsilon^2(1-\gamma)^2} \log^4\frac{\sqrt{S}t_{\mathrm{mix}}}{\epsilon(1-\gamma)}\right)$ sample complexity.

\section{Numerical Validations for Semi-Norm Contractions}
In this section, we provide numerical examples that directly verify the one-step strict contraction across the settings studied. These results empirically support the key structural claims used by our analysis.

\subsection{Evaluations of Lemma \ref{lem:seminorm-contraction}} \label{nonrobustvalidation}

Lemma \ref{lem:validseminorm} is the technical backbone for Lemma \ref{lem:seminorm-contraction}, as Lemma \ref{lem:validseminorm} constructs the fixed-kernel semi-norm and provides the one-step contraction for a given $P$. Therefore, we perform numerical evaluations on Lemma \ref{lem:validseminorm} to demonstrate the one-step contraction property for Lemma \ref{lem:seminorm-contraction}.

For a kernel $P$ with stationary distribution $d$, we follow the steps in Appendix \ref{proofspan-contraction} and construct $\lVert\cdot\rVert_{\kp}$ in \eqref{eq:constructionofPnorm} with
\begin{equation}
\alpha=\min\{0.99,\tfrac{1+\rho(Q)}{2}\},\qquad
\epsilon=0.25 (1-\alpha),
\end{equation}
where $Q=P-\mathbf{e}\,d^\top$. Then the one-step contraction factor is $\beta = \alpha+\epsilon < 1$.

To generate ergodic matrices with dimension $n$, let $I_n$ be the identity matrix and $S_n$ be the cyclic shift matrix defined by $S_n e_i = e_{i+1 \bmod n}$. We provide the following four examples:
\begin{itemize}
  \item $P_1 = 0.5\,I_5 + 0.5\,S_5$
  \item $P_2 = 0.6\,I_6 + 0.4\,S_6$
  \item $P_3 = 0.55\,I_7 + 0.45\,S_7$
  \item $P_4 = 0.6\,I_8 + 0.3\,S_8 + 0.1\,S_8^2$
\end{itemize}

We generate $1000$ random unit vectors $x$ and compute each ratio $\displaystyle \frac{\lVert P_i x\rVert_{\kp}}{\lVert x\rVert_{\kp}}$. The empirical results are summarized below.

\begin{table}[h]
\centering
\footnotesize
\setlength{\tabcolsep}{4pt}
\resizebox{\textwidth}{!}{%
\begin{tabular}{lrrrrrrrrrr}
\hline
\textbf{matrix} & $n$ & \textbf{max span ratio} & $\rho(Q)$ & $\alpha$ & $\epsilon$ & $\beta$ &
$\mathrm{ratio}_{\min}$ & $\mathrm{ratio}_{\mathrm{median}}$ &
$\mathrm{ratio}_{\mathrm{p90}}$ & $\mathrm{ratio}_{\max}$ \\
\hline
P1 & 5 & 1 & 0.8090 & 0.9045 & 0.0239 & 0.9284 & 0.3824 & 0.7950 & 0.8077 & 0.8090 \\
P2 & 6 & 1 & 0.8718 & 0.9359 & 0.0160 & 0.9519 & 0.5197 & 0.8510 & 0.8687 & 0.8718 \\
P3 & 7 & 1 & 0.9020 & 0.9510 & 0.0122 & 0.9633 & 0.5861 & 0.8799 & 0.8976 & 0.9014 \\
P4 & 8 & 1 & 0.8700 & 0.9350 & 0.0162 & 0.9513 & 0.4855 & 0.8226 & 0.8604 & 0.8685 \\
\hline
\end{tabular}}
\caption{Empirical one-step contraction ratios for ergodic kernels using the fixed-kernel semi-norm $\lVert\cdot\rVert_{\kp}$.}
\end{table}

\subsection{Evaluations of Theorem \ref{thm:robust_seminorm-contraction}}

Lemma \ref{lem:cpnormcontraction} is the key step for Theorem \ref{thm:robust_seminorm-contraction}, as Lemma \ref{lem:cpnormcontraction} proves a uniform one-step contraction across all $P$ in the uncertainty set. We therefore perform numerical evaluations on Lemma \ref{lem:cpnormcontraction} to demonstrate the one-step contraction property for Theorem \ref{thm:robust_seminorm-contraction} under contamination, total variation (TV), and Wasserstein-1 uncertainty. We select the same $P_1$, $P_2$, $P_3$, and $P_4$ in Appendix \ref{nonrobustvalidation} as four examples of the nominal model.

To numerically approximate $\lVert\cdot\rVert_{\mathcal P}$ defined in \eqref{eq:robustseminormconstruction}, we approximate $\lVert\cdot\rVert_{\mathcal P}$ by (i) discretizing the uncertainty set and (ii) using a finite product to approximate the extremal norm. First, we sample a family $\{P^{(i)}\}_{i=1}^m\subset\mathcal P$ of size $m$ and form
\[
Q_i \;:=\; P^{(i)}-\mathbf e\,d_{P^{(i)}}^\top, \qquad
\hat r \;=\; \max_i \rho(Q_i).
\]
We set $\alpha=\min\{0.99,(1+\hat r)/2\}$ and choose $\epsilon\in(0,1-\alpha)$. To approximate the extremal norm, we build a library of scaled products of the $Q_i$’s up to maximum length $K$: for each $k=0,1,\dots,K$ we draw products $M_{k,j}=Q_{i_k}\cdots Q_{i_1}$; the number of such draws at each $k$ is the “products per length” (denoted \texttt{samples\_per\_k} in the tables). This defines the surrogate
\begin{equation}
\lVert z\rVert_{\mathrm{ext}}^{(K)} = \max_{0\le k\le K,j}\ \alpha^{-k}\lVert M_{k,j}z\rVert_2.
\end{equation}
We then set
\begin{equation}
\lVert x\rVert_{\mathcal P}^{(K)}
=
\max_{i=1,\dots,m} \lVert Q_i x\rVert_{\mathrm{ext}}^{(K)}
+
\epsilon \min_{c\in\mathcal C}\lVert x-c\,\mathbf e\rVert_{\mathrm{ext}}^{(K)}.
\end{equation}

We generate $50$ random unit vectors $x$ for each sampled uncertainty matrix in $\{P^{(i)}\}_{i=1}^m\subset\mathcal P$, and compute the ratios $\displaystyle \frac{\lVert P^{(i)} x\rVert_{\mathcal P}}{\lVert x\rVert_{\mathcal P}}$. The empirical results of the uncertainty sets studied in our settings are summarized below.

\begin{table}[h]
\centering
\footnotesize
\setlength{\tabcolsep}{4pt}
\resizebox{\textwidth}{!}{%
\begin{tabular}{lrrrrrrrrrrrrrrr}
\toprule
\textbf{nominal $\tilde{\kp}$} & $n$ & $\delta$ & $m$ & $K$ & \texttt{samples\_per\_k} & \textbf{max span ratio} & $\hat r$ & $\alpha$ & $\epsilon$ & $\gamma$ & $\mathrm{ratio}_{\min}$ & $\mathrm{ratio}_{\mathrm{median}}$ & $\mathrm{ratio}_{\mathrm{p90}}$ & $\mathrm{ratio}_{\max}$ \\
\midrule
P1 & 5 & 0.15 & 30 & 3 & 25  & 1 & 0.8138 & 0.9069 & 0.0233 & 0.9302 & 0.2210 & 0.6396 & 0.8057 & 0.8134 \\
P2 & 6 & 0.15 & 30 & 3 & 25  & 1 & 0.8807 & 0.9403 & 0.0149 & 0.9553 & 0.2957 & 0.6581 & 0.8630 & 0.8785 \\
P3 & 7 & 0.15 & 30 & 3 & 25 & 1 & 0.9067 & 0.9534 & 0.0117 & 0.9650 & 0.3217 & 0.6683 & 0.8700 & 0.8901 \\
P4 & 8 & 0.15 & 30 & 3 & 25  & 1 & 0.8812 & 0.9406 & 0.0148 & 0.9555 & 0.3739 & 0.5964 & 0.8207 & 0.8624 \\
\bottomrule
\end{tabular}}
\caption{Empirical one-step contraction ratios under contamination uncertainty.}
\end{table}

\begin{table}[h]
\centering
\footnotesize
\setlength{\tabcolsep}{4pt}
\resizebox{\textwidth}{!}{%
\begin{tabular}{lrrrrrrrrrrrrrrr}
\toprule
\textbf{nominal $\tilde{\kp}$} & $n$ & $\delta$ & $m$ & $K$ & \texttt{samples\_per\_k}  & \textbf{max span ratio} & $\hat r$ & $\alpha$ & $\epsilon$ & $\gamma$ & $\mathrm{ratio}_{\min}$ & $\mathrm{ratio}_{\mathrm{median}}$ & $\mathrm{ratio}_{\mathrm{p90}}$ & $\mathrm{ratio}_{\max}$ \\
\midrule
P1 & 5 & 0.15 & 30 & 3 & 25 & 1 & 0.8280 & 0.9140 & 0.0215 & 0.9355 & 0.3239 & 0.7609 & 0.8239 & 0.8280 \\
P2 & 6 & 0.15 & 30 & 3 & 25  & 1 & 0.9013 & 0.9507 & 0.0123 & 0.9630 & 0.4021 & 0.7904 & 0.8862 & 0.9006 \\
P3 & 7 & 0.15 & 30 & 3 & 25  & 1 & 0.9175 & 0.9588 & 0.0103 & 0.9691 & 0.4457 & 0.7918 & 0.8962 & 0.9162 \\
P4 & 8 & 0.15 & 30 & 3 & 25  & 1 & 0.8805 & 0.9403 & 0.0149 & 0.9552 & 0.4497 & 0.7503 & 0.8449 & 0.8739 \\
\bottomrule
\end{tabular}}
\caption{Empirical one-step contraction ratios under total variation (TV) uncertainty.}
\end{table}

\begin{table}[h]
\centering
\footnotesize
\setlength{\tabcolsep}{4pt}
\resizebox{\textwidth}{!}{%
\begin{tabular}{lrrrrrrrrrrrrrrr}
\toprule
\textbf{nominal $\tilde{\kp}$} & $n$ & $\delta$ & $m$ & $K$ & \texttt{samples\_per\_k}  & \textbf{max span ratio} & $\hat r$ & $\alpha$ & $\epsilon$ & $\gamma$ & $\mathrm{ratio}_{\min}$ & $\mathrm{ratio}_{\mathrm{median}}$ & $\mathrm{ratio}_{\mathrm{p90}}$ & $\mathrm{ratio}_{\max}$ \\
\midrule
P1 & 5 & 0.15 & 30 & 3 & 25  & 1 & 0.8184 & 0.9092 & 0.0227 & 0.9319 & 0.3698 & 0.7569 & 0.8141 & 0.8188 \\
P2 & 6 & 0.15 & 30 & 3 & 25  & 1 & 0.8900 & 0.9450 & 0.0138 & 0.9587 & 0.4257 & 0.7889 & 0.8774 & 0.8894 \\
P3 & 7 & 0.15 & 30 & 3 & 25 & 1 & 0.9110 & 0.9555 & 0.0111 & 0.9666 & 0.4262 & 0.7818 & 0.8827 & 0.9080 \\
P4 & 8 & 0.15 & 30 & 3 & 25  & 1 & 0.8758 & 0.9379 & 0.0155 & 0.9534 & 0.4413 & 0.7224 & 0.8518 & 0.8689 \\
\bottomrule
\end{tabular}}
\caption{Empirical one-step contraction ratios under Wasserstein-1 uncertainty.}
\end{table}

\subsection{Interpretations}

Note that for the above tables, \textbf{max span ratio} denotes the largest one‑step span contraction coefficient over the sampled families. This value equals to $1$ for all the settings, meaning no strict one‑step contraction in the span. In contrast, the quantities $\mathrm{ratio}_{\min}$, $\mathrm{ratio}_{median}$, $\mathrm{ratio}_{\mathrm{p90}}$, and $\mathrm{ratio}_{\max}$ summarize the empirical one‑step ratios
$
\frac{||Px||_{\mathcal P}}{||x||_{\mathcal P}}
$ (in the robust case) and
$
\frac{||Px||_{\kp}}{||x||_{\kp}}
$ (in the non-robust case)
computed under the constructed semi-norms over all sampled kernels $P$ and random unit directions $x$. They report, respectively, the minimum, median, 90th percentile, and maximum observed value across those tests. In every table we have $\mathrm{ratio}_{\max}<1$ , so we observe empirical one‑step contraction under our semi-norms even when span does not contract. Moreover, $\mathrm{ratio}_{\max} \leq \gamma$ (robust case) or $\mathrm{ratio}_{\max} \leq \beta$ (non-robust case), which is consistent with the corresponding theoretical contraction factor guaranteed by our constructions.

\section{Some Auxiliary Lemmas for the Proofs}

\begin{lemma}[Theorem IV in \cite{berger1992bounded}]\label{lem:bergerlemmaIV}
    Let $\mathcal{Q}$ be a bounded set of square matrix such that $\rho(Q) < \infty$ for all $Q\in \mathcal{Q}$ where $\rho(\cdot)$ denotes the spectral radius. Then the joint spectral radius of $\mathcal{Q}$ can be defined as 
    \begin{equation}
        \hat{\rho}(\mathcal{Q}) \coloneqq \lim_{k\rightarrow \infty} \sup_{Q_i \in \mathcal{Q}}\rho(Q_k \ldots Q_1)^{\frac{1}{k}} = \lim_{k\rightarrow \infty} \sup_{Q_i \in \mathcal{Q}}\|Q_k \ldots Q_1\|^{\frac{1}{k}}
    \end{equation}
    where $\|\cdot\|$ is an arbitrary norm.
\end{lemma}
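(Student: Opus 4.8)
The plan is to prove that the two limits coincide with their common value, the joint spectral radius $\hat\rho(\mathcal{Q})$, by establishing two matching inequalities and then reducing the arbitrary-norm form to a fixed-norm form. Throughout, write $\mathcal{Q}^k$ for the set of all length-$k$ products $Q_k\cdots Q_1$ with each $Q_i\in\mathcal{Q}$, and set $\hat\rho_k=\sup_{A\in\mathcal{Q}^k}\|A\|^{1/k}$ and $\bar\rho_k=\sup_{A\in\mathcal{Q}^k}\rho(A)^{1/k}$. First I would record the easy facts: the sequence $\log\sup_{A\in\mathcal{Q}^k}\|A\|$ is subadditive (by submultiplicativity of the operator norm, since a length-$(k+\ell)$ product splits into a length-$k$ times a length-$\ell$ product), so by Fekete's lemma $\hat\rho(\mathcal{Q})=\lim_k\hat\rho_k=\inf_k\hat\rho_k$ exists. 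Since $\rho(A)\le\|A\|$ for every $A$, this gives $\bar\rho_k\le\hat\rho_k$ and hence $\limsup_k\bar\rho_k\le\hat\rho(\mathcal{Q})$, which is the trivial half of the claimed equality.

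The substance is the reverse inequality $\hat\rho(\mathcal{Q})\le\limsup_k\bar\rho_k$. I would normalize by rescaling every matrix by $1/\hat\rho(\mathcal{Q})$ so that $\hat\rho=1$ (the case $\hat\rho=0$ being immediate, as then $\bar\rho\le\hat\rho=0$), and replace $\mathcal{Q}$ by its closure, which changes neither quantity because norms and spectral radii are continuous and both suprema are unaffected by passing to the closure. The proof then proceeds by induction on the dimension $n$, via the dichotomy of whether $\mathcal{Q}$ is reducible or irreducible. The base case $n=1$ is trivial since for scalars $\|A\|=\rho(A)$. In the reducible case there is a common nontrivial invariant subspace, so the whole family is simultaneously block-triangularizable; by a standard block-triangular reduction both $\hat\rho$ and the generalized spectral radius of the full family equal the maximum of the corresponding quantities of the diagonal blocks, and those blocks have strictly smaller dimension, so the induction hypothesis applies.

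The irreducible case is where the real work lies and is the main obstacle. Here I would use that an irreducible bounded family with $\hat\rho=1$ admits an extremal (Barabanov-type) norm $\|\cdot\|_{*}$ under which $\sup_{Q\in\mathcal{Q}}\|Q\|_{*}=1$, with the supremum attained directionally; equivalently, the full orbit $\bigcup_k\mathcal{Q}^k$ is bounded. This boundedness already forces $\rho(A)\le1$ for every product, since $\rho(A)=\lim_m\|A^m\|_{*}^{1/m}$ and the powers $A^m$ stay uniformly bounded. For the matching lower bound I would extract, from the extremal norm, a unit vector $x$ and an infinite product with $\|Q_k\cdots Q_1 x\|_{*}=\|x\|_{*}$ for all $k$; since the products live in a compact set, a compactness-plus-pigeonhole argument on the induced states yields a finite product that is, up to a convergent limit, periodic and whose spectral radius equals $1$, giving $\limsup_k\bar\rho_k\ge1=\hat\rho$. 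Combining the two halves yields $\bar\rho=\hat\rho$. The principal difficulties are precisely establishing boundedness of the orbit (i.e.\ existence of the extremal norm, which genuinely requires irreducibility) and converting that qualitative boundedness into the quantitative lower bound by producing a concrete product whose spectral radius realizes the joint spectral radius.

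Finally I would remove the dependence on the particular norm. For any two norms on $\mathbb{R}^{n\times n}$ there are constants $0<c\le C$ with $c\|A\|\le\|A\|'\le C\|A\|$ uniformly over all matrices, so $\|A\|'^{1/k}$ and $\|A\|^{1/k}$ differ only by factors $c^{1/k},C^{1/k}\to1$; taking $k\to\infty$ shows that $\lim_k\sup_{A\in\mathcal{Q}^k}\|A\|^{1/k}$ is independent of the chosen norm, which is exactly the arbitrary-norm form asserted in the lemma. This completes the reduction and recovers the statement of Theorem IV of \cite{berger1992bounded}.
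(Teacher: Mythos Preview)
The paper does not prove this lemma; it is stated in the appendix as an auxiliary result cited directly from Berger and Wang (1992), with no accompanying argument. There is therefore no paper proof to compare against.

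Your sketch is a sound outline of the standard Berger--Wang argument: the easy inequality via $\rho(A)\le\|A\|$ and Fekete's lemma; the hard inequality by induction on the ambient dimension, splitting into the reducible case (simultaneous block-triangularization reduces both quantities to the maximum over strictly smaller diagonal blocks) and the irreducible case (existence of an extremal/Barabanov norm gives boundedness of the product semigroup, from which a compactness argument produces products with spectral radii approaching $\hat\rho$); and the norm-independence via equivalence of norms in finite dimensions. One wording caution: your phrase ``a concrete product whose spectral radius realizes the joint spectral radius'' reads as the finiteness conjecture, which is false in general. What the irreducible-case argument actually delivers---and what you correctly state just before---is only $\limsup_k\bar\rho_k\ge 1$ after normalization, i.e.\ a \emph{sequence} of products whose spectral radii approach $\hat\rho$, which is all that is needed.
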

\begin{lemma}[Lemma 6 in \cite{zhang2021finite}] \label{lem:zhanglemma6}
    Under the setup and notation in Appendix \ref{setups}, if assuming the noise has bounded variance of \(\mathbb{E}[\,\|w^t\|_{\mathcal{N},\overline{E}}^2 | \mathcal{F}^t] \le A + B\|x^t - x^*\|_{\mathcal{N},\overline{E}}^2\), we have
    \begin{equation}
        \E\left[\|x^{t+1}-x^t\|^2_{s,\overline{E}} | \mathcal{F}^t\right] \leq (16+4B)c_u\rho_2\eta_t^2M_{\overline{E}}(x^t-x^*) + 2A\rho_2\eta_t^2.
    \end{equation}
\end{lemma}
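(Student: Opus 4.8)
The plan is to control the stochastic increment $x^{t+1}-x^t = \eta_t[\widehat{H}(x^t)-x^t] = \eta_t[H(x^t)-x^t+w^t]$ directly, passing from the smoothness semi-norm $\|\cdot\|_{s,\overline{E}}$ to the contraction semi-norm $\|\cdot\|_{\mathcal{N},\overline{E}}$ via the equivalence \eqref{eq:normequivalence}, and then from there to the semi-Lyapunov function $M_{\overline{E}}$ via \eqref{eq:M2span}. The first step is simply
\begin{equation}
\|x^{t+1}-x^t\|_{s,\overline{E}}^2 = \eta_t^2\,\|H(x^t)-x^t+w^t\|_{s,\overline{E}}^2 \;\le\; \rho_2^2\,\eta_t^2\,\|H(x^t)-x^t+w^t\|_{\mathcal{N},\overline{E}}^2,
\end{equation}
followed by the triangle inequality together with $(a+b)^2\le 2a^2+2b^2$, carried out entirely at the level of the quotient semi-norm:
\begin{equation}
\|H(x^t)-x^t+w^t\|_{\mathcal{N},\overline{E}}^2 \;\le\; 2\,\|H(x^t)-x^t\|_{\mathcal{N},\overline{E}}^2 + 2\,\|w^t\|_{\mathcal{N},\overline{E}}^2.
\end{equation}

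The one genuinely structural step is bounding the deterministic ``drift'' term $\|H(x^t)-x^t\|_{\mathcal{N},\overline{E}}$. Here I would invoke the defining property of the fixed equivalence class, $H(x^*)-x^*\in\overline{E}$, which, since $\overline{E}$ is exactly the kernel of $\|\cdot\|_{\mathcal{N},\overline{E}}$, gives $\|H(x^*)-x^*\|_{\mathcal{N},\overline{E}}=0$. Decomposing $H(x^t)-x^t = \bigl(H(x^t)-H(x^*)\bigr)+\bigl(H(x^*)-x^*\bigr)+\bigl(x^*-x^t\bigr)$, the triangle inequality and the contraction \eqref{eq:Hcontraction} yield $\|H(x^t)-x^t\|_{\mathcal{N},\overline{E}} \le (1+\gamma)\|x^t-x^*\|_{\mathcal{N},\overline{E}} \le 2\|x^t-x^*\|_{\mathcal{N},\overline{E}}$, hence $\|H(x^t)-x^t\|_{\mathcal{N},\overline{E}}^2 \le 4\|x^t-x^*\|_{\mathcal{N},\overline{E}}^2$. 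It is essential that every manipulation be phrased on the equivalence class rather than on representatives, since $x^*$ is determined only up to $\overline{E}$ and the usual fixed-point identity $H(x^*)=x^*$ is unavailable.

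Taking the conditional expectation given $\mathcal{F}^t$ and applying the assumed variance bound $\E[\|w^t\|_{\mathcal{N},\overline{E}}^2\mid\mathcal{F}^t]\le A+B\|x^t-x^*\|_{\mathcal{N},\overline{E}}^2$, the two displays above combine into
\begin{equation}
\E\bigl[\|H(x^t)-x^t+w^t\|_{\mathcal{N},\overline{E}}^2 \mid \mathcal{F}^t\bigr] \;\le\; (8+2B)\,\|x^t-x^*\|_{\mathcal{N},\overline{E}}^2 + 2A.
\end{equation}
Substituting the left inequality of \eqref{eq:M2span}, namely $\|x^t-x^*\|_{\mathcal{N},\overline{E}}^2\le 2c_u M_{\overline{E}}(x^t-x^*)$, and multiplying through by $\rho_2^2\eta_t^2$ gives $\E[\|x^{t+1}-x^t\|_{s,\overline{E}}^2\mid\mathcal{F}^t]\le (16+4B)c_u\rho_2^2\eta_t^2 M_{\overline{E}}(x^t-x^*)+2A\rho_2^2\eta_t^2$, which is the stated bound up to the convention chosen for the constant $\rho_2$ in \eqref{eq:normequivalence} (one is free to take it so that the square is absorbed). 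I do not anticipate a real obstacle: this is essentially the computation of Lemma~6 of \cite{zhang2021finite}, and the only care required is the bookkeeping noted above, keeping all the (semi-)norms, their duals, and $M_{\overline{E}}$ consistently on the quotient by $\overline{E}$ and using $H(x^*)-x^*\in\overline{E}$ in place of a genuine fixed point.
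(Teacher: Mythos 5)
Your derivation is correct and is exactly the standard computation behind this bound; note that the paper itself does not prove this lemma but imports it verbatim as Lemma~6 of \cite{zhang2021finite}, so your argument supplies the missing details. Every step checks out: the update identity $x^{t+1}-x^t=\eta_t[H(x^t)-x^t+w^t]$, the passage $\|\cdot\|_{s,\overline{E}}\le\rho_2\|\cdot\|_{\mathcal{N},\overline{E}}$, the $(a+b)^2\le 2a^2+2b^2$ split, the drift bound $\|H(x^t)-x^t\|_{\mathcal{N},\overline{E}}\le(1+\gamma)\|x^t-x^*\|_{\mathcal{N},\overline{E}}\le 2\|x^t-x^*\|_{\mathcal{N},\overline{E}}$ using $H(x^*)-x^*\in\overline{E}=\ker\|\cdot\|_{\mathcal{N},\overline{E}}$, the noise-variance assumption, and $\|x\|_{\mathcal{N},\overline{E}}^2\le 2c_uM_{\overline{E}}(x)$ from \eqref{eq:M2span}. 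The one discrepancy you correctly flag is real: with the equivalence constants as defined in \eqref{eq:normequivalence} the derivation yields $\rho_2^2$ where the lemma as stated has $\rho_2$; this is a convention mismatch (the cited source effectively defines its equivalence constant at the level of squared norms), not a gap in your argument, and it has no effect on the order of the downstream complexity bounds.
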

\begin{lemma}[Theorem D.1 in \cite{wang2023model}] \label{lem:wangthmD1}
    The estimator $\hat{\sigma}_{\cp^a_s}(V)$ obtained by \eqref{eq:contaminationquery} for contamination uncertainty sets is unbiased and has bounded variance as follows:
    \begin{equation}
        \E\left[\hat{\sigma}_{\cp^a_s}(V)\right] = {\sigma}_{\cp^a_s}(V), \quad \text{and} \quad \mathrm{Var}(\hat{\sigma}_{\cp^a_s}(V)) \leq  \|V\|^2
    \end{equation}
\end{lemma}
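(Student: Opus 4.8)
The plan is to verify both claims by a direct computation, using only the closed form of the support function in \eqref{eq:contamination} and the fact that the single sample $s'$ is drawn from the nominal kernel $\tilde{\kp}^a_s$. Recall that for the $\delta$-contamination set the support function is linear in $\tilde{\kp}^a_s$, namely $\sigma_{\cp^a_s}(V)=(1-\delta)(\tilde{\kp}^a_s)^\top V+\delta\min_x V(x)$, and the estimator in \eqref{eq:contaminationquery} is $\hat{\sigma}_{\cp^a_s}(V)=(1-\delta)V(s')+\delta\min_x V(x)$ with $s'\sim\tilde{\kp}^a_s(\cdot)$.

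First I would establish unbiasedness. Since the only source of randomness in $\hat{\sigma}_{\cp^a_s}(V)$ is $s'$, linearity of expectation gives
\begin{equation}
\E\bigl[\hat{\sigma}_{\cp^a_s}(V)\bigr]=(1-\delta)\,\E[V(s')]+\delta\min_x V(x)=(1-\delta)(\tilde{\kp}^a_s)^\top V+\delta\min_x V(x),
\end{equation}
which is exactly $\sigma_{\cp^a_s}(V)$ by \eqref{eq:contamination}.

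Next I would bound the variance. The term $\delta\min_x V(x)$ is deterministic and hence variance-free, so
\begin{equation}
\mathrm{Var}\bigl(\hat{\sigma}_{\cp^a_s}(V)\bigr)=(1-\delta)^2\,\mathrm{Var}(V(s'))\le(1-\delta)^2\,\E[V(s')^2]=(1-\delta)^2\sum_x\tilde{\kp}^a_s(x)\,V(x)^2.
\end{equation}
Bounding the convex combination $\sum_x\tilde{\kp}^a_s(x)V(x)^2\le\max_x V(x)^2\le\|V\|^2$ and using $0<\delta<1\Rightarrow(1-\delta)^2\le1$ yields $\mathrm{Var}(\hat{\sigma}_{\cp^a_s}(V))\le\|V\|^2$, as claimed.

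There is essentially no obstacle here: the statement is a one-line consequence of the linearity of the contamination support function in the nominal kernel, which lets a single next-state draw serve as an unbiased plug-in. The only points that need care are (i) recognizing that the $\min_x V(x)$ offset contributes nothing to the variance, and (ii) retaining the $(1-\delta)^2\le1$ factor so that the bound can be stated cleanly as $\|V\|^2$ irrespective of whether $\|\cdot\|$ denotes the $\ell_\infty$ or $\ell_2$ norm (both dominate $\max_x V(x)^2$ under a probability average over $x$). This reproduces the result cited as Theorem D.1 in \cite{wang2023model}.
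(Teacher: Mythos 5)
Your proof is correct: unbiasedness follows from linearity of the contamination support function in the nominal kernel, and the variance bound follows since the $\delta\min_x V(x)$ term is deterministic and $(1-\delta)^2\,\E[V(s')^2]\le\max_x V(x)^2\le\|V\|^2$. The paper does not prove this lemma itself but imports it as Theorem D.1 of the cited reference, and your direct computation is exactly the standard argument behind that result, so there is nothing to add.
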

\begin{lemma}[Ergodic case of Lemma 9 in \cite{wang2022near}] \label{lem:wanglemma9}
    For any average-reward MDP with stationary policy $\pi$ and the mixing time defined as 
    \begin{equation} \label{eq:tmix}
    \tau_{\mathrm{mix}}\coloneqq\arg\min_{t \geq 1} \left\{ \max_{\mu_0} \left\| (\mu_0 P_{\pi}^{t})^{\top} - \nu^{\top} \right\|_1 \leq \frac{1}{2} \right\}
    \end{equation}
    where $P_\pi$ is the finite state Markov chain induced by $\pi$, $\mu_0$ is any initial probability distribution on $\mcs$ and $\nu$ is its invariant distribution. If $P_\pi$ is irreducible and aperiodic, then $ \tau_{\mathrm{mix}} < +\infty$ and for the value function $V$, we have
    \begin{equation}
        \|V\|_{\mathrm{sp}} \leq 4\tau_{\mathrm{mix}}
    \end{equation}
\end{lemma}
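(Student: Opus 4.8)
The plan is to reduce the statement to a standard mixing-time estimate for the differential value function. Recall that in the way it is used, $V$ is the relative value function of $\pi$ under the Markov chain $P_\pi$, i.e.\ \eqref{eq:relativevaluefunction} evaluated with the relevant kernel. Expanding the expectation and using the standard normalization $r(s,a)\in[0,1]$ (so $\|r^\pi\|_\infty\le 1$) gives the series representation
\[
V(s) = \sum_{t\ge 0}\bigl[(P_\pi^{\,t} r^\pi)(s) - g^\pi\bigr],
\qquad r^\pi(s):=\sum_a \pi(a\mid s)r(s,a),\quad g^\pi = \nu^\top r^\pi,
\]
where $\nu$ is the stationary distribution. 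The assertion $\tau_{\mathrm{mix}}<\infty$ is immediate: an irreducible aperiodic finite chain is primitive, so by Perron--Frobenius $P_\pi^{\,t}\to \e\nu^\top$ geometrically, hence $\max_{\mu_0}\|\mu_0^\top P_\pi^{\,t} - \nu^\top\|_1\to 0$ and the defining set in \eqref{eq:tmix} is eventually satisfied (this also guarantees the series above converges).

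For the span bound, the key step is to control $V(s)-V(s')$ for an arbitrary pair $s,s'$. The constant $g^\pi$ cancels in the difference, so
\[
\bigl|V(s)-V(s')\bigr|
= \Bigl|\sum_{t\ge 0}\bigl[(P_\pi^{\,t}r^\pi)(s) - (P_\pi^{\,t}r^\pi)(s')\bigr]\Bigr|
\le \sum_{t\ge 0}\bigl\|P_\pi^{\,t}(s,\cdot)-P_\pi^{\,t}(s',\cdot)\bigr\|_1\,\|r^\pi\|_\infty .
\]
Writing $\bar d(t):=\max_{x,y}\|P_\pi^{\,t}(x,\cdot)-P_\pi^{\,t}(y,\cdot)\|_{\mathrm{TV}}$ for the worst-case total-variation coefficient, one has $\|P_\pi^{\,t}(s,\cdot)-P_\pi^{\,t}(s',\cdot)\|_1 = 2\|P_\pi^{\,t}(s,\cdot)-P_\pi^{\,t}(s',\cdot)\|_{\mathrm{TV}}\le 2\bar d(t)$. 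I would then invoke two standard facts: (i) $\bar d$ is nonincreasing and submultiplicative, $\bar d(t_1+t_2)\le \bar d(t_1)\bar d(t_2)$, via a coupling argument; and (ii) the $\ell_1$ normalization in \eqref{eq:tmix} forces $\max_x\|P_\pi^{\,\tau_{\mathrm{mix}}}(x,\cdot)-\nu\|_{\mathrm{TV}}\le \tfrac14$, whence $\bar d(\tau_{\mathrm{mix}})\le \tfrac12$ by the triangle inequality through $\nu$. Combining (i)--(ii) yields $\bar d(t)\le \bar d\bigl(\tau_{\mathrm{mix}}\lfloor t/\tau_{\mathrm{mix}}\rfloor\bigr)\le 2^{-\lfloor t/\tau_{\mathrm{mix}}\rfloor}$.

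Substituting back, using $\|r^\pi\|_\infty\le 1$, and grouping the summation index into consecutive blocks of length $\tau_{\mathrm{mix}}$,
\[
\bigl|V(s)-V(s')\bigr|\le \sum_{t\ge 0}2\cdot 2^{-\lfloor t/\tau_{\mathrm{mix}}\rfloor}
= 2\,\tau_{\mathrm{mix}}\sum_{k\ge 0}2^{-k}
= 4\,\tau_{\mathrm{mix}},
\]
and taking the maximum over $s,s'$ gives $\|V\|_{\mathrm{sp}}\le 4\tau_{\mathrm{mix}}$. The one genuinely delicate point is pinning down the constant: one must track the factor $2$ converting $\ell_1$ to total variation and the factor $2$ from $\bar d(\tau_{\mathrm{mix}})\le 2\max_x\|P_\pi^{\,\tau_{\mathrm{mix}}}(x,\cdot)-\nu\|_{\mathrm{TV}}$, since it is precisely these that make the block-geometric sum collapse to exactly $4\tau_{\mathrm{mix}}$ rather than to a larger multiple of $\tau_{\mathrm{mix}}$; everything else is routine. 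Equivalently, this is exactly the ergodic case of Lemma~9 in \cite{wang2022near} and may be cited directly.
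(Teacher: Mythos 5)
Your proposal is correct. The paper does not actually prove this lemma — it is imported verbatim as Lemma~9 of \cite{wang2022near} — so your argument amounts to a correct self-contained reconstruction of the cited result: the series representation of the relative value function, the reduction of $V(s)-V(s')$ to $\bigl\|P_\pi^{\,t}(s,\cdot)-P_\pi^{\,t}(s',\cdot)\bigr\|_1$, submultiplicativity of the Dobrushin coefficient together with $\bar d(\tau_{\mathrm{mix}})\le\tfrac12$, and the block-geometric summation yielding exactly $4\tau_{\mathrm{mix}}$ are all sound and match the standard derivation. The only point worth flagging is that the constant $4$ relies on the implicit normalization $\|r^\pi\|_\infty\le 1$, which the lemma (and the paper) assume without stating; your proof makes that dependence explicit, which is if anything an improvement.
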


\end{document}